\title{An Efficient Algorithm for Deep Stochastic Contextual Bandits}
\author {

        Tan Zhu,
        Guannan Liang, 
        Chunjiang Zhu, 
        Haining Li, 
        Jinbo Bi \\
}
\begin{document}
\maketitle
\begin{abstract}
In stochastic contextual bandit (SCB) problems, an agent selects an action based on certain observed context to maximize the cumulative reward over iterations. Recently there have been a few studies using a deep neural network (DNN) to predict the expected reward for an action, and the DNN is trained by a stochastic gradient based method. However, convergence analysis has been greatly ignored to examine whether and where these methods converge. In this work, we formulate the SCB that uses a DNN reward function as a non-convex stochastic optimization problem, and design a stage-wise stochastic gradient descent algorithm to optimize the problem and determine the action policy. We prove that with high probability, the action sequence chosen by this algorithm converges to a greedy action policy respecting a local optimal reward function. Extensive experiments have been performed to demonstrate the effectiveness and efficiency of the proposed algorithm on multiple real-world datasets.
\end{abstract}
\newtheorem{definition}{Definition}
\newtheorem{assumption}{Assumption}
\newtheorem{theorem}{Theorem}
\newtheorem{fact}{Fact}
\newtheorem{lemma}{Lemma}
\newtheorem{remark}{Remark}
\section*{Introduction}
\noindent 
The multi-armed bandit is an interactive machine learning problem and has received tremendous research interest for decades, e.g., \cite{auer2002finite-time,filippi2010parametric,gopalan2014thompson}. The \emph{stochastic contextual bandit (SCB) problems} contain a sequence of rounds $t$, where the agent observes a context $d_t\in\mathbb{R}^{n_{d}}$ (for $n_d\in I$) which consists of features about all actions and can be regarded as a sample from an unknown distribution that is fixed when shifted in time, e.g., user information in news recommendation system \cite{li2010contextual}. 
Then the agent selects an action $a_t$ from $K$ choices (e.g., $K$-arms) to get the reward $r_{d_t,a_t}\in[0,1]$, which is i.i.d. drawn from an unknown but fixed distribution with the expectation modeled by the \emph{reward function} defined below.
\begin{definition}[Reward Function]
\label{rfd}
There exists a real-valued vector $x^{*}\in \mathbb{R}^{n_{x}}$ and a reward function  $f(d;x^{*})$ s.t. for all actions $a_t\in\{1,2,...,K\}$ and context $d_t\sim\mathcal{D}$,
\begin{equation}
\label{xstar_rf}
\textstyle \mathbb{E}_{r_{d_t,a_t}}[r_{d_t,a_t}\,|\,d_t,a_t]=f_{a_t}(d_t;x^{*}),
\end{equation}
where $f_{a_t}(d_t;x^{*})\in[0,1]$ is the $a_t^{th}$ output of $f(d_t;x^{*}).$
\end{definition} Note that the context $d$ can be continuous but there are discrete and a finite number ($K$) of actions.
The agent's goal is to minimize the cumulative regret
\begin{equation}
\label{cr}
\begin{aligned}
\textstyle \sum_{t=1}^T\mathbb{E}_{r_{d,a}}[r_{d_t,a^*_{d_t}}|d_t,a^*_{d_t}]-r_{d_t,a_t},
\end{aligned}
\end{equation}
where $T$ is the total number of rounds, and $a_{d_t}^*$ is the optimal action that maximizes the expected reward $ \mathbb{E}_{r_{d,a}}[r_{d,a}|d,a]$. 

To minimize the cumulative regret, the agent needs to balance between selecting potentially optimal action to learn its reward distribution (Exploration) and choosing the action with the highest estimated expected reward (Exploitation), a.k.a. the $Exploration\ vs.\ Exploitation$.
Earlier studies on SCB problems focus on SCB with a linear reward function, which has been well understood both theoretically and empirically \cite{auer2002using,dani2008stochastic,foster2018practical,liu2018customized}.
However, the assumption of a linear reward function may not be satisfied for many real-life problems. Later, many research efforts have been devoted to SCB with a more relaxed assumption on the reward function. In particular, SCB with a deep neural network (DNN) modeled reward function or called \emph{deep SCB} have attracted great interest, because of the DNN's powerful expressiveness for complicated real-world problems \cite{allesiardo2014neural,collier2018deep,zhou2020neural,kveton2020randomized}.

For deep SCB, one may naturally adopt \emph{adaptive action policies}, which have been successfully employed in SCB with a simple reward function, e.g., a linear or generalized linear reward function. In such policies, the exploration schedule can be dynamically updated based on the observed contexts, selected actions and received rewards in previous rounds. Then the computational complexity generally depends on 
the dimensionality of the learnable parameters and the contexts, which, however, are often very high for DNNs (because of its power in modeling high-dimensional data). Therefore, it is still not clear how to use adaptive action policies for DNN based on SCB in an efficient way to enable its scalability to large DNNs.
For example, recently \cite{zhou2020neural} designed a powerful adaptive action policy for deep SCB, but the complicated matrix calculations to update the action policy still make it applicable to small DNNs.


Moreover, although almost all deep SCB algorithms train the DNN-realized reward function with stochastic gradient based methods, rigorous analysis has seldom been given to examine whether and where the DNN and the action policy converge.
That is, there is no guarantee that these deep SCB algorithms can learn a stable reward function or effective action policy.  
Recently \cite{sokolov2016stochastic} showed that parameters in the DNN converge to a stationary point, but it remains open to examine the quality of the converged reward function and convergence of the action policy.

{\noindent \bf Contributions.}
We propose a stage-wise stochastic gradient descent (SGD) for SCB (\emph{SSGD-SCB}), which exploits adaptive action policies in an efficient manner and enjoys \emph{provable} convergence on the action policy $\pi$ and reward function $f$.
Our main results are summarized as follows.

\begin{itemize}[leftmargin=-0.03cm]
\item
We design an efficient adaptive action policy for deep SCB by dividing contexts into sub-regions, and counting and exploiting how many times an action has been chosen within each sub-region.
We then devise a stage-wise SGD method to train the DNN.
In this way, the adaptive action policy in SSGD-SCB can be updated incrementally rather than from full history records,
and thus does not require complicated matrix calculations, such as those in \cite{zhou2020neural,kveton2020randomized}, so it can be scalable to large DNNs.

\item
For ease of theoretical analysis, we relax SCB to a non-convex stochastic optimization problem, in which the adaptive action policy makes the variance of the stochastic gradients grow along the rounds. It thus motivates us to stage-wisely reduce the learning rate to account for the growing variance of stochastic gradients. We prove that SSGD-SCB can find a local minimizer under mild regularity conditions, corresponding to a local optimal reward function. The action policy converges to a greedy policy observing the local optimal reward function with high probability.
If a stronger condition is assumed for the objective function, we prove that the action policy converges to one which is within an $\epsilon$-neighborhood of the global optimal action policy.
\item
We have performed extensive experiments to confirm that SSGD-SCB achieves the best cumulative regret compared with state-of-the-art algorithms, and its fitted reward function has the best generalization performance.
\end{itemize}
{\noindent \bf Related Work.}
The agent in SCB can establish and operate on different forms of reward functions,
including linear functions and nonlinear functions represented by DNNs.
The upper confidence bound (UCB) algorithm \cite{auer2002finite-time} was first applied to the contextual bandit problem with a linear reward function in LINREL\cite{auer2002using}. Since then, many efficient ordinary linear algorithms have been proposed \cite{dani2008stochastic,chu2011contextual,agrawal2013thompson,agarwal2014taming,foster2018practical}, and the linear reward functions are also extended to \emph{generalized linear models} (GLMs)  \cite{li2017provably,jun2017scalable,abeille2017linear,kveton2019perturbed}. Some of these algorithms, called \emph{agnostic algorithms} \cite{agarwal2014taming,foster2018practical}, solved the SCB problem by converting it into an optimization problem that can be effectively solved by an existing optimization oracle (solver). However, for SCB problems that correspond to difficult-to-solve optimization problems, new optimization oracles are needed.

Compared with linear and generalized linear models, DNN is more powerful to fit the high dimensional context 
such as images or board configurations of board games.
DNNs have been used to find a low-dimension representation of the raw context for
SCB algorithms with a linear reward function
\cite{liu2018customized,riquelme2018deep}. DNNs have also been used to directly represent the expected reward function to result in deep SCB methods \cite{allesiardo2014neural,collier2018deep,zhou2020neural,kveton2020randomized}.
For deep SCB, there have been static action policies such as the $\epsilon$-greedy policy in NeuralBandit1 \cite{allesiardo2014neural} and noise-added greedy policy in DeepFPL \cite{kveton2020randomized}. In these policies, the calculation of action distribution (as a policy) is independent of the information collected in previous rounds.
Recently \cite{zhou2020neural} proposed an adaptive action policy for deep SCB, but it suffers high computational complexity. Based on full gradients, a convergence analysis was given by presenting an upper bound on the expected cumulative regret. In this paper, we analyze the convergence of the proposed method based on stochastic gradients, which is much more practical than full gradients in the training of DNNs.

{\noindent \bf Organization.}
The remainder of this paper is structured as follows. In the problem setting section, we formulate the SCB into an optimization problem and then propose SSGD-SCB to solve the optimization problem in the next section. After that, theoretical properties of SSGD-SCB are discussed in the section of theoretical analysis. Finally, we present the experimental results in the section of experiments, and conclude the paper with a brief discussion on the future work in the section of conclusions and future work.

\section*{The Problem Formulation}
\label{problemsetting}

In this section, we formulate an upper bound on the expected cumulative regret which can be optimized by stochastic gradient based methods. 
All proofs of lemmas and theorems are deferred to the Appendix A.

{\noindent \bf Notation.} 
We use $\|\cdot\|$ to denote the 2-norm of vectors. The notation $\mathcal{O}(\cdot)$, $\Omega(\cdot)$, $\Theta(\cdot)$ are used to hide constants which do not rely on the setup of the problem parameters and the notation $\tilde{\mathcal{O}}(\cdot)$, $\tilde{\Omega}(\cdot)$, $\tilde\Theta(\cdot)$ are used to additionally hide all logarithmic factors. The operator $\mathbb{E}_x[\cdot]$ represents taking the expectation over the random variable listed in the 
subscript, while $\mbox{VAR}_x[\cdot]$ calculates the corresponding variance, and $\nabla f( \cdot )$ is the full gradient.

We start from defining an objective function $F(x)$ when feedback for all actions can be observed.

\begin{definition}[The Objective Function with Full Feedback]
\label{ofwff}
Suppose that after observing the context $d$, the agent gets all candidate actions' rewards (i.e., the full feedback). 
The objective function is
\begin{equation*}
\begin{aligned}
    \textstyle F(x)=
    \mathbb{E}_{d}\sum_{a=1}^K\mathbb{E}_{r_{d,a}}\big[(f_{a}(d;x)-r_{d,a})^2|d,a\big].
\end{aligned}
\end{equation*} 
\end{definition}
It is calculated as the expected squared error between the estimated action reward and the actual reward $r_{d,a}$. A smaller objective $F(x)$ corresponds to a better estimation of $f(d;x)$. Furthermore, the global minimizer of $F(x)$ corresponds to the $x^*$ used in Definition \ref{rfd}, forming the optimal reward function $f(d;x^{*})$. Lemma \ref{min_star} characterizes this result.

\begin{lemma}   
	\label{min_star}
$x^*$ in Eq. (\ref{xstar_rf}) is a global minimizer of $F(x)$. With $x=x^*$, we have $ F(x^*)=\mathbb{E}_d\sum_{a=1}^K\mbox{VAR}_{r_{d,a}}[r_{d,a}|d,a]$.
\end{lemma}
We now establish the connection between the expected cumulative regret Eq.(\ref{cr}) and $F(x)$ in Lemma \ref{uecr}.
\begin{lemma}
\label{uecr}
Suppose in the $t^{th}$ round, the agent estimates the reward of actions with $f(d_t,x_t)$, where $x_t$ is inferred from the history $\mathcal{H}_{t-1}=\{d_\tau,a_\tau,r_{d_\tau,a_\tau},\xi_\tau|\tau\in\{1,2,...,t-2\}\}$,
where $\xi_\tau$ is the set of all other information the agent observes at round $\tau$, e.g., random variables used in the action policy. Let $a_t^*=\arg\max_a f_a(d_t,x_t)$. The expected cumulative regret in Eq. (\ref{cr}) can be upper bounded by
\begin{equation}
\label{lere}
\begin{aligned}
\textstyle\sum_{t=1}^T Pr(a_t\ne a_t^*)+2\sqrt{K}\sum_{t=1}^T\mathbb{E}_{\mathcal{H}_{t-1}}\sqrt{F(x_t)-F(x^*)}.
\end{aligned}
\end{equation}
\end{lemma}
In Eq. (\ref{lere}), the first term is the regret incurred due to the exploration of the agent's action policy and $r_{d,a^*}-r_{d,a_t}\le 1$ because $r \in [0,1]$. For example, when the $\epsilon$-greedy policy is used, the probability in this term becomes $\epsilon$. The second term represents the error of the agent's estimation on the expected action reward measured by $F(x)$ in Definition \ref{ofwff}. Thus, minimizing Eq. (\ref{cr}) can be relaxed to minimizing its upper bound, or equivalently $F(x)$. Since the agent can only observe the selected action's reward at each round, it does not have the full feedback to compute $F(x)$ during the iterative process and thus can only minimize $F(x)$ with $\mathcal{H}_{t-1}$.

The regret produced by the second term heavily relies on the agent's exploration measured by the first term. For example, if the agent does not conduct much exploration, $F(x_t)$ can not be efficiently decreased with the unbalanced actions and rewards in the history $\mathcal{H}_{t-1}$. To simplify the interaction between these two terms, we focus on a set of action policies in which actions will be selected with a probability that is bounded in terms of $t$, as shown in Assumption \ref{lowerbound}.
\begin{assumption}
\label{lowerbound}
    At $ t\in\{1,2,$ $...,T\}$, for any $d_t\sim\mathcal{D}$ and $\mathcal{H}_{t-1}$, $\pi(a_t\,|\,d_t,\mathcal{H}_{t-1})\ge\Omega(\frac{1}{t^{\mathcal{C}}})$, where $\mathcal{C}>0$. 
\end{assumption}
\begin{remark}
\label{lowerobund_pi}
Assumption 1 is a mild assumption because $\pi\ge 0$ is a probability, and this decreasing lower bound approaches $0$ as the number of rounds increases, and it can find a proper constant in $\Omega$ that does not limit the agent's exploration (the first term in Eq. (\ref{lere})).
\end{remark}

We can then use the \emph{Inverse Propensity Scoring} (IPS) to build 
a corrected stochastic gradient ${\tilde{g}}$, as shown in Lemma \ref{unbiased_g}. Thus, we can use SGD methods to decrease $F(x_t)$ by moving $x_{t-1}$ in the opposite direction of  ${\tilde{g}}$. 

\begin{lemma}[Inverse Propensity Scoring] 
\label{unbiased_g}
At round $ t\in\{1,2,$ $...,T\}$, for any $d_t\sim\mathcal{D}$, and $a_t \in\{1,2,..,,K\}$, the loss for selecting $a_t$ is defined as
$l_{d_t,a_t}(x{;r_{d_t,a_t}})=(f_{a_t}(x;d_t)-r_{d_t,a_t})^2$. Let 
\begin{equation}
\label{unbiased_gradient}
    \textstyle{\tilde{g}}(x;\mathcal{H}_{t})=\frac{1}{\pi(a_t|d_t,\mathcal{H}_{t-1})}\nabla l_{d_t,a_t}(x;r_{d_t,a_t}),
\end{equation}
where $\nabla l$ is the gradient of $l$ with respect to $x$. Under Assumption \ref{lowerbound}, we have that

(1) The stochastic gradient $\tilde{g}$ is an unbiased estimate of the full gradient $\nabla F(x)$, i.e., $\mathbb{E}_{\mathcal{H}_t}{\tilde{g}}(x;\mathcal{H}_{t})=\nabla F(x)$.

(2) 
The element-wise variance of $\tilde{g}$ is upper bounded by $\mathbb{E}_{d_t}\sum_{a=1}^K\mathbb{E}_{r_{d_t,a}}t^{\mathcal{C}}\|\nabla l_{d_t,a}(x;r_{d_t,a})\|^2$.

For notation convenience, we will use $\tilde{g}_{t}(x)$ to represent ${\tilde{g}}(x;\mathcal{H}_{t})$ in the sequel.
\end{lemma}
Lemma \ref{unbiased_g} shows that the IPS corrects the stochastic gradient $\nabla l$ by dividing it by $\pi(a_t\,|\,d_t,\mathcal{H}_{t-1})$ to reach an unbiased estimate of $\nabla F(x)$, but this correction step causes {\em the variance of stochastic gradients to approach infinity} when $\pi$ goes to $0$ at certain $a$.
Specifically, when the SCB shifts from the exploration to exploitation stage, the action policy $\pi$ transits from the initial stage of evenly choosing among actions to later stages of selecting a specific action where the probabilities $\pi$ for other actions approach 0, as shown in Fig. \ref{Fig.1}. However, for SGD to converge, the variance of stochastic gradient needs to be upper bounded. \cite{ge2015escaping,allen2018natasha}. 
When employing SGD methods to solve SCB, although one may use a fast decaying learning rate to resolve the increasing variance, this could result in a very slow convergence rate \cite{johnson2013accelerating}.  Therefore, it is unclear how to find a balance between the decaying speed of the learning rate and the drifting speed of action policy from exploration to being greedy.
\begin{figure}
\label{fig1}
\begin{subfigure}{.23\textwidth}
  \includegraphics[scale=0.5]{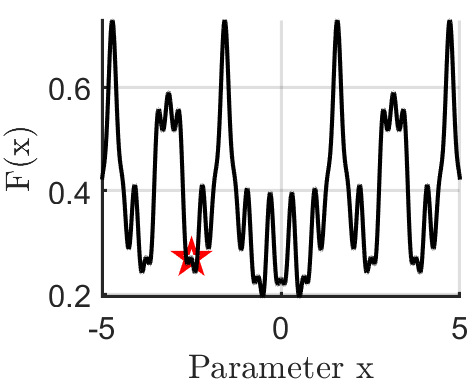}
  \caption{}
  \label{fig:sub-first}
\end{subfigure}
\begin{subfigure}{.23\textwidth}

  \includegraphics[scale=0.5]{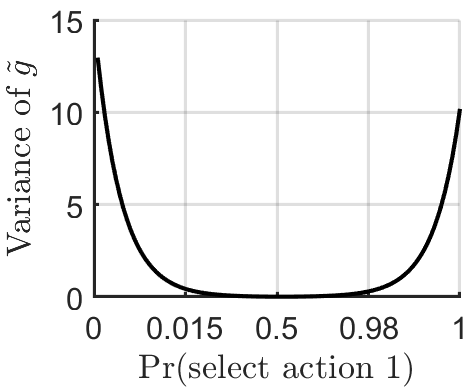}
  \caption{}
  \label{fig:sub-second}
\end{subfigure}
\caption{\label{Fig.1}A simple SCB problem showing the effect of action policy on the variance of IPS corrected stochastic gradient $\tilde{g}$. Here, $\mathcal{D}$ contains two contexts, $d_1=2$ and $d_2=5$, both are uniformly sampled. In each round, the agent can select either $a_1$ or $a_2$. The reward of $r_{d_1,a_1}$, $r_{d_1,a_2}$, $r_{d_2,a_1}$ and $r_{d_2,a_2}$ are randomly drawn from $\mbox{unif}(0.2,0.8)$, $\mbox{unif}(0.55,0.85)$, $\mbox{unif}(0.3,0.9)$ and $\mbox{unif}(0,0.2)$ respectively. The reward functions are $f_{a_1}(d;x)=0.2sin^2(dx)+0.8e^{-(dx)^2}$ and $f_{a_2}(d;x)=0.8sin^2(dx)+0.2e^{-(dx)^2}$, where $x\in\mathbb{R}$ is the learnable parameter. (a) $F(x)$ of this model, and the star marks the position used in Fig. 1(b). (b) The variance of $\tilde{g}$ under different $\pi$ (or Pr(selecting $a_1$)).}
\end{figure}
\section*{The Proposed Method}
\label{method}
In this section, we develop the stage-wise SGD for SCB (\emph{SSGD-SCB}).
Stage-wise SGD has been studied in non-convex optimization and widely used in training DNNs where the step size when moving along a negative stochastic gradient takes multiple values at different stages of the optimization process. 
The method uses a relatively large learning rate in the first stage, and gradually decays it in subsequent stages.
It has been shown that stage-wise SGD helps alleviate the learning-rate-dilemma: a fixed large learning rate is prone to jump out of local minimizer neighborhood while a small learning rate results in very slow convergence \cite{zhu2018anisotropic}.

{\noindent \bf Notations.}
Before describing the SSGD-SCB method, we define some notations.
Let $T_s=T_0s^{2\upsilon}$ be the number of rounds in the $s^{th}$ stage ($s\in\{1,...,\mathcal{S}\}$) 
where $T_0$ is the initial number of rounds and $\upsilon\in[1,\infty)$ is a hyper-parameter controlling the rate of increasing round numbers along stages. Note that we control $T_s$ according to our theoretical analysis in Theorem \ref{converagerate}. Let $I(s,n)$ be the total number of rounds that have been performed at the $n^{th}$ rounds of the $s^{th}$ stage.

In every round, SSGD-SCB performs two major tasks: \emph{action selection} and \emph{backpropagation}. 
\begin{algorithm}[tb]
   \caption{SSGD-SCB}
   \label{PCB-SGD}
\begin{algorithmic}[1]
   \STATE {\bfseries Input:} The initial point $x_1$, the number of rounds $T_0$ in the first stage, the noise scale $\mathcal{N}_0$, the initial learning rate $\eta_0$, and parameters $\omega$ and $C$. According to Remark \ref{hyperparameter_PCBSGD}, we set $\upsilon=1$, ,$\beta=\frac{11}{24}$ and $\kappa=0.5$.
   \FOR{$s=1$ {\bfseries to} $\mathcal{S}$}
   \STATE $T_s=T_0s^{2}$
   \FOR{$n=1$ {\bfseries to} $T_s$}
   \STATE\slash\slash \textbf{Action Selection}
	\STATE The context $d_{I(s,n)}$ is revealed
	\STATE $c_{s,n}=\operatorname*{argmax}_k f_k(d_{I(s,n)};x_{I(s,n)})$
	\FOR{$k=1$ {\bfseries to} $K$}
	\STATE	Calculate $\pi(k|d_{I(s,n)},\mathcal{H}_{I(s,n)-1})$
	\ENDFOR
	\STATE$a_{I(s,n)}\sim \pi(\cdot|d_{I(s,n)},\mathcal{H}_{I(s,n)-1})$\\
	\STATE\slash\slash \textbf{Backpropagation}\\
	\STATE$r_{d_{I(s,n)},a_{I(s,n)}}$ is revealed\\
    \STATE$N_{I(s,n)+1,a,c_{s,n}}=N_{I(s,n),a,c_{s,n}}+1$\\
    \STATE Update $x_{I(s,n)+1}$ using Eq. (\ref{eq:update_x})
   \ENDFOR
   \ENDFOR
\end{algorithmic}
\end{algorithm}

{\noindent \bf Action Selection.}  
Here by dividing observed contexts into clusters, 
we develop an efficient action selection policy which can adapt the exploitation scheme based on how often an action is visited in the cluster which the current context belongs to. 
As a warm-up, we first present a simple adaptive action policy below:
\begin{equation}
\label{p_part2_1}
\begin{aligned}
    &\textstyle W_{s,n,a}=\left\{
		\begin{aligned}
		& \textstyle U_{s,n,a},& a = \arg\max_{k}U_{s,n,k};\\
		& \textstyle \frac{U_{s,n,a}}{s^{\omega}},&else,
		\end{aligned}
		\right.\\
    &\textstyle \pi_0(a\,|\,d_{I(s,n)},\mathcal{H}_{I(s,n)-1})
    =\frac{0.05}{Ks^{0.5\kappa}}
    +\frac{(1-\frac{0.05}{s^{0.5\kappa}})W_{s,n,a}}{\sum_{k=1}^K W_{s,n,k}},
\end{aligned}
\end{equation}
where $U_{s,n,a}=f_a(d_{I(s,n)};x_{I(s,n)})$, and $\omega\in(\frac{\kappa}{2},\infty)$ and $\kappa\in(0,\upsilon)$ are tuning parameters. The policy $\pi_0$ is a convex combination of the exploration probability $\frac{1}{K}$ and the exploitation probability $W_{s,n,a}/(\sum_k W_{s,n,k})$ with a combination coefficient of $0.05s^{-0.5\kappa}$. By this definition, $\pi_0$ has a lower bound $\frac{0.05}{Ks^{0.5\kappa}}$ for any action and this lower bound decreases along stages as controlled by $\kappa$ which is a constant. It is easy to check that $\pi_0$ satisfies Assumption \ref{lowerbound}. 

When $s$ increases, $\pi_0$ will drift from a more even chance of choosing actions to a more greedy policy which selects the action with the largest estimated reward. To shift to choose the action with the largest estimated reward, the parameter $\omega$ controls the weight decay of other actions. Thus, 
%
$\pi_0$ can dynamically balance the exploration and exploitation. However, if $U_{s,n,a}=f_a(d_{I(s,n)};x_{I(s,n)})$, it does not utilize the history of action selection in previous rounds.

Inspired by the effective multi-armed bandit algorithm - UCB1 \cite{auer2002finite} that balances exploration and exploitation via the visit number of actions, we form a new exploitation term incorporating the history action visits.
We first group observed contexts according to the action that gives the largest reward estimated at the respective round:
\begin{equation}
\label{cluster}
    \textstyle c=\operatorname*{argmax}_k f_k(d_{t};x_{t}),
\end{equation}
so $d_t$ belongs to cluster $c$ (as one of the $K$ clusters). Thus, for the current context $d_{I(s,n)}$, which belongs to cluster $c_{s,n}$, we count the number of times, $N_{I(s,n),a,c_{s,n}}$, an action $a$ has been visited before the round $I(s,n)$.
We can revise $U_{s,n,a}$ to incorporate $N_{I(s,n),a,c_{s,n}}$ according to Eq. (\ref{UCB}). Note that the bookkeeping of $c_{s,n}$ and $N_{I(s,n),a,c_{s,n}}$ is performed iteratively.
\begin{equation}
\label{UCB}
   \textstyle U_{s,n,a}=f_a(d_{I(s,n)};x_{I(s,n)})+C\frac{(\sum_{k=1}^KN_{I(s,n),k,c_{s,n}})^{\beta}}{\sqrt{N_{I(s,n),a,c_{s,n}}}},
\end{equation}
where $\beta\in(0,0.5)$ and $C\in[0,\infty)$ are hyperparameters weighting the action-visit term. Eq. (\ref{UCB}) provides a mechanism to adapt the exploitation before becoming completely greedy based only on $f_a(d_{I(s,n)};x_{I(s,n)})$. The actions with a smaller visit number will have a greater chance to be explored. At last, we calculate the final action policy $\pi(\cdot|d_{I(s,n)},\mathcal{H}_{I(s,n)-1})$ with the new $U_{s,n,a}$ in Eq. (\ref{p_part2_1}). Algorithm \ref{PCB-SGD} summarizes all the steps. If there exists an action that has been visited less than a threshold, frequently visited actions will only be selected with low probability until the visit numbers of all actions pass the threshold.
Formal descriptions are given in Lemma \ref{lowerbound_pi} (see Appendix A for the proof). 


\begin{lemma} 
\label{lowerbound_pi}
For a cluster $c_{s,n}$, if there exists an action with its visit number smaller than $\frac{C^2(\sum_{k=1}^KN_{I(s,n),k,c_{s,n}})^{2\beta}}{ C^2K+2C\sqrt{K}+1}$, the action with the largest visit number will be selected by $\pi$ with probability $\mathcal{O}(s^{-0.5\kappa})$, while the action with the largest $U_{s,n,a}$ will be selected with probability greater than $\frac{(1-0.05s^{-0.5\kappa})s^\omega}{s^\omega+K-1}$. 
\end{lemma}

{\noindent \bf Backpropagation.}
In the $n^{th}$ round of $s^{th}$ stage, after taking the action $a_{I(s,n)}$, the agent observes the corresponding reward $r_{d_{I(s,n)},a_{I(s,n)}}$. SSGD-SCB then updates its parameters as follows. First, the visit count of $a_{I(s,n)}$ is increased by one.
Then 
the stochastic gradient $\tilde{g}_{I(s,n)}(x_{I(s,n)})$ is computed based on the context $d_{I(s,n)}$,  the reward $r_{d_{I(s,n)},a_{I(s,n)}}$, and $\pi(a|d_{I(s,n)}, \mathcal{H}_{I(s,n)-1})$ in Eq. (\ref{unbiased_gradient}).
Finally,the parameters $x_{I(s,n)}$ can be updated in a SGD step:
\begin{equation}\label{eq:update_x}
    \textstyle x_{I(s,n)+1}=x_{I(s,n)}-\eta_s(\tilde{g}_{I(s,n)}(x_{I(s,n)})+\mathcal{N}_s),
\end{equation}
where $\eta_s=\frac{\eta_0}{s^{\upsilon}}$ is the learning rate decaying from initial $\eta_0$, and $\mathcal{N}_{s}=\frac{s^{\frac{\kappa}{2}}\mathcal{N}_0w}{\|w\|}$ is a noise term 
where $w$ is a vector of noise drawn from standard Gaussian and $\mathcal{N}_0$ is a hyper-parameter controlling the magnitude of noise. Since $\kappa < \upsilon$ (see Eq.(\ref{p_part2_1})), $\eta$ decays faster than the growth of the variance of stochastic gradients over rounds.
Because our optimization problem is non-convex, there can be saddle points (i.e., $\nabla F(x)=0$, but $x$ is not a local extremum). Prior work \cite{ge2015escaping} has informed that adding proper noise to the stochastic gradient can help escape from strict saddles. The noise needs to be small enough without incurring a sharp increase in the variance of gradients, which we prove that the variance caused by this noise is smaller than that due to dividing by $\pi$. 
The noise $\mathcal{N}_{s}=\Theta(s^{\frac{\kappa}{2}})$ increases along stages because if a fixed small noise is used, it will be hard in later stages to escape strict saddles when $\eta_s$ is very small.

\section*{Theoretical Analysis}
\label{ta}
When the reward function $f$ is represented by a DNN, the objective $F(x)$ is non-convex. It is NP-hard to find a global minimizer of a non-convex problem or even to approximate a global minimizer \cite{jain2017non}, especially when we cannot assume properties such as PL-condition \cite{yuan2019stagewise} or $\sigma$-Nice condition \cite{hazan2016graduated} (for which an approximate optimal solution can be found in polynomial time). Under mild conditions, the SGD method can be proved to reach just a stationary point for non-convex optimization. 

Given that SCB is not only non-convex but also a stochastic optimization process, it becomes even more challenging to analyze convergence, so it has hardly seen any prior work discussing where  SCB algorithms converge. Under four assumptions (see below), we prove that SSGD-SCB can find a local minimizer of $F(x)$, corresponding to a suboptimal reward function $f$ and
a sub-linear expected local cumulative regret. Before introducing the concept of local cumulative regret, we first give our Assumption \ref{non_convex_PCB} on $F(x)$. Then we define the local optimal convergence used in the subsequent analysis in Definition \ref{local optimal convergence}.

\begin{definition} [$(\alpha,\gamma,\epsilon,\delta)$-Strict Saddle]
    \label{ssp}
    A twice differentiable function $\mathcal{F}(x)$ is $(\alpha,\gamma,\epsilon,\delta)$-strict saddle, if for any point $x$, at least one of the following conditions is true:
    \newline
    1. $\|\nabla \mathcal{F}(x)\|\ge \epsilon$.
    \newline
    2. The minimum eigenvalue of $\nabla^2\mathcal{F}(x)$ is smaller than a fixed negative real number $\gamma$.
    \newline
    3. There is a local minimizer $x^l$ such that $\|x-x^l\|\le\delta$, and the function $\mathcal{F}(x')$ restricted to $2\delta$ neighborhood of $x^l$ ($\|x'-x^l\|\le 2\delta$) is $\alpha$-strongly convex (see Appendix for definition).
\end{definition}
\begin{assumption}[\cite{ge2015escaping}]
\label{non_convex_PCB}
The objective function with full feedback, $F(x)$, is an L-smooth, bounded, $(\alpha,\gamma,\epsilon,\delta)$-strict saddle function, and has $\rho$-Lipschitz Hessian (see Appendix for definition).
\end{assumption}
\begin{definition}
\label{local optimal convergence}
    Let $t'$ be a fixed round number. The event of local optimal convergence can be defined as $\mathfrak{E}^{t'}=\{$With $\mathcal{L}>0$, for any round $t>t'$ and any history $\mathcal{H}_{t'}$,  there exists a local minimum of $F(x)$ marked as $x^{\mathcal{H}_{t'}}$ such that $\|x_{t}-x^{\mathcal{H}_{t'}}\|\le\tilde{\mathcal{O}}(\frac{1}{t^{\mathcal{L}}})\le\delta\}$.
\end{definition}

For a deterministic optimization problem, one may argue that a method converges to any local minimum. For SCB, Definition \ref{local optimal convergence} is more strict because it does not allow $x$ to jump among different local minimums over the stochastic process after a certain number of rounds is reached. Because of the stochasticity of stochastic gradients, this is  difficult to guarantee \cite{kleinberg2018alternative}. Conditioned on $\mathfrak{E}^{t'}$, $\sqrt{F(x_t)-F(x^*)}$ in Eq. (\ref{lere}) can be upper bounded by $\sqrt{|F(x_t)-F(x^{\mathcal{H}_{t'}})|}+\sqrt{F(x^{\mathcal{H}_{t'}})-F(x^*)}$. The second term of this upper bound measures the regret produced by the difference between the loss of local and global minimizers. This difference is inaccessible to the agent, and thus we focus on examining the local cumulative regret:
\begin{equation*}
\label{local regret}
\begin{aligned}
\textstyle \sum_{t=1}^T \big[Pr(a_t\ne a_t^*)+\mathbb{E}_{\mathcal{H}_{t-1}}2\sqrt{K|F(x_t)-F(x^{\mathcal{H}_{t'}})|}\big].
\end{aligned}
\end{equation*}
The local cumulative regret measures the transition speed of the action policy from exploration to exploitation (the first term) and the convergence rate of the current loss to the local optimal loss (the second term). With Assumptions \ref{non_convex_PCB}-\ref{boundedslope}, conditioned on the event $\mathfrak{E}^{t'}$, we can show that if the local cumulative regret is sub-linear, the action policy selected by SSGD-SCB approaches (in expectation) to a policy defined by the local optimal reward function  over rounds. This is obtained by discussing the expected mismatching rate defined in Definition \ref{def:local}. We show that the mismatching rate has a  zero-approaching upper bound
, as shown in Lemma \ref{lowerbound_local_optimal} (see Appendix B for the proof). As an extension, if $F(x)$ is convex, SSGD-SCB can reach a sub-linear expected cumulative regret by achieving a zero-approaching expected mismatching rate with the optimal policy, as discussed in Remark \ref{convex_condition}.

\begin{assumption}
\label{banditproof}
Denote the set of local minimizers of $F(x)$ as $X$. For any $ x^l\in X$, given any sampled context $d_t$, the largest output in $f(d_t;x^l)$ is unique. In other words, if $a^{x^l}_{d_t}=\arg\max_{k\in\{1,2,...,K\}}f_k(d_t;x^l)$, then $f_a(d_t;x^l)\ne f_{a^{x^l}_{d_t}}(d_t;x^l)$ for any $a \neq a_{d_t}^{x^l}$.
\end{assumption}

\begin{remark}
Assumption \ref{banditproof} is a variant of the Assumption 3 used in a Thompson Sampling based contextual bandit algorithm \cite{gopalan2014thompson} which assumes a convex loss function and the largest output in $f(d_t;x^*)$ (where $x^*$ is a global optimizer) is unique. We deal with a non-convex loss so we assume that the largest reward output for a local minimizer is unique. 
\end{remark}

\begin{assumption}[\cite{lu2010contextual}]
	\label{boundedslope}
	The gradient of $f$ is bounded. In other words, 
	$\exists M_f\in\mathbb{R}_*^+$ such that $\forall d\sim\mathcal{D}$, $\forall x\sim\mathcal{R}^{n_{x}}$ and $\forall a\in\{1,2,...,K\}$, $\|\nabla f_a(x;d)\|\le \sqrt{M_f}$.
\end{assumption}

\begin{definition}[Expected Mismatching Rate] \label{def:local}
Let $X$ be the set of local minimizer of a non-convex $F(x)$, the expected local optimal mismatching rate is
\begin{equation*}
\textstyle \mathbb{E}_{\mathcal{H}_T}\big[\min_{x^l\in X}(\sum_{t=1}^{T}\frac{\mathbf {1}(a_{d_t}\ne a_{d_t}^{x^l})}{T})\big],
\end{equation*}
where $T$ is the total number of rounds, and $a_{d_t}^{x^l}=\arg\max_k(f_k(d_t;x^l))$.
\end{definition}
\begin{lemma}
\label{lowerbound_local_optimal}
With Assumptions \ref{non_convex_PCB}-\ref{boundedslope}, conditioned on the event $\mathfrak{E}^{t'}$, if the expected local regret is sub-linear, the expected mismatching rate defined in Definition \ref{def:local} has a zero-approaching upper bound.
\end{lemma}
\begin{remark}
\label{convex_condition}
If $F(x)$ is a convex function, the expected mismatching rate can be defined for a global minimizer of $F(x)$. The mismatching rate is the upper bound of expected cumulative regret averaged over $T$.
\end{remark}

{\noindent \bf SSGD-SCB for Non-Convex Objectives.}
We analyze the convergence of SSGD-SCB in two steps. First, based on Assumptions \ref{non_convex_PCB} and \ref{Noise_SGD}, 
Theorem \ref{converagerate} characterizes that if the stage number is large enough, $x$ can reach and be captured at a local minimizer with high probability.
We then prove convergence of the action policy by examining the mismatching rate
in Theorem \ref{pcbsgdconclusion}.  All proofs are provided in Appendix B. 

\begin{assumption}
\label{Noise_SGD}
for any context d, action a and solution $x$, $\exists M\in\mathbb{R}_*^+$, $\|\nabla l_{d,a}(x)\|^2\le M^2$.
\end{assumption}

\begin{theorem}
	\label{converagerate}
	Under Assumptions \ref{non_convex_PCB} and \ref{Noise_SGD}, 
	for any $\zeta\in(0,1)$, 
	there exists a large enough stage $s'$
	such that with probability at least $1-\zeta$, 
	for $\forall s>s'$ and $\forall n\in \{1, 2,...,T_0 s^{2\upsilon}\}$, we have $\|x_{I(s,n)}-x^{s'}\|^{2}\le\mathcal{O}( s^{\kappa-\upsilon}\log \frac{1}{\eta_{s}\zeta})\le\delta$, where $x^{s'}$ is a local minimizer of $F(x)$.
\end{theorem}
With $\mathfrak{E}_{t'}$ defined in Definition \ref{local optimal convergence}, Theorem \ref{converagerate} shows that $Pr(\mathfrak{E}^{I(s',T_{s'})})\ge 1-\zeta$. With Assumptions 2-\ref{Noise_SGD}, 
conditioned on $\mathfrak{E}^{I(s',T_{s'})}$, we show that the mismatching rate between the actions taken by SSGD-SCB and the policy defined by a local minimizer $x^l$ will converge to 0 as $T\rightarrow\infty$.


\begin{theorem}
\label{pcbsgdconclusion}
    Under Assumptions 2-5, if $\mathcal{S}$ is large enough,
    with probability at least $1-\zeta$,
    the expected mismatching rate is bounded as follows:
	\begin{equation*}
	    \begin{aligned}
	    &\textstyle \mathbb{E}
	    \big[\min_{x^l\in X}(\sum_{s=1}^{\mathcal{S}}\sum_{n=1}^{T_0s^2}\frac{\mathbf {1}(a_{I(s,n)}\ne a^{x^l}_{d_{I(s,n)}})}{\sum_{s=1}^\mathcal{S}T_0s^2})\big]
        \le\tilde{\mathcal{O}}(\frac{1}{\mathcal{S}^\Lambda}),
	    \end{aligned}
	\end{equation*}
	where $\Lambda=\min(\frac{\kappa}{2},(1-2\beta)(1+2\upsilon),(2\beta-1)(1+2\upsilon)+\kappa-\upsilon)$.
\end{theorem}
\begin{remark}
\label{hyperparameter_PCBSGD}
By setting $\upsilon=1$ and assuming $\frac{\kappa}{2}=(1-2\beta)(1+2\upsilon)=(2\beta-1)(1+2\upsilon)+\kappa-\upsilon$, we can set $\kappa=0.5$, and then $\beta=\frac{11}{24}$ in an implementation of Algorithm 1. Then we have 
$\Lambda=0.25$. Note that when the exploration parameter $C$ in  Eq. (\ref{UCB}) equals 0, the upper bound is $\mathcal{O}(\frac{1}{\mathcal{S}^{0.25}})$. 
\end{remark}
\begin{remark}
With an additional structural assumption on $F(x)$, i.e. $\sigma$-nice, 
a variant of SSGD-SCB can achieve a zero approaching expected mismatching rate, in which $X$ is a set of $\epsilon$-optimal solution ($F(x^l)-F(x^*)\le\epsilon$). Besides, if $F(x)$ is a strongly convex function, SSGD-SCB can achieve a sub-linear expected cumulative regret. These two cases are discussed in Appendices E and F. 
\end{remark}
\section*{Experiments}
\label{experiments}
We have performed extensive experiments to confirm the effectiveness and computational efficiency of the proposed method, SSGD-SCB with a DNN reward function. Another set of experiments have also been conducted for SSGD-SCB with a linear reward function. 
\begin{figure*}[!ht]
   \begin{minipage}{1\textwidth}
     \centering
     \subcaptionbox{ACR on CIFAR-10\label{fig3:a}}{\includegraphics[width=2.05in]{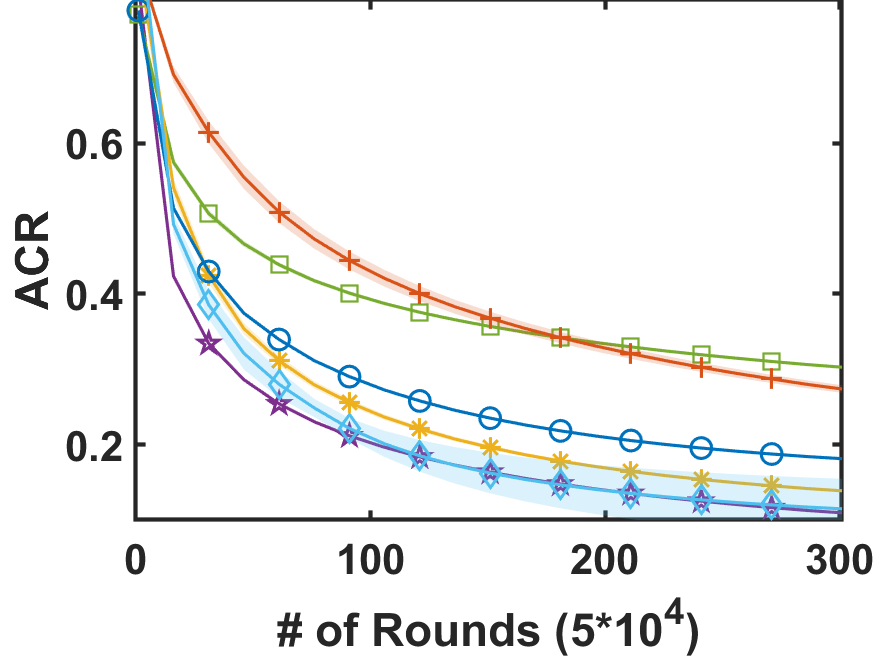}}
    \subcaptionbox{Out-of-sample MSE on CIFAR-10\label{fig3:b}}{\includegraphics[width=2.05in]{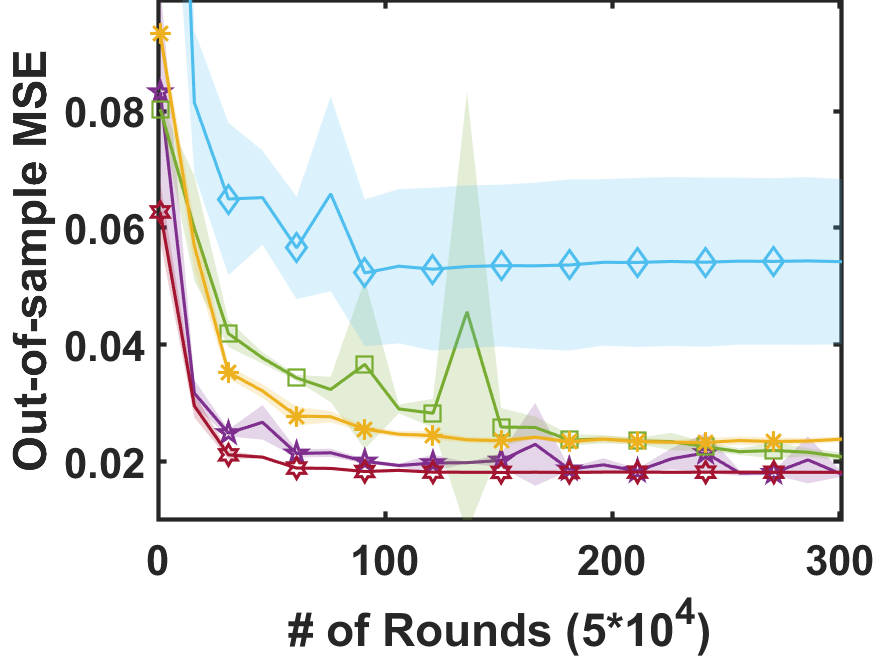}}
    \subcaptionbox{Top-1 test accuracy on CIFAR-10\label{fig3:c}}{\includegraphics[width=2.05in]{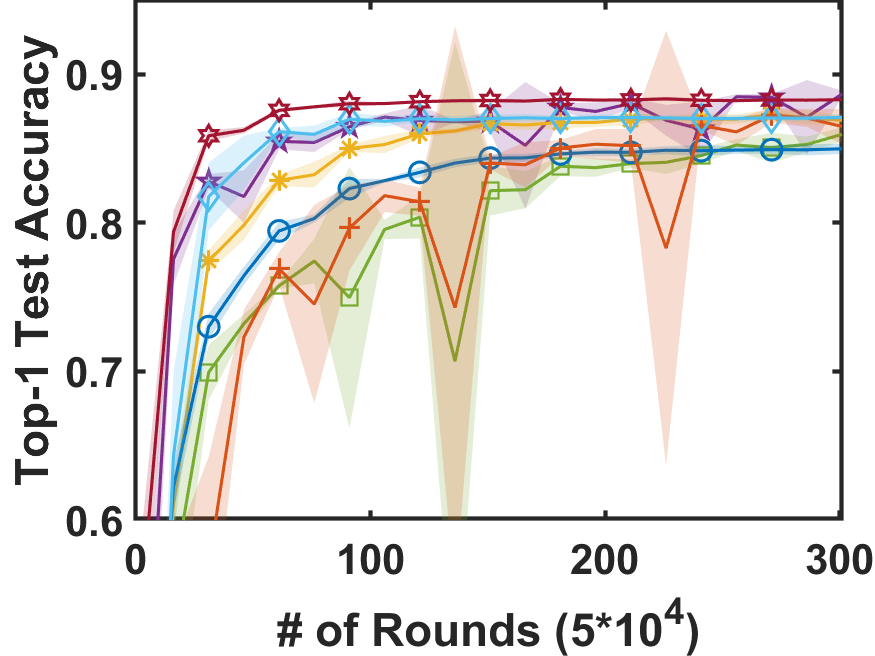}}%
   \end{minipage}
   \begin{minipage}{1\textwidth}
     \centering
     \subcaptionbox{ACR on CIFAR-10+N\label{fig3:d}}{\includegraphics[width=2.05in]{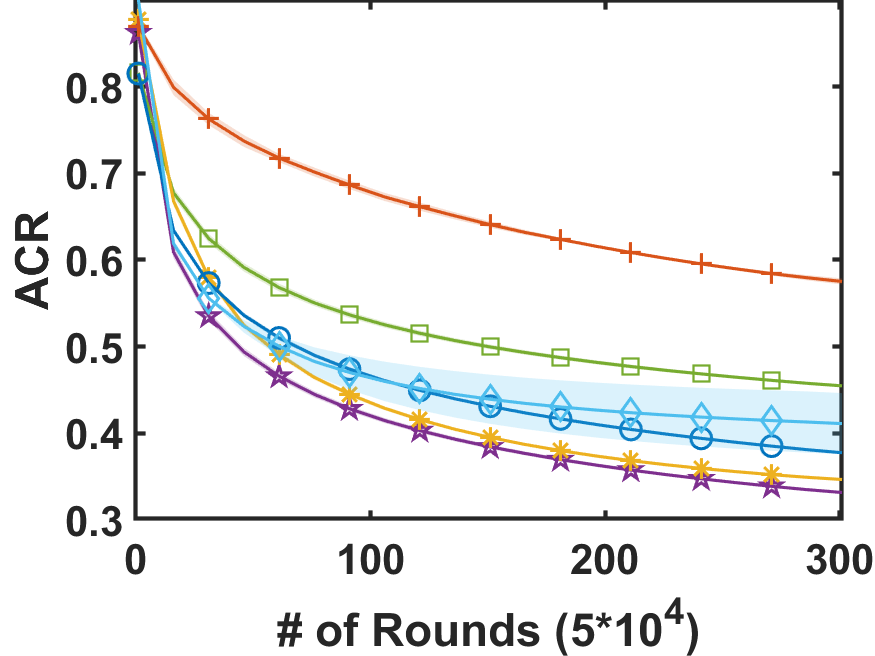}}
    \subcaptionbox{Out-of-sample MSE on CIFAR-10+N\label{fig3:e}}{\includegraphics[width=2.05in]{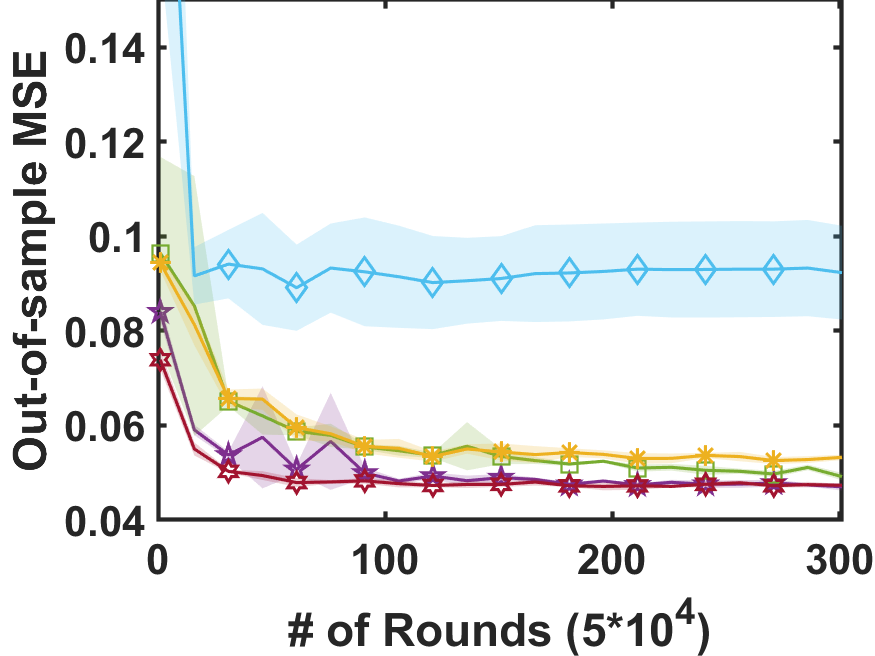}}
    \subcaptionbox{Top-1 test accuracy on CIFAR-10+N\label{fig3:f}}{\includegraphics[width=2.05in]{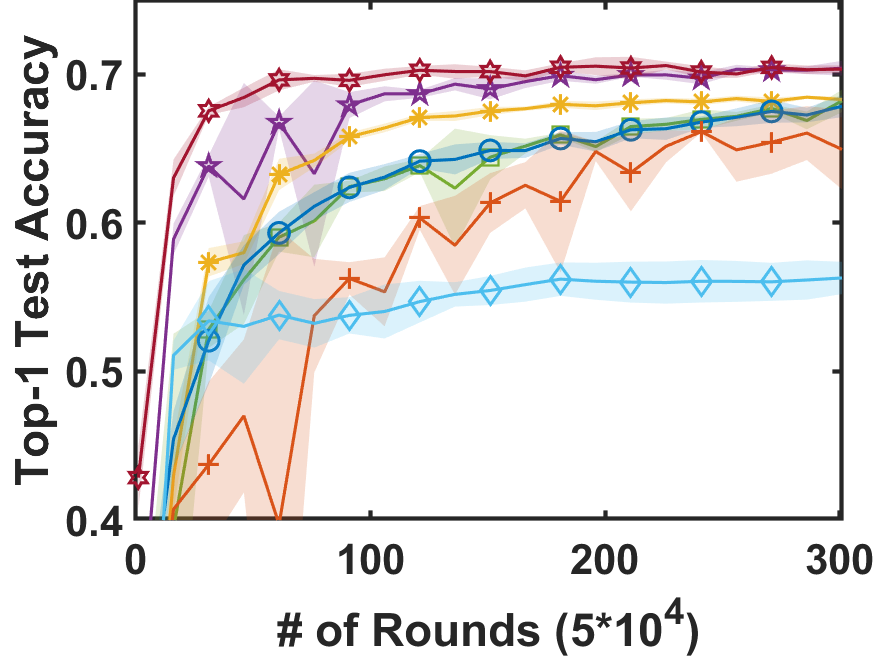}}
   \end{minipage}
   \begin{minipage}{1\textwidth}
     \centering
      \includegraphics[width=3.5in]{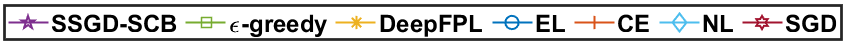}
    \caption{Average cumulative regret, out-of-sample MSE and top-1 test accuracy of SSGD-SCB on CIFAR-10 and CIFAR-10+N. EL and CE have no out-of-sample MSE because rather than fitting rewards of actions, they learn action policy directly. }
   \end{minipage}
   \vspace{-0.2in}
\end{figure*}

{\noindent \bf Datasets.} We use the CIFAR-10 dataset \cite{simonyan2014very}, which has been widely used for benchmarking non-convex optimization algorithms.
It consists of $60K$ $32\times 32$ RGB images from $10$ classes. Considering that it does not include noise in the image label,  we construct a noise-added version of CIFAR-10, called CIFAR-10+N, by correctly marking the labels of 80\% samples while randomly marking the labels of the rest 20\% samples.
For both CIFAR-10 and CIFAR-10+N data, $50K$ samples are selected as the training set $D_{train}$ while the rest $10K$ samples form the test set $D_{test}$. 

{\noindent \bf Experimental Methods.} At each round $t$, an image is randomly sampled from the training set followed by an augmentation with random crop (4$\times$4 region) and horizontal flip to increase its diversity, and then used as the context $d_t$. Next, the agent selects an index of class as the selected action $a_t$, and then the label corresponding to $a_t$ is revealed as the reward $r_{d_t,a_t}$. We denote the $i^{th}$ sample as $(d_i,l_i)$, where $d_i$ is the $i^{th}$ image and $l_i=[l_{i,1},l_{i,2},...,l_{i,K}]^T$ is its one-hot encoded label. For simplicity, we use the \emph{average cumulative regret} (ACR), $1-\frac{\sum_{t=1}^T r_{d_t,a_t}}{T}$, as an estimation of the expected cumulative regret. 
With the test set, we measure the generalization performance of the reward function on the reward of the optimal action and the rewards of all candidate actions, respectively by \emph{top-1 test accuracy}, $\sum_{(d_i,l_i)\in \mathcal{D}_{test}}\frac{l_{i,\arg\max_k f_k(d_i;x_t)}}{|D_{test}|}$ and the \emph{out-of-sample min-squared error (MSE)}, $\sum_{(d_i,l_i)\in D_{test}}\frac{\|f(d_i;x_t)-l_i\|^2}{K|D_{test}|}$.

{\noindent \bf Baseline Algorithms.}
We compared SSGD-SCB with a DNN realized reward function against five baseline algorithms: (1) $\epsilon$-greedy policy, a basic SCB algorithm where the agent selects actions, either greedily with probability $1-\epsilon$ or randomly with probability $\epsilon$; (2 and 3) Bandit structured prediction with Expected Loss (EL) and Cross-Entropy (CE) minimizations \cite{sokolov2016learning}, originally
used in the field of interactive Natural Language Processing; (4) Neural Linear (NL) \cite{riquelme2018deep}, which learns the action policy via the linear Thompson sampling algorithm using the representations of the last hidden layer of a DNN as input; and (5) DeepFPL \cite{kveton2020randomized}, a recent algorithm using multiple DNNs to model the reward function and determine action policy.
In addition, we compare the generalization performance of SSGD-SCB with a supervised learning method, in which all actions' rewards are available to the agent in each round.
The reward functions of the above algorithms are modeled by a variant of VGG-11 with batch normalization, which contains $9$ weight layers and $\sim9.2$ million learnable parameters (see Appendix C for the detailed structure and the hyper-parameter settings). We implement all the algorithms in PyTorch and test on a server equipped with Intel Xeon Gold 6150 2.7GHz CPU, 192GB RAM, and an NVIDIA Tesla V100 GPU.
 
{\noindent \bf Results.} In the simulation of $15$ million rounds, we record the average accumulate regret, the out-of-sample MSE and top-1 test accuracy in every $750K$ round, and compare the results averaged over six runs.
As shown in Fig. 2(a,d), SSGD-SCB consistently attains the lowest average cumulative regret amongst all tested algorithms, demonstrating the empirical effectiveness. 
Furthermore, the reward function of SSGD-SCB has the best generalization capability in terms of the out-of-sample MSE and top-1 test accuracy amongst all tested SCB algorithms, as shown in Fig. 2(b,c,e,f). 
Compared to the DNN under supervised learning, SSGD-SCB results in similar out-of-sample MSE and top-1 test accuracy, but uses only the selected action's reward instead of all actions' rewards.
We observe that NL has a good performance in predicting the reward of the optimal action (top-1 test accuracy) but fails to estimate other actions' rewards (the out-of-sample MSE) in Fig. 2(b,c). This might be due to it uses limited exploration in the large-scale DNN tested in our experiment. 


Finally, Table \ref{tab:test} summarizes the running time of the three top-performed algorithms, SSGD-SCB, DeepFPL and NL, in Fig. 2. SSGD-SCB consumes the smallest running time, consistent with our expectation on its computational efficiency. For comparison, both training multiple DNNs at the same time in DeepFPL and updating Bayesian estimates in NL are computationally expensive. Note that for DeepFPL, 3 DNNs are used in the CIFAR-10 data while 5 DNNs are used in the CIFAR-10-N data.
\begin{table}                                          
\center
\small
\setlength\tabcolsep{3pt}
 \begin{tabular}{lccc} 
  \toprule 
  Dataset& SSGD-SCB&NL&$\mbox{DeepFPL}$\\ 
  \midrule 
 CIFAR-10&$15.25\pm{0.08}$&$35.92\pm{0.11}$&$21.52\pm{0.05}$\\ 
 CIFAR-10+N&$15.25\pm{0.08}$&$36.02\pm{0.04}$&$33.89\pm{0.11}$\\
  \bottomrule 
 \end{tabular} 
\caption{\label{tab:test}Elapsed time (in seconds for $5\times10^4$ rounds) for SSGD-SCB, NL and DeepFPL.} 
\end{table}

{\noindent \bf SSGD-SCB with a Linear Reward Function.}
In this experiment, we use the CIFAR-10, Microsoft MSLR-WEB30k (MSLR) and Yahoo! Learning to Rank Challenge V2.0 (Yahoo) \cite{foster2018practical} as the datasets to evaluate SSGD-SCB and five baseline algorithms, including One Against All (OAA) (supervised learning), $\epsilon$-greedy, Greedy, Bagging, and Cover in the Vowpal Wabbit system \cite{langford2011vowpal}.
To estimate the cumulative regret, we use the \emph{progressive validation loss} (PVL), which has been widely adopted in the linear contextual bandits literature \cite{agarwal2014taming,bietti2018contextual}.
%
As shown in Fig. 2(b) in Appendix C, SSGD-SCB achieves much lower PVL than the baseline algorithms, and even comparable to the supervised learning in the Yahoo data. In the other datasets, we do not observe a clear trend on the curves of different algorithms.
More details can be found in Appendix C.
\section*{Conclusions and Future Work}
\label{futurework}
In this work, we propose a stage-wise SGD algorithm for deep neural network realized SCB problems and solve the high computational complexity incurred by using an adaptive action policy. Based on stochastic gradients, we provide the first theoretical analysis of the convergence on the action policy and reward function.
Extensive experiments have been performed to demonstrate that the proposed algorithm achieves both better generalization performance and a lower cumulative regret compared to the state-of-the-art baseline algorithms.
In the future, we will study how to improve the optimization oracles, e.g., by replacing SGD with more advanced optimizers. It will be also interesting to investigate how to add more action-specific layers in the deep neural network to improve estimations of the actions' rewards. Finally, rather than clustering contexts with the estimated rewards, more complicated clustering approaches may be employed to improve the performance of exploration.
\section*{Acknowledgments}
This work was funded by NIH grant 1K02DA043063-01 and NSF grant IIS-1718738 to Jinbo Bi.
\section*{Ethics Statement}

\begin{itemize}
    \item[a)] Who may benefit from this research? 
    \item This research proposes a stage-wise SGD algorithm for SCB problems with deep neural network (DNN) modeled reward function, a.k.a deep SCB problems. We proposed an efficient adaptive action policy for deep SCB problems to select action, which can dynamically schedule the exploration according to the observed history information and be incrementally updated. Besides, we first prove the convergence of action policies and reward functions of deep SCB problem when the DNN modeling the reward function is trained by a stochastic gradient based method. Because of the effectiveness and computational efficiency confirmed in our experiment, people can use our method to handle large scale real-world SCB problems. Therefore, we believe both academic researchers and industry would benefit from our work.
    
    \item[b)] Who may be put at a disadvantage from this research?
    \item We do not think that our work will put people at a disadvantage.
\end{itemize}
\begin{quote}
\bibliography{bible.bib}

\begin{thebibliography}{35}
\providecommand{\natexlab}[1]{#1}
\providecommand{\url}[1]{\texttt{#1}}
\providecommand{\urlprefix}{URL }
\expandafter\ifx\csname urlstyle\endcsname\relax
  \providecommand{\doi}[1]{doi:\discretionary{}{}{}#1}\else
  \providecommand{\doi}{doi:\discretionary{}{}{}\begingroup
  \urlstyle{rm}\Url}\fi

\bibitem[{Abeille, Lazaric et~al.(2017)}]{abeille2017linear}
Abeille, M.; Lazaric, A.; et~al. 2017.
\newblock Linear Thompson sampling revisited.
\newblock \emph{Electronic Journal of Statistics} 11(2): 5165--5197.

\bibitem[{Agarwal et~al.(2014)Agarwal, Hsu, Kale, Langford, Li, and
  Schapire}]{agarwal2014taming}
Agarwal, A.; Hsu, D.; Kale, S.; Langford, J.; Li, L.; and Schapire, R. 2014.
\newblock Taming the monster: A fast and simple algorithm for contextual
  bandits.
\newblock In \emph{International Conference on Machine Learning}, 1638--1646.

\bibitem[{Agrawal and Goyal(2013)}]{agrawal2013thompson}
Agrawal, S.; and Goyal, N. 2013.
\newblock Thompson sampling for contextual bandits with linear payoffs.
\newblock In \emph{International Conference on Machine Learning}, 127--135.

\bibitem[{Allen-Zhu(2018)}]{allen2018natasha}
Allen-Zhu, Z. 2018.
\newblock Natasha 2: Faster non-convex optimization than sgd.
\newblock In \emph{Advances in neural information processing systems},
  2675--2686.

\bibitem[{Allesiardo, F{\'e}raud, and Bouneffouf(2014)}]{allesiardo2014neural}
Allesiardo, R.; F{\'e}raud, R.; and Bouneffouf, D. 2014.
\newblock A neural networks committee for the contextual bandit problem.
\newblock In \emph{International Conference on Neural Information Processing},
  374--381. Springer.

\bibitem[{Auer(2002)}]{auer2002using}
Auer, P. 2002.
\newblock Using confidence bounds for exploitation-exploration trade-offs.
\newblock \emph{Journal of Machine Learning Research} 3(Nov): 397--422.

\bibitem[{Auer, Cesa-Bianchi, and Fischer(2002)}]{auer2002finite}
Auer, P.; Cesa-Bianchi, N.; and Fischer, P. 2002.
\newblock Finite-time analysis of the multiarmed bandit problem.
\newblock \emph{Machine learning} 47(2-3): 235--256.

\bibitem[{Auer, Cesabianchi, and Fischer(2002)}]{auer2002finite-time}
Auer, P.; Cesabianchi, N.; and Fischer, P. 2002.
\newblock Finite-time Analysis of the Multiarmed Bandit Problem.
\newblock \emph{Machine Learning} 47(2): 235--256.

\bibitem[{Bietti, Agarwal, and Langford(2018)}]{bietti2018contextual}
Bietti, A.; Agarwal, A.; and Langford, J. 2018.
\newblock A contextual bandit bake-off.
\newblock \emph{arXiv preprint arXiv:1802.04064} .

\bibitem[{Chu et~al.(2011)Chu, Li, Reyzin, and Schapire}]{chu2011contextual}
Chu, W.; Li, L.; Reyzin, L.; and Schapire, R. 2011.
\newblock Contextual bandits with linear payoff functions.
\newblock In \emph{Proceedings of the Fourteenth International Conference on
  Artificial Intelligence and Statistics}, 208--214.

\bibitem[{Collier and Llorens(2018)}]{collier2018deep}
Collier, M.; and Llorens, H.~U. 2018.
\newblock Deep Contextual Multi-armed Bandits.
\newblock \emph{arXiv preprint arXiv:1807.09809} .

\bibitem[{Dani, Hayes, and Kakade(2008)}]{dani2008stochastic}
Dani, V.; Hayes, T.~P.; and Kakade, S.~M. 2008.
\newblock Stochastic Linear Optimization under Bandit Feedback.
\newblock In \emph{21st Annual Conference on Learning Theory - {COLT} 2008,
  Helsinki, Finland, July 9-12, 2008}, 355--366.

\bibitem[{Filippi et~al.(2010)Filippi, Cappe, Garivier, and
  Szepesv{\'a}ri}]{filippi2010parametric}
Filippi, S.; Cappe, O.; Garivier, A.; and Szepesv{\'a}ri, C. 2010.
\newblock Parametric bandits: The generalized linear case.
\newblock In \emph{Advances in Neural Information Processing Systems},
  586--594.

\bibitem[{Foster et~al.(2018)Foster, Agarwal, Dudik, Luo, and
  Schapire}]{foster2018practical}
Foster, D.; Agarwal, A.; Dudik, M.; Luo, H.; and Schapire, R. 2018.
\newblock Practical Contextual Bandits with Regression Oracles.
\newblock In \emph{Proceedings of the 35th International Conference on Machine
  Learning}, volume~80, 1539--1548. PMLR.

\bibitem[{Ge et~al.(2015)Ge, Huang, Jin, and Yuan}]{ge2015escaping}
Ge, R.; Huang, F.; Jin, C.; and Yuan, Y. 2015.
\newblock Escaping from saddle points—online stochastic gradient for tensor
  decomposition.
\newblock In \emph{Conference on Learning Theory}, 797--842.

\bibitem[{Gopalan, Mannor, and Mansour(2014)}]{gopalan2014thompson}
Gopalan, A.; Mannor, S.; and Mansour, Y. 2014.
\newblock Thompson sampling for complex online problems.
\newblock In \emph{International Conference on Machine Learning}, 100--108.

\bibitem[{Hazan, Levy, and Shalev-Shwartz(2016)}]{hazan2016graduated}
Hazan, E.; Levy, K.~Y.; and Shalev-Shwartz, S. 2016.
\newblock On graduated optimization for stochastic non-convex problems.
\newblock In \emph{International conference on machine learning}, 1833--1841.

\bibitem[{Jain and Kar(2017)}]{jain2017non}
Jain, P.; and Kar, P. 2017.
\newblock Non-convex optimization for machine learning.
\newblock \emph{arXiv preprint arXiv:1712.07897} .

\bibitem[{Johnson and Zhang(2013)}]{johnson2013accelerating}
Johnson, R.; and Zhang, T. 2013.
\newblock Accelerating stochastic gradient descent using predictive variance
  reduction.
\newblock In \emph{Advances in neural information processing systems},
  315--323.

\bibitem[{Jun et~al.(2017)Jun, Bhargava, Nowak, and Willett}]{jun2017scalable}
Jun, K.-S.; Bhargava, A.; Nowak, R.; and Willett, R. 2017.
\newblock Scalable generalized linear bandits: Online computation and hashing.
\newblock In \emph{Advances in Neural Information Processing Systems}, 99--109.

\bibitem[{Kleinberg, Li, and Yuan(2018)}]{kleinberg2018alternative}
Kleinberg, R.; Li, Y.; and Yuan, Y. 2018.
\newblock An alternative view: When does SGD escape local minima?
\newblock \emph{arXiv preprint arXiv:1802.06175} .

\bibitem[{Kveton et~al.(2019)Kveton, Szepesvari, Ghavamzadeh, and
  Boutilier}]{kveton2019perturbed}
Kveton, B.; Szepesvari, C.; Ghavamzadeh, M.; and Boutilier, C. 2019.
\newblock Perturbed-history exploration in stochastic linear bandits.
\newblock \emph{arXiv preprint arXiv:1903.09132} .

\bibitem[{Kveton et~al.(2020)Kveton, Zaheer, Szepesvari, Li, Ghavamzadeh, and
  Boutilier}]{kveton2020randomized}
Kveton, B.; Zaheer, M.; Szepesvari, C.; Li, L.; Ghavamzadeh, M.; and Boutilier,
  C. 2020.
\newblock Randomized exploration in generalized linear bandits.
\newblock In \emph{International Conference on Artificial Intelligence and
  Statistics}, 2066--2076.

\bibitem[{Langford, Li, and Strehl(2019)}]{langford2011vowpal}
Langford, J.; Li, L.; and Strehl, A. 2019.
\newblock Vowpal wabbit.
\newblock \emph{URL https://github.com/JohnLangford/vowpal{\_}wabbit/wiki} .

\bibitem[{Li et~al.(2010)Li, Chu, Langford, and Schapire}]{li2010contextual}
Li, L.; Chu, W.; Langford, J.; and Schapire, R.~E. 2010.
\newblock A contextual-bandit approach to personalized news article
  recommendation.
\newblock In \emph{Proceedings of the 19th international conference on World
  wide web}, 661--670. ACM.

\bibitem[{Li, Lu, and Zhou(2017)}]{li2017provably}
Li, L.; Lu, Y.; and Zhou, D. 2017.
\newblock Provably optimal algorithms for generalized linear contextual
  bandits.
\newblock In \emph{Proceedings of the 34th International Conference on Machine
  Learning-Volume 70}, 2071--2080. JMLR. org.

\bibitem[{Liu et~al.(2018)Liu, Yu, Lane, and Mengshoel}]{liu2018customized}
Liu, B.; Yu, T.; Lane, I.; and Mengshoel, O.~J. 2018.
\newblock Customized nonlinear bandits for online response selection in neural
  conversation models.
\newblock In \emph{Thirty-Second AAAI Conference on Artificial Intelligence}.

\bibitem[{Lu, P{\'a}l, and P{\'a}l(2010)}]{lu2010contextual}
Lu, T.; P{\'a}l, D.; and P{\'a}l, M. 2010.
\newblock Contextual multi-armed bandits.
\newblock In \emph{Proceedings of the Thirteenth international conference on
  Artificial Intelligence and Statistics}, 485--492.

\bibitem[{Riquelme, Tucker, and Snoek(2018)}]{riquelme2018deep}
Riquelme, C.; Tucker, G.; and Snoek, J. 2018.
\newblock Deep bayesian bandits showdown.
\newblock In \emph{International Conference on Learning Representations}.

\bibitem[{Simonyan and Zisserman(2014)}]{simonyan2014very}
Simonyan, K.; and Zisserman, A. 2014.
\newblock Very deep convolutional networks for large-scale image recognition.
\newblock \emph{arXiv preprint arXiv:1409.1556} .

\bibitem[{Sokolov et~al.(2016{\natexlab{a}})Sokolov, Kreutzer, Lo, and
  Riezler}]{sokolov2016learning}
Sokolov, A.; Kreutzer, J.; Lo, C.; and Riezler, S. 2016{\natexlab{a}}.
\newblock Learning structured predictors from bandit feedback for interactive
  NLP.
\newblock In \emph{Proceedings of the 54th Annual Meeting of the Association
  for Computational Linguistics (Volume 1: Long Papers)}, 1610--1620.

\bibitem[{Sokolov et~al.(2016{\natexlab{b}})Sokolov, Kreutzer, Riezler, and
  Lo}]{sokolov2016stochastic}
Sokolov, A.; Kreutzer, J.; Riezler, S.; and Lo, C. 2016{\natexlab{b}}.
\newblock Stochastic structured prediction under bandit feedback.
\newblock In \emph{Advances in Neural Information Processing Systems},
  1489--1497.

\bibitem[{Yuan et~al.(2019)Yuan, Yan, Jin, and Yang}]{yuan2019stagewise}
Yuan, Z.; Yan, Y.; Jin, R.; and Yang, T. 2019.
\newblock Stagewise training accelerates convergence of testing error over SGD.
\newblock In \emph{Advances in Neural Information Processing Systems},
  2608--2618.

\bibitem[{Zhou, Li, and Gu(2020)}]{zhou2020neural}
Zhou, D.; Li, L.; and Gu, Q. 2020.
\newblock Neural contextual bandits with UCB-based exploration.
\newblock In \emph{International Conference on Machine Learning}, 11492--11502.
  PMLR.

\bibitem[{Zhu et~al.(2018)Zhu, Wu, Yu, Wu, and Ma}]{zhu2018anisotropic}
Zhu, Z.; Wu, J.; Yu, B.; Wu, L.; and Ma, J. 2018.
\newblock The anisotropic noise in stochastic gradient descent: Its behavior of
  escaping from minima and regularization effects.
\newblock \emph{arXiv preprint arXiv:1803.00195} .

\end{thebibliography}
\end{quote}

\end{document}


\noindent{\Large{Appendixes}}
	\tableofcontents
	\newpage
	\appendix
	\section{Proofs Cited in Sections 2 and 3}

	\begin{definition}[Reward function]
    \label{rfd}
    There exists a real-valued vector $x^{*}\in \mathbb{R}^{n_{x}}$ and a reward function  $f(d;x^{*})$ s.t. for all actions $a_t\in\{1,2,\dots,K\}$ and context $d_t$,
    \begin{equation}
    \label{xstar_rf}
    \textstyle \mathbb{E}_{r_{d_t,a_t}}[r_{d_t,a_t}\,|\,d_t,a_t]=f_{a_t}(d_t;x^{*}),
    \end{equation}
    where $f_{a_t}(d_t;x^{*})\in[0,1]$ is the $a_t^{th}$ output of $f(d_t;x^{*}).$
    \end{definition}
	
	\begin{definition}[The Objective Function with Full Feedback]
    \label{ofwff}
    Suppose that after observing the context $d$, the agent gets all candidate actions' rewards. 
    The objective function with full feedback is
    \begin{equation*}
    \begin{aligned}
        \textstyle F(x)=
        \mathbb{E}_{d}\sum_{a=1}^K\mathbb{E}_{r_{d,a}}\big[(f_{a}(d;x)-r_{d,a})^2\,|\,d,a\big].
    \end{aligned}
    \end{equation*} 
    \end{definition}
	\begin{lemma}   
	    \label{min_star}
        $x^*$ in Eq. (\ref{xstar_rf}) is a global minimizer of $F(x)$. With $x=x^*$, we have $ F(x^*)=\mathbb{E}_d\sum_{a=1}^K\mbox{VAR}_{r_{d,a}}[r_{d,a}\,|\,d,a]$.
		\begin{proof}
			\begin{equation}
			\label{perfect_loss}
			\begin{aligned}
			F(x)=&\mathbb{E}_{d}\sum_{a=1}^K\mathbb{E}_{r_{d,a}}\big[(f_{a}(d;x)-r_{d,a})^2\,|\,d,a\big]\\
			=&\mathbb{E}_{d}\sum_{a=1}^K\big[f_{a}^2(x;d)-2f_{a}(x;d)f_{a}(x^*;d)+\mathbb{E}_{r_{d,a}}[r_{d,a}^2\,|\,d,a]\big]\\
			=&\mathbb{E}_{d}\sum_{a=1}^K\big[f_{a}^2(x;d)-2f_{a}(x;d)f_{a}(x^*;d)+(\mathbb{E}_{r_{d,a}}[r_{d,a}|d,a])^2+\mbox{VAR}_{r_{d,a}}[r_{d,a}\,|\,d,a]\big]\\
			=&\mathbb{E}_{d}\sum_{a=1}^K\big[(f_{a}(x;d)-f_{a}(x^*;d))^2+\mbox{VAR}_{r_{d,a}}[r_{d,a}\,|\,d,a]\big]\big].
			\end{aligned}
			\end{equation}
			Eq. (\ref{perfect_loss}) shows that the global minimum of $F(x)$ can be found when $x=x^*$. 
		\end{proof}
	\end{lemma}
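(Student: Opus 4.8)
The plan is to establish a pointwise (conditional on $(d,a)$) bias--variance decomposition of the squared error and then average it over actions and contexts. The only external fact I need is the defining property of $x^{*}$ from Definition~\ref{rfd}, namely that $f_{a}(d;x^{*})$ equals the conditional mean $\mathbb{E}_{r_{d,a}}[r_{d,a}\mid d,a]$; everything else is elementary algebra.

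First I would fix $d$ and $a$ and treat $f_{a}(d;x)$ as a deterministic constant, since it does not depend on the draw of $r_{d,a}$. Writing $\mu := \mathbb{E}_{r_{d,a}}[r_{d,a}\mid d,a] = f_{a}(d;x^{*})$, I would add and subtract $\mu$ inside the square to obtain
\[
\mathbb{E}_{r_{d,a}}\big[(f_{a}(d;x)-r_{d,a})^2 \mid d,a\big] = (f_{a}(d;x)-\mu)^2 + 2(f_{a}(d;x)-\mu)\,\mathbb{E}_{r_{d,a}}\big[\mu - r_{d,a}\mid d,a\big] + \mathbb{E}_{r_{d,a}}\big[(\mu - r_{d,a})^2\mid d,a\big].
\]
The cross term vanishes because $\mathbb{E}_{r_{d,a}}[\mu - r_{d,a}\mid d,a]=0$ by the definition of $\mu$, and the last term is exactly $\mathrm{VAR}_{r_{d,a}}[r_{d,a}\mid d,a]$. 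This yields the clean identity $\mathbb{E}_{r_{d,a}}[(f_{a}(d;x)-r_{d,a})^2\mid d,a] = (f_{a}(d;x)-f_{a}(d;x^{*}))^2 + \mathrm{VAR}_{r_{d,a}}[r_{d,a}\mid d,a]$, which is precisely the content of the final line of Eq.~(\ref{perfect_loss}).

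Next I would sum over $a\in\{1,\dots,K\}$ and take the outer expectation over $d$, giving
\[
F(x) = \mathbb{E}_{d}\sum_{a=1}^{K} (f_{a}(d;x)-f_{a}(d;x^{*}))^2 \;+\; \mathbb{E}_{d}\sum_{a=1}^{K} \mathrm{VAR}_{r_{d,a}}[r_{d,a}\mid d,a].
\]
Here I would emphasize the two structural observations that close the argument: the second term is independent of $x$, and the first term is an average of squares, hence nonnegative, so $F(x)$ is bounded below by the variance term for every $x$. Setting $x=x^{*}$ makes each $(f_{a}(d;x^{*})-f_{a}(d;x^{*}))^2$ equal to zero, so this lower bound is attained, which simultaneously shows that $x^{*}$ is a global minimizer and that $F(x^{*})=\mathbb{E}_{d}\sum_{a=1}^{K} \mathrm{VAR}_{r_{d,a}}[r_{d,a}\mid d,a]$.

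I do not anticipate a genuine obstacle, since this is a standard bias--variance decomposition; the one point worth stating carefully is the justification that the cross term is zero, which relies on pulling the deterministic factor $f_{a}(d;x)-\mu$ out of the inner expectation and using the exact identification of $\mu$ with $f_{a}(d;x^{*})$ supplied by Definition~\ref{rfd}. I would also remark that the lemma only asserts that $x^{*}$ is \emph{a} global minimizer: any parameter $x$ realizing $f_{a}(d;x)=f_{a}(d;x^{*})$ for almost every $d$ and all $a$ is equally optimal, so uniqueness is neither claimed nor required.
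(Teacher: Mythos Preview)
Your proposal is correct and follows essentially the same approach as the paper: both arrive at the bias--variance decomposition $F(x)=\mathbb{E}_{d}\sum_{a}(f_{a}(d;x)-f_{a}(d;x^{*}))^{2}+\mathbb{E}_{d}\sum_{a}\mathrm{VAR}_{r_{d,a}}[r_{d,a}\mid d,a]$, with the paper expanding the square directly and invoking $\mathbb{E}[r^{2}]=(\mathbb{E}[r])^{2}+\mathrm{VAR}[r]$, while you add and subtract $\mu$ and kill the cross term. The two computations are algebraically equivalent and equally rigorous.
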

	\begin{lemma}
        \label{uecr}
        Suppose in the $t^{th}$ round, the agent estimates reward of actions with $f(d_t,x_t)$, where $x_t$ is inferred from the history $\mathcal{H}_{t-1}=\{d_\tau,a_\tau,r_{d_\tau,a_\tau},\xi_\tau\,|\,\tau\in\{1,2,\dots,t-2\}\}$,
        where $\xi_\tau$ is the set of all other information the agent observes at round $\tau$, e.g., random variables used in the action policy. Let $a_t^*=\arg\max_a f_a(d_t,x_t)$. The expected (over $\mathcal{H}_T$) cumulative regret can be upper bounded by
        \begin{equation}
        \label{lere}
        \begin{aligned}
        \textstyle\sum_{t=1}^T Pr(a_t\ne a_t^*)+2\sqrt{K}\sum_{t=1}^T\mathbb{E}_{\mathcal{H}_{t-1}}\sqrt{F(x_t)-F(x^*)}.
        \end{aligned}
        \end{equation}
		\begin{proof}
			The upper bound on the expected cumulative regret is 
			\begin{equation}
			\label{lere1}
			\begin{aligned}
			&\mathbb{E}_{\mathcal{H}_{T}}\big[\sum_{t=1}^T\mathbb{E}_{r_{d,a}}[r_{d_t,a^*_{d_t}}\,|\,d_t,a^*_{d_t}]-r_{d_t,a_t}\,|\,\mathcal{H}_{T}\big]\\
			=&\sum_{t=1}^T\mathbb{E}_{d_t,\mathcal{H}_{t-1},a_t,r_{d_t,a_t},r_{d_t,a^*_{d_t}}}[r_{d_t,a^*_{d_t}}-r_{d_t,a_t}]\\
			\le&\sum_{t=1}^T\mathbb{E}_{d_t,\mathcal{H}_{t-1},a_t,r_{d_t,a_t},r_{d_t,a^*_{d_t}}}[\mathbf {1}(a_t=\arg\max_a f_a(d_t;x_{t}))(r_{d_t,a^*_{d_t}}-r_{d_t,a_t})]\\
			&+\sum_{t=1}^T\mathbb{E}_{d_t,\mathcal{H}_{t-1},a_t,r_{d_t,a_t},r_{d_t,a^*_{d_t}}}[\mathbf {1}(a_t\ne\arg\max_a f_a(d_t;x_{t}))(r_{d_t,a^*_{d_t}}-r_{d_t,a_t})]\\
			\le&\underbrace{\sum_{t=1}^T Pr(a\ne\arg\max_a f_a(d_t;x_{t}))}_\text{Exploration}
			+\underbrace{\sum_{t=1}^T\mathbb{E}_{d_t,\mathcal{H}_{t-1},r_{d_t,a_t},r_{d_t,a^*_{d_t}}}[r_{d_t,a^*_{d_t}}-r_{d_t,a_t}\,|\,a_t=\arg\max_a f_a(d_t;x_{t})]}_\text{Exploitation}\\
			\end{aligned}
			\end{equation}
			In Eq. (\ref{lere1}), the first term calculates regrets caused by the exploration and the second one represents those caused by the exploitation. For the second term, we further have
			\begin{equation}
			\label{online_upperbound_1}
			\begin{aligned}
			&\sum_{t=1}^T\mathbb{E}_{d_t,\mathcal{H}_{t-1},r_{d_t,a_t},r_{d_t,a^*_{d_t}}}[r_{d_t,a^*_{d_t}}-r_{d_t,a_t}\,|\,a_t=\arg\max_a f_a(d_t;x_{t})]\\
			=&\sum_{t=1}^T\mathbb{E}_{d_t,\mathcal{H}_{t-1},r_{d_t,a_t},r_{d_t,a^*_{d_t}}}[r_{d_t,a^*_{d_t}}-f_{a_{d_t}^*}(d_t;x_{t})+f_{a_{d_t}^*}(d_t;x_{t})-f_{a_t}(d_t;x_{t})\\
			&+f_{a_t}(d_t;x_{t})-r_{d_t,a_t}\,|\,a_t=\arg\max_a f_a(d_t;x_{t})]\\
			\le&\sum_{t=1}^T\mathbb{E}_{d_t,\mathcal{H}_{t-1}}\mathbb{E}_{r_{d_t,a^*_{d_t}},r_{d_t,a_t}}[(r_{d_t,a^*_{d_t}}-f_{a_{d_t}^*}(d_t;x_{t}))+(f_{a_t}(d_t;x_{t})-r_{d_t,a_t})\,|\,d_t,\mathcal{H}_{t-1},a_t=\arg\max_a f_a(d_t;x_{t})]\\
			\le&2\sum_{t=1}^T\mathbb{E}_{d_t,\mathcal{H}_{t-1}}\sum_{a=1}^K\sqrt{(\mathbb{E}_{r_{d_t,a}}\big[f_k(d_t;x_{t})-r_{d_t,a}\,|\,d_t,a,\mathcal{H}_{t-1}\big])^2}\\
			\le&2\sum_{t=1}^T\mathbb{E}_{d_t,\mathcal{H}_{t-1}}\sum_{a=1}^K\sqrt{\mathbb{E}_{r_{d_t,a}}\big[(f_k(d_t;x_{t})-r_{d_t,a})^2\,|\,d_t,a,\mathcal{H}_{t-1}\big]-\mbox{VAR}_{r_{d_t,a}}[r_{d_t,a}\,|\,d_t,a]}\\
			\le&2\sqrt{K}\sum_{t=1}^T\mathbb{E}_{d_t,\mathcal{H}_{t-1}}\sqrt{\sum_{a=1}^K\mathbb{E}_{r_{d_t,a}}\big[(f_k(d_t;x_{t})-r_{d_t,a})^2\,|\,d_t,a,\mathcal{H}_{t-1}\big]-\sum_{a=1}^K\mbox{VAR}_{r_{d_t,a}}[r_{d_t,a}\,|\,d_t,a]}\\
			\le&2\sqrt{K}\sum_{t=1}^T\mathbb{E}_{\mathcal{H}_{t-1}}\sqrt{\mathbb{E}_{d_t}\sum_{a=1}^K\mathbb{E}_{r_{d_t,a}}\big[(f_k(d_t;x_{t})-r_{d_t,a})^2\,|\,d_t,a,\mathcal{H}_{t-1}\big]-\mathbb{E}_{d_t}\sum_{a=1}^K\mbox{VAR}_{r_{d_t,a}}[r_{d_t,a}\,|\,d_t,a]}\\
			\end{aligned}
			\end{equation}
			The last inequality in Eq. (\ref{online_upperbound_1}) holds because of the Jensen's inequality. Then we substitute Eq. (\ref{online_upperbound_1}) into Eq. (\ref{lere1}). With 
			\begin{equation*}
			\begin{aligned}
			&F(x)=\mathbb{E}_{d}\sum_{a=1}^K\mathbb{E}_{r_{d,a}}\big[(f_{a}(d;x)-r_{d,a})^2\,|\,d,a\big],\\
			&F(x^*)=\mathbb{E}_d\sum_{a=1}^K\mbox{VAR}_{r_{d,a}}[r_{d,a}\,|\,d,a],
			\end{aligned}
			\end{equation*}
			we have
			\begin{equation*}
			\begin{aligned}
			&\mathbb{E}_{\mathcal{H}_{T}}\big[\sum_{t=1}^T\mathbb{E}_{r_{d,a}}[r_{d_t,a^*_{d_t}}|d_t,a^*_{d_t}]-r_{d_t,a_t}\,|\,\mathcal{H}_{T}\big]\\
			\le&\sum_{t=1}^T Pr(a_t\ne a_t^*)+2\sqrt{K}\sum_{t=1}^T\mathbb{E}_{\mathcal{H}_{t-1}}\sqrt{F(x_t)-F(x^*)}\\
			\end{aligned}
			\end{equation*}
		\end{proof}
	\end{lemma}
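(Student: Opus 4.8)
The plan is to bound the expected instantaneous regret $\mathbb{E}[r_{d_t,a^*_{d_t}} - r_{d_t,a_t}]$ round by round, splitting according to whether the played action $a_t$ coincides with the model-optimal action $a_t^* = \arg\max_a f_a(d_t;x_t)$. Writing $1 = \mathbf{1}(a_t = a_t^*) + \mathbf{1}(a_t \ne a_t^*)$ and using that all rewards take values in $[0,1]$ (Definition \ref{rfd}), the instantaneous regret on the mismatch event is at most $1$, so that term contributes at most $Pr(a_t \ne a_t^*)$ per round, and summing over $t$ gives the first (exploration) summand. The remaining work is to control the agreement (exploitation) term where $a_t = a_t^*$.

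On the event $\{a_t = a_t^*\}$, I would insert the model's predictions through the telescoping identity
$$r_{d_t,a^*_{d_t}} - r_{d_t,a_t} = \big(r_{d_t,a^*_{d_t}} - f_{a^*_{d_t}}(d_t;x_t)\big) + \big(f_{a^*_{d_t}}(d_t;x_t) - f_{a_t}(d_t;x_t)\big) + \big(f_{a_t}(d_t;x_t) - r_{d_t,a_t}\big).$$
Because $a_t = a_t^*$ maximizes $f_a(d_t;x_t)$ over $a$, the middle term satisfies $f_{a^*_{d_t}}(d_t;x_t) - f_{a_t}(d_t;x_t) \le 0$ and can be discarded. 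Taking the inner expectation over the rewards, the two surviving prediction-error terms are each bounded in absolute value by $\sum_{a=1}^K \big|f_a(d_t;x_t) - \mathbb{E}_{r_{d_t,a}}[r_{d_t,a}\mid d_t,a]\big|$, which together produce the factor $2$ in the final bound.

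The crux is converting these absolute prediction errors into $F(x_t) - F(x^*)$. Here I would use that $f_a(d_t;x_t)$ is deterministic given $d_t$ and $x_t$ (it is $\mathcal{H}_{t-1}$-measurable), so the bias--variance identity $(\mathbb{E}_{r_{d_t,a}}[f_a - r_{d_t,a}])^2 = \mathbb{E}_{r_{d_t,a}}[(f_a - r_{d_t,a})^2] - \mathrm{VAR}_{r_{d_t,a}}[r_{d_t,a}]$ holds; this is exactly the step that makes the variance term $F(x^*) = \mathbb{E}_d\sum_a \mathrm{VAR}_{r_{d,a}}[r_{d,a}\mid d,a]$ of Lemma \ref{min_star} appear. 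Applying the Cauchy--Schwarz inequality in the form $\sum_{a=1}^K \sqrt{z_a} \le \sqrt{K\sum_{a=1}^K z_a}$ extracts the factor $\sqrt{K}$, and then Jensen's inequality (concavity of $\sqrt{\cdot}$) moves the context expectation $\mathbb{E}_{d_t}$ inside the root. What remains under the square root is precisely $F(x_t) - F(x^*)$ by Definition \ref{ofwff}, which yields the second summand and completes the bound.

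The main obstacle is bookkeeping the order of the nested expectations: for $F$ to emerge, the $d_t$-expectation must end up inside the square root while the $\mathcal{H}_{t-1}$-expectation stays outside it. This ordering is what dictates the specific sequence --- first Cauchy--Schwarz over the $K$ actions, then Jensen over $d_t$ --- and it crucially relies on $f_a(d_t;x_t)$ being deterministic given the history, so that the variance subtraction in the bias--variance identity is valid and the subtracted variances assemble exactly into $F(x^*)$.
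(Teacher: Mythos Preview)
Your proposal is correct and follows essentially the same route as the paper: the same indicator split into exploration and exploitation, the same telescoping with the middle term dropped because $a_t^*$ maximizes $f_a$, the same bias--variance identity to surface the $\mathrm{VAR}$ terms, and then Cauchy--Schwarz over actions followed by Jensen over $d_t$ to assemble $F(x_t)-F(x^*)$. Your remark about the ordering of Cauchy--Schwarz and Jensen (and why $f_a$ must be history-measurable for the variance subtraction to work) is exactly the point the paper's proof hinges on.
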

	
	\begin{assumption}
    \label{lowerbound}
        At $ t\in\{1,2,\dots,T\}$, for any $d_t\sim\mathcal{D}$ and $\mathcal{H}_{t-1}$, $\pi(a_t\,|\,d_t,\mathcal{H}_{t-1})\ge\Omega(\frac{1}{t^{\mathcal{C}}})$, where $\mathcal{C}>0$. 
    \end{assumption}
    \begin{lemma}[Inverse Propensity Scoring] 
    \label{unbiased_g}
    At round $ t\in\{1,2,\dots,T\}$, for any $d_t\sim\mathcal{D}$, and $a_t \in\{1,2,\dots,K\}$, the loss for selecting $a_t$ is defined as
    $l_{d_t,a_t}(x{;r_{d_t,a_t}})=(f_{a_t}(x;d_t)-r_{d_t,a_t})^2$. Let 
    \begin{equation}
    \label{unbiased_gradient}
        \textstyle{\tilde{g}}(x;\mathcal{H}_{t})=\frac{1}{\pi(a_t|d_t,\mathcal{H}_{t-1})}\nabla l_{d_t,a_t}(x;r_{d_t,a_t}),
    \end{equation}
    where $\nabla l$ is the gradient of $l$ with respect to $x$. Under Assumption \ref{lowerbound}, we have that
    
    (1) The stochastic gradient $\tilde{g}$ is an unbiased estimate of the full gradient $\nabla F(x)$, i.e., $\mathbb{E}_{\mathcal{H}_t}{\tilde{g}}(x;\mathcal{H}_{t})=\nabla F(x)$.
    
    (2) 
    The element-wise variance of $\tilde{g}$ is upper bounded by $\mathbb{E}_{d_t}\sum_{a=1}^K\mathbb{E}_{r_{d_t,a}}t^{\mathcal{C}}\|\nabla l_{d_t,a}(x;r_{d_t,a})\|^2$.
    
    For notational convenience, we will use $\tilde{g}_{t}(x)$ to represent ${\tilde{g}}(x;\mathcal{H}_{t})$ in the sequel.
		\begin{proof} First, we have
			\begin{equation*}
			\begin{aligned}
			&\mathbb{E}_{\mathcal{H}_t}{\tilde{g}}(x;\mathcal{H}_{t})\\
			=&\mathbb{E}_{d_t,\mathcal{H}_{t-1},a_t,r_{d_t,a_t}}{\tilde{g}}(x;\mathcal{H}_{t})\\
			=&\mathbb{E}_{d_t,\mathcal{H}_{t-1},a_t,r_{d_t,a_t}}\frac{1}{\pi(a_t\,|\,d_t,\mathcal{H}_{t-1})}\pi(a_t\,|\,d_t,\mathcal{H}_{t-1}){\tilde{g}}(x;\mathcal{H}_{t})\\
			=&\mathbb{E}_{d_t}\sum_{a=1}^K\mathbb{E}_{r_{d_t,a}}\big[\nabla l_{d_t,a}(x;r_{d_t,a})\,|\,d_t,a\big]\\
			=&\nabla F(x).
			\end{aligned}
			\end{equation*}
			As for the element-wise variance, let ${\tilde{g}}(x_{t};\mathcal{H}_{t})[i]$ be the $i^{th}$ element of $\tilde{g}_{\mathcal{H}_t}(x)$, we have
			\begin{equation*}
			\begin{aligned}
			&\mbox{VAR}_{\mathcal{H}_t}{\tilde{g}}(x_{t};\mathcal{H}_{t})[i]\\
			\le&\mathbb{E}_{d_t,\mathcal{H}_{t-1},a_t,r_{d_t,a}}\big[\frac{\|\nabla l_{t,a}(x;r_{d_t,a})\|^2}{\pi^2(a\,|\,d_t,\mathcal{H}_{t-1})}\big]\\
			=&\mathbb{E}_{d_t,\mathcal{H}_{t-1},r_{d_t,a}}\big[\frac{\|\nabla l_{t,a}(x;r_{d_t,a})\|^2}{\pi(a\,|\,d_t,\mathcal{H}_{t-1})}\big].
			\end{aligned}
			\end{equation*}
		\end{proof}
	\end{lemma}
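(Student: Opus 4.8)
The plan is to establish both claims by iterated expectation, exploiting the algebraic cancellation that is the hallmark of inverse propensity scoring. For part (1), I would decompose $\mathbb{E}_{\mathcal{H}_t}$ via the tower property as $\mathbb{E}_{d_t,\mathcal{H}_{t-1}}\mathbb{E}_{a_t\sim\pi}\mathbb{E}_{r_{d_t,a_t}}$, conditioning on the context and the past before integrating out the sampled action and its reward. The crucial step is to multiply and divide by $\pi(a_t\,|\,d_t,\mathcal{H}_{t-1})$: when I take the expectation over $a_t\sim\pi(\cdot\,|\,d_t,\mathcal{H}_{t-1})$, the factor $\pi(a_t\,|\,\cdot)$ coming from the sampling distribution cancels the inverse propensity weight $1/\pi(a_t\,|\,\cdot)$, converting the single-action importance-weighted gradient into the unweighted sum $\sum_{a=1}^K\nabla l_{d_t,a}(x;r_{d_t,a})$ over all $K$ candidate actions. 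Taking the remaining expectations over $r_{d_t,a}$ and $d_t$ yields $\mathbb{E}_{d}\sum_{a=1}^K\mathbb{E}_{r_{d,a}}[\nabla l_{d,a}(x;r_{d,a})\,|\,d,a]$, which equals $\nabla F(x)$ by differentiating the definition of $F$ in Definition \ref{ofwff} under the expectation sign.

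For part (2), I would start from the elementary bound $\mbox{VAR}_{\mathcal{H}_t}\tilde{g}(x;\mathcal{H}_t)[i]\le\mathbb{E}_{\mathcal{H}_t}\big[\tilde{g}(x;\mathcal{H}_t)[i]^2\big]$, discarding the nonnegative squared-mean term, and then use the coordinatewise inequality $\tilde{g}[i]^2\le\|\tilde{g}\|^2$ so that the per-element second moment is controlled by $\mathbb{E}_{\mathcal{H}_t}\|\tilde{g}\|^2=\mathbb{E}_{d_t,\mathcal{H}_{t-1},a_t,r_{d_t,a_t}}\big[\|\nabla l_{d_t,a_t}(x;r_{d_t,a_t})\|^2/\pi^2(a_t\,|\,d_t,\mathcal{H}_{t-1})\big]$. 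Integrating over $a_t\sim\pi$ cancels exactly one factor of $\pi$, leaving $\mathbb{E}_{d_t,\mathcal{H}_{t-1}}\sum_{a=1}^K\mathbb{E}_{r_{d_t,a}}\big[\|\nabla l_{d_t,a}\|^2/\pi(a\,|\,d_t,\mathcal{H}_{t-1})\big]$, exactly the expression the displayed computation reaches. The final step invokes Assumption \ref{lowerbound}: since $\pi(a\,|\,d_t,\mathcal{H}_{t-1})\ge\Omega(1/t^{\mathcal{C}})$ holds for every context and history, the reciprocal satisfies $1/\pi(a\,|\,d_t,\mathcal{H}_{t-1})\le O(t^{\mathcal{C}})$ almost surely, which I substitute to pull the $t^{\mathcal{C}}$ factor out and remove the dependence on $\mathcal{H}_{t-1}$, obtaining the stated bound $\mathbb{E}_{d_t}\sum_{a=1}^K\mathbb{E}_{r_{d_t,a}}t^{\mathcal{C}}\|\nabla l_{d_t,a}(x;r_{d_t,a})\|^2$.

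The main obstacle is not the algebra, which is routine, but two measurability and regularity points that must be handled with care. First, in part (1) the interchange of the gradient $\nabla$ and the expectation defining $F$ requires a dominated-convergence or uniform-integrability argument; since $l_{d,a}$ is quadratic in $f$ and the rewards lie in $[0,1]$, the gradient is dominated by an integrable envelope, so the interchange is justified but should be flagged explicitly. Second, the cancellation of $\pi$ in both parts must be performed inside the conditional expectation over $a_t$, with $d_t$ and $\mathcal{H}_{t-1}$ held fixed, before the outer expectation over $\mathcal{H}_{t-1}$ is taken, because $\pi$ is itself a function of the random history; once the lower bound from Assumption \ref{lowerbound} has been applied pointwise in $(d_t,\mathcal{H}_{t-1})$, the outer expectation over $\mathcal{H}_{t-1}$ can be dropped since the resulting integrand no longer depends on the history. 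These are the only nontrivial checks; everything else follows from the importance-weighting identity.
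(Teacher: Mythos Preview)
Your proposal is correct and follows essentially the same approach as the paper's proof: the same tower-property decomposition with the multiply-and-divide-by-$\pi$ cancellation for part (1), and the same variance $\le$ second-moment $\le$ norm-squared chain followed by one cancellation of $\pi$ and the pointwise bound $1/\pi\le O(t^{\mathcal C})$ from Assumption~\ref{lowerbound} for part (2). Your added remarks on the interchange of $\nabla$ and $\mathbb{E}$ and on performing the $\pi$-cancellation inside the conditional expectation are valid technical refinements that the paper leaves implicit.
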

	\begin{lemma} 
        \label{lowerbound_pi}
        For a cluster $c_{s,n}$, if there exists an action with a visit number smaller than $\frac{C^2(\sum_{k=1}^KN_{I(s,n),k,c_{s,n}})^{2\beta}}{ C^2K+2C\sqrt{K}+1}$, the action with the largest visit number will be selected by $\pi$ with probability $\Theta(\frac{1}{s^{\frac{\kappa}{2}}})$, while the action with the largest $U_{s,n,a}$ will be selected with probability $\Omega(s^{\omega})$.
		\begin{proof}
			In cluster $c_{s,n}$, let $\bar{a}=\arg\max_{k}N_{I(s,n),k,c_{s,n}}$ and $\underline{A}$ be the set containing all actions whose visit numbers are less than $\frac{C^2(\sum_{k=1}^KN_{I(s,n),k,c_{s,n}})^{\beta}}{ C^2K+2C\sqrt{K}+1}$. 
			In the $n^{th}$ round of the $s^{th}$ stage, for any cluster $c_{s,n}$, we have 
			\begin{equation}
			\label{largest_visit}
			N_{I(s,n),\bar{a}_{I(s,n)},c_{s,n}}\ge\frac{1}{K}\sum_{k=1}^K N_{I(s,n),k,c_{s,n}}.
			\end{equation}
			
			Based on Eq. (\ref{largest_visit}), for $\forall a\in\{1,2,\dots,K\}\backslash\bar{a}$, when 
			\begin{equation*}
			N_{I(s,n),a,c_{s,n}}\le\frac{C^2(\sum_{k=1}^KN_{I(s,n),k,c_{s,n}})^{2\beta}}{ C^2K+2C\sqrt{K}+1}\le\frac{C^2(\sum_{k=1}^KN_{I(s,n),k,c_{s,n}})^{2\beta}}{\big[C\sqrt{K}(\sum_{k=1}^KN_{I(s,n),k,c_{s,n}})^{\beta-\frac{1}{2}}+1\big]^2},
			\end{equation*}
			we have 
			\begin{equation*}
			C\frac{(\sum_{k=1}^KN_{I(s,n),k,c_{s,n}})^\beta}{\sqrt{N_{I(s,n),a,c_{s,n}}}}\ge C\sqrt{K}(\sum_{k=1}^KN_{I(s,n),k,c_{s,n}})^{\beta-\frac{1}{2}}+1\overset{Eq. (\ref{largest_visit})}{\ge} C\frac{(\sum_{k=1}^KN_{I(s,n),k,c_{s,n}})^\beta}{\sqrt{N_{I(s,n),\bar{a},c_{s,n}}}}+1.
			\end{equation*}
			Therefore, if $\underline{A}\ne\varnothing$, based on the action selection policy of SSGD-SCB, there exists an $ a\in\{1,2,\dots,K\}\backslash\bar{a}$ s.t. $a=\arg\max_{k}U_{s,n,k}$ and 
			\begin{equation*}
			\begin{aligned}
			&Pr(a\ is\ selected\,|\,c_{s,n})\\
			=&\frac{0.05}{Ks^{\frac{\kappa}{2}}}+\frac{(1-\frac{0.05}{s^{\frac{\kappa}{2}}})W_{s,n,a}}{\sum_{k=1}^K W_{s,n,k}}\\
			\ge&\frac{(1-\frac{0.05}{s^{\frac{\kappa}{2}}})U_{s,n,a}s^\omega}{U_{s,n,a}s^\omega+(K-1)U_{s,n,a}}\\
			\ge&\frac{(1-\frac{0.05}{s^{\frac{\kappa}{2}}})s^\omega}{s^\omega+K-1};\\
			&Pr(\bar{a}\ is\ selected|c_{s,n})\\
			=&\frac{0.05}{Ks^{\frac{\kappa}{2}}}+\frac{(1-\frac{0.05}{s^{\frac{\kappa}{2}}})W_{s,n,\overline{a}}}{\sum_{k=1}^K W_{s,n,k}}\\
			\le&\frac{0.05}{Ks^{\frac{\kappa}{2}}}+\frac{1}{\frac{U_{s,n,a}s^\omega}{U_{s,n,\overline{a}}}}\\
			\le&\frac{0.05}{Ks^{\frac{\kappa}{2}}}+\frac{1}{s^\omega}.\\
			\end{aligned}
			\end{equation*}
		\end{proof}
	\end{lemma}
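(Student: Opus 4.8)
The plan is to move from a counting (pigeonhole) fact, through an algebraic comparison of the two exploration bonuses, to a ranking statement about the indices $U_{s,n,k}$, and finally to read off the selection probabilities from the SSGD-SCB policy. Write $M=\sum_{k=1}^K N_{I(s,n),k,c_{s,n}}$ for brevity. First I would record the pigeonhole inequality (Eq.~\ref{largest_visit}): the most-visited action $\bar a$ satisfies $N_{I(s,n),\bar a,c_{s,n}}\ge M/K$, since the maximum of $K$ nonnegative integers is at least their average; hence $\sqrt{N_{\bar a}}\ge M^{1/2}/\sqrt K$ and the bonus of $\bar a$ obeys $C M^{\beta}/\sqrt{N_{\bar a}}\le C\sqrt K\,M^{\beta-1/2}$.

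The heart of the argument is to show that the hypothesis ``some action is under-visited'' forces that action's bonus $C M^{\beta}/\sqrt{N_a}$ to exceed the bonus of $\bar a$ by at least $1$. Starting from $N_a\le C^2 M^{2\beta}/(C^2 K+2C\sqrt K+1)$, I would compare denominators via the elementary expansion $(C\sqrt K\,M^{\beta-1/2}+1)^2=C^2K\,M^{2\beta-1}+2C\sqrt K\,M^{\beta-1/2}+1$, which is $\le C^2K+2C\sqrt K+1$ in the relevant range (this silently needs $M\ge 1$ together with a constraint on $\beta$ — here $\beta\le \tfrac12$, so that $M^{2\beta-1}\le1$ and $M^{\beta-1/2}\le1$; note also the exponent mismatch between the $2\beta$ in the statement and the $\beta$ used in the definition of $\underline A$, which must be reconciled). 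Inverting this and multiplying by $C^2M^{2\beta}$ gives $N_a\le C^2M^{2\beta}/(C\sqrt K\,M^{\beta-1/2}+1)^2$, and taking square roots and rearranging yields $C M^{\beta}/\sqrt{N_a}\ge C\sqrt K\,M^{\beta-1/2}+1$. Chaining with the bound on $\bar a$'s bonus from the previous paragraph produces exactly the displayed inequality $C M^{\beta}/\sqrt{N_a}\ge C\sqrt K\,M^{\beta-1/2}+1\ge C M^{\beta}/\sqrt{N_{\bar a}}+1$.

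Next I would convert this bonus gap into a ranking of the indices. Because the estimated-reward part of each $U_{s,n,k}$ lies in $[0,1]$, the reward components of $a$ and $\bar a$ differ by at most $1$, whereas the bonus of $a$ exceeds that of $\bar a$ by at least $1$; hence $U_{s,n,a}\ge U_{s,n,\bar a}$, so $\bar a$ cannot maximize the index and $\arg\max_k U_{s,n,k}$ must lie in $\{1,\dots,K\}\setminus\bar a$ (in fact among the under-visited actions). Finally I plug this ranking into the policy $Pr(k\mid c_{s,n})=\frac{0.05}{Ks^{\kappa/2}}+\big(1-\frac{0.05}{s^{\kappa/2}}\big)\frac{W_{s,n,k}}{\sum_j W_{s,n,j}}$, where the index-maximizer $a$ carries the boosted weight $W_{s,n,a}=U_{s,n,a}s^{\omega}$ and every other action carries weight $U_{s,n,k}\le U_{s,n,a}$. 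For $a$, the bound $\sum_j W_{s,n,j}\le U_{s,n,a}s^{\omega}+(K-1)U_{s,n,a}$ gives $Pr(a\mid c_{s,n})\ge\frac{(1-0.05\,s^{-\kappa/2})s^{\omega}}{s^{\omega}+K-1}=\Omega(s^{\omega})$. For $\bar a$, the bound $\sum_j W_{s,n,j}\ge U_{s,n,a}s^{\omega}$ together with $U_{s,n,\bar a}\le U_{s,n,a}$ gives $Pr(\bar a\mid c_{s,n})\le\frac{0.05}{Ks^{\kappa/2}}+s^{-\omega}$, and combined with the trivial lower bound $Pr(\bar a\mid c_{s,n})\ge\frac{0.05}{Ks^{\kappa/2}}$ from the uniform-exploration term this pins $Pr(\bar a\mid c_{s,n})=\Theta(s^{-\kappa/2})$.

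The main obstacle is the algebraic comparison in the second step: getting the direction of the denominator inequality right, keeping track of the hidden requirements $M\ge1$ and the bound on $\beta$, and reconciling the $2\beta$/$\beta$ discrepancy between the statement and the set $\underline A$. A secondary but essential point is justifying that a bonus gap of size $1$ genuinely dominates the reward-estimate difference, which relies entirely on the normalization $f_a(d;x)\in[0,1]$ from Definition~\ref{rfd}.
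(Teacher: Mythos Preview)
Your proposal is correct and mirrors the paper's proof essentially step for step: the pigeonhole bound on $N_{\bar a}$, the denominator comparison $(C\sqrt K\,M^{\beta-1/2}+1)^2\le C^2K+2C\sqrt K+1$, the resulting bonus gap of at least $1$, and the final policy computation are all identical. You are in fact more explicit than the paper in two places it leaves tacit---the $f_a\in[0,1]$ justification for why a bonus gap of $1$ forces $U_{s,n,a}\ge U_{s,n,\bar a}$, and the lower bound from the uniform-exploration term needed for the $\Theta(s^{-\kappa/2})$ claim---and you correctly flag both the $\beta\le\tfrac12$ requirement and the $2\beta$/$\beta$ typo in the definition of $\underline A$.
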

	
	\newpage
	\section{Definitions of \texorpdfstring{$\alpha$}{Lg}-strongly Convex, \texorpdfstring{$L$}{Lg}-smooth and \texorpdfstring{$\rho$}{Lg}-Lipschitz Hessian}
	\begin{definition}[$\alpha$-strongly Convex]
    	\label{storngly_convex_definition}
    	A differentiable function $\mathcal{F}(x)$ is $\alpha$-strongly convex for $\alpha>0$ if for any points $x$ and $y$, $\mathcal{F}(y)\ge \mathcal{F}(x)+\nabla\mathcal{F}(x)^T(y-x)+\frac{\alpha}{2}\|y-x\|^2$.
	\end{definition}
	\begin{definition}[$L$-smooth]
		\label{smooth_definition}
		A differentiable function $\mathcal{F}(x)$ is $L$-smooth if there exists an $L$ such that for any $x$ and $y$, $\mathcal{F}(y)\le \mathcal{F}(x)+\nabla\mathcal{F}(x)^T(y-x)+\frac{L}{2}\|y-x\|^2$.
	\end{definition}
	\begin{definition}[$\rho$-Lipschitz Hessian]
		\label{smooth2_definition}
		A twice differentiable function $\mathcal{F}(x)$ has $\rho$-Lipschitz Hessian if there exists a $\rho >0$ such that for any points $x$ and $y$, $\|\nabla^2\mathcal{F}(y)- \nabla^2\mathcal{F}(x)\|\le\rho\|y-x\|$.
	\end{definition}
	
	\newpage
	\section{Proofs Cited in Section 4}
    \begin{definition} [$(\alpha,\gamma,\epsilon,\delta)$-strict saddle]
        \label{ssp}
        A twice differentiable function $\mathcal{F}(x)$ is $(\alpha,\gamma,\epsilon,\delta)$-strict saddle if for any point $x$, at least one of the following conditions is true:
        \newline
        1. $\|\nabla \mathcal{F}(x)\|\ge \epsilon$.
        \newline
        2. The minimum eigenvalue of $\nabla^2\mathcal{F}(x)$ is smaller than a fixed negative real number $\gamma$.
        \newline
        3. There is a local minimizer $x^l$ such that $\|x-x^l\|\le\delta$, and the function $\mathcal{F}(x')$ restricted to $2\delta$ neighborhood of $x^l$ ($\|x'-x^l\|\le 2\delta$) is $\alpha$-strongly convex.
    \end{definition}
    \begin{assumption}[\cite{ge2015escaping}]
    \label{non_convex_PCB}
    The objective function with full feedback $F(x)$ is an L-smooth, bounded, $(\alpha,\gamma,\epsilon,\delta)$-strict saddle function, and has $\rho$-Lipschitz Hessian.
    \end{assumption}
    \begin{definition}
    \label{local optimal convergence}
        Let $t'$ be a fixed round number. The event of local optimal convergence can be defined as $\mathfrak{E}^{t'}=\{$With $\mathcal{L}>0$, for any round $t>t'$ and any history $\mathcal{H}_{t'}$,  there exists a local minimum of $F(x)$ marked as $x^{\mathcal{H}_{t'}}$ such that $\|x_{t}-x^{\mathcal{H}_{t'}}\|\le\tilde{\mathcal{O}}(\frac{1}{t^{\mathcal{L}}})\le\delta\}$.
    \end{definition}
    \begin{assumption}
    \label{banditproof}
    Denote the set of local minimizers of $F(x)$ as $X$. For any $ x^l\in X$, given any sampled context $d_t$, the largest output in $f(d_t;x^l)$ is unique. In other words, if $a^{x^l}_{d_t}=\arg\max_{k\in\{1,2,\dots,K\}}f_k(d_t;x^l)$, then $f_a(d_t;x^l)\ne f_{a^{x^l}_{d_t}}(d_t;x^l)$ for any $a \neq a_{d_t}^{x^l}$.
    \end{assumption}
    \begin{assumption}
	\label{boundedslope}
	The gradient of $f$ is bounded. In other words, 
	There exists a $M_f\in\mathbb{R}_*^+$ such that $\forall d\sim\mathcal{D}$, $\forall x\sim\mathcal{R}^{n_{x}}$ and $\forall a\in\{1,2,\dots,K\}$, $\|\nabla f_a(x;d)\|\le \sqrt{M_f}$.
    \end{assumption}
    \begin{definition}[Expected mismatching rate] \label{def:local}
    Let $X$ be the set of local minimizer of a non-convex $F(x)$, the expected local optimal mismatching rate is
    \begin{equation*}
    \textstyle \mathbb{E}_{\mathcal{H}_T}\big[\min_{x^l\in X}(\sum_{t=1}^{T}\frac{\mathbf {1}(a_{d_t}\ne a_{d_t}^{x^l})}{T})\big],
    \end{equation*}
    where $T$ is the total number of rounds and $a_{d_t}^{x^l}=\arg\max_k(f_k(d_t;x^l))$.
    \end{definition}
    \begin{lemma}
    \label{lowerbound_local_optimal}
    With Assumptions \ref{non_convex_PCB}-\ref{boundedslope}, conditioned on the event $\mathfrak{E}^{t'}$, if the expected local regret is sub-linear, the expected mismatching rate has a zero-approaching upper bound.
		\begin{proof}
			Conditioned on $\mathfrak{E}^{t'}$, we have
			\begin{equation}
			\label{non_convex_upperbound}
			\begin{aligned}
			&\mathbb{E}_{\mathcal{H}_T}\big[\min_{x^l\in X}(\sum_{t=1}^{T}\mathbf {1}(a_{t}\ne a_{d_t}^{x^{l}}))\big]\\
			\le&\mathbb{E}_{\mathcal{H}_{T}}\big[\sum_{t=1}^{T}\mathbf {1}(a_t\ne a_{d_t}^{x^{\mathcal{H}_{t'}}})\big]\\
			\le&t'+\sum_{t=t'+1}^{T}\mathbb{E}_{\mathcal{H}_{t}}\big[\mathbf {1}(a_t\ne a_{d_t}^{x^{\mathcal{H}_{t'}}})\big]\\
			\le&t'+\sum_{t=t'+1}^{T}\mathbb{E}_{d_t,\mathcal{H}_{t-1},a_t}\big[\mathbf {1}(a_t\ne a_{d_t}^{x^{\mathcal{H}_{t'}}})\big]\\
			\le&t'+\sum_{t=t'+1}^{T}\mathbb{E}_{d_t,\mathcal{H}_{t-1}}\big[Pr(a_t\ne a_{d_t}^{x^{\mathcal{H}_{t'}}}\,|\,\mathcal{H}_{t-1},d_t)\big].
			\end{aligned}
			\end{equation}
			Then we have
			\begin{equation}
			\label{non_convex_upperbound_p2}
			\begin{aligned}
			&\sum_{t=t'+1}^{T}\mathbb{E}_{d_t,\mathcal{H}_{t-1}}\big[Pr(a_t\ne a_{d_t}^{x^{\mathcal{H}_{t'}}}\,|\,\mathcal{H}_{t-1},d_t)\big]\\
			=&\sum_{t=t'+1}^{T}\mathbb{E}_{d_t,\mathcal{H}_{t-1}}\big[Pr(a_t\ne a_{d_t}^{x^{\mathcal{H}_{t'}}},a_t=\arg\max_{k}f_k(d_t;x_t)\,|\,\mathcal{H}_{t-1},d_t)\\
			&+Pr(a_t\ne a_{d_t}^{x^{\mathcal{H}_{t'}}},a_t\ne\arg\max_{k}f_k(d_t;x_t)\,|\,\mathcal{H}_{t-1},d_t)\big]\\
			\le&\sum_{t=t'+1}^{T}\mathbb{E}_{d_t,\mathcal{H}_{t-1}}Pr(a_{d_t}^{x^{\mathcal{H}_{t'}}}\ne\arg\max_{k}f_k(d_t;x_t)\,|\,\mathcal{H}_{t-1},d_t)\\
			&+\sum_{t=t'+1}^{T}\mathbb{E}_{d_t,\mathcal{H}_{t-1}}Pr(a_t\ne\arg\max_{k}f_k(d_t;x_t)\,|\,\mathcal{H}_{t-1},d_t)\big].
			\end{aligned}
			\end{equation}
			Then we give the upper bound of $\sum_{t=t'+1}^{T}\mathbb{E}_{d_t,\mathcal{H}_{t-1}}Pr(a_{d_t}^{x^{\mathcal{H}_{t'}}}\ne\arg\max_{k}f_k(d_t;x_t)\,|\,\mathcal{H}_{t-1},d_t)$.
			\begin{equation}
			\label{non_convex_upperbound_p31}
			\begin{aligned}
			&\sum_{t=t'+1}^{T}\mathbb{E}_{d_t,\mathcal{H}_{t-1}}Pr(a_{d_t}^{x^{\mathcal{H}_{t'}}}\ne\arg\max_{k}f_k(d_t;x_t)\,|\,\mathcal{H}_{t-1},d_t)\\
			\le&\sum_{t=t'+1}^{T}\mathbb{E}_{d_t}\sum_{k\in\{1,2,\dots,K\}\backslash a_{d_t}^{x^{\mathcal{H}_{t'}}}}Pr(f_k(d_t;x_t)>f_{a_{d_t}^{x^{\mathcal{H}_{t'}}}}(d_t;x_t)\,|\,d_t)
			\end{aligned}
			\end{equation}
			We further have
			\begin{equation}
			\label{l5e1}
			\begin{aligned}
			&Pr(f_k(d_t;x_t)>f_{a_{d_t}^{x^{\mathcal{H}_{t'}}}}(d_t;x_t)\,|\,d_t)\\
			=&Pr(f_k(d_t;x_t)-\mathbb{E}f_k(d_t;x_t)+\mathbb{E}f_k(d_t;x_t)>f_{a_{d_t}^{x^{\mathcal{H}_{t'}}}}(d_t;x_t)-\mathbb{E}f_{a_{d_t}^{x^{\mathcal{H}_{t'}}}}(d_t;x_t)+\mathbb{E}f_{a_{d_t}^{x^{\mathcal{H}_{t'}}}}(d_t;x_t)\,|\,d_t)\\
			\le&Pr(f_k(d_t;x_t)-\mathbb{E}f_k(d_t;x_t)>\frac{\mathbb{E}f_{a_{d_t}^{x^{\mathcal{H}_{t'}}}}(d_t;x_t)-\mathbb{E}f_k(d_t;x_t)}{2}\,|\,d_t)\\
			&+Pr(f_{a_{d_t}^{x^{\mathcal{H}_{t'}}}}(d_t;x_t)-\mathbb{E}f_{a_{d_t}^{x^{\mathcal{H}_{t'}}}}(d_t;x_t)<\frac{\mathbb{E}f_k(d_t;x_t)-\mathbb{E}f_{a_{d_t}^{x^{\mathcal{H}_{t'}}}}(d_t;x_t)}{2}\,|\,d_t).
			\end{aligned}
			\end{equation}
			Following the idea of Lemma \ref{boundoutput} and the definition of the event $\mathfrak{E}^{t'}$, for any action $k\in\{1,2,\dots,K\}$, we have 
			\begin{align}
			\label{eq.epect}
			&|\mathbb{E}\left[f_k(x_t;d_t)-f_k(x^{\mathcal{H}_{t'}};d_t)\,|\,d_t\right]|
			\le \tilde{\mathcal{O}}(\|x_{t}-x^{\mathcal{H}^{t'}}\|),\\
			\label{eq.var}
			&\mbox{VAR}(f_k(x_t;d_t)|d_t)\le\tilde{\mathcal{O}}(\|x_{t}-x^{\mathcal{H}^{t'}}\|^2).
			\end{align}  
			Then define $\Delta$ such that 
			\begin{equation*}
			\Delta=min_{t,a\ne a_{d_t}^{x^{\mathcal{H}_{t'}}}}(\mathbb{E}\big[f_{a_{d_t}^{\mathcal{H}_{t'}}}(d_{t};x^{\mathcal{H}_{t'}})\big]-\mathbb{E}\big[f_a(d_{t};x^{\mathcal{H}_{t'}})\big]).
			\end{equation*}
			According to Assumption \ref{banditproof}, we have $\Delta>0$. When $t$ is large enough, based on Eq. (\ref{eq.epect}), we have $\frac{\mathbb{E}f_{a_{d_t}^{x^{\mathcal{H}_{t'}}}}(d_t;x_t)-\mathbb{E}f_k(d_t;x_t)}{2}\ge\frac{\Delta}{2}$. Therefore, according to Chebyshev's inequality and Eq. (\ref{eq.var}), we can bound the first term of Eq. (\ref{l5e1}) by 
			\begin{equation}
			\label{c0bound}
			\begin{aligned}
			&Pr(f_k(d_t;x_t)-\mathbb{E}f_k(d_t;x_t)>\frac{\mathbb{E}f_{a_{d_t}^{x^{\mathcal{H}_{t'}}}}(d_t;x_t)-\mathbb{E}f_k(d_t;x_t)}{2}\,|\,d_t)\le\tilde{\mathcal{O}}(\|x_{t}-x^{\mathcal{H}^{t'}}\|^2).
			\end{aligned}
			\end{equation}
			If $F(x)$ is a $(\alpha,\gamma,\epsilon,\delta)$-strict saddle function, we know that for large enough t, $x_t$ is in the strongly convex region of $F(x)$ and we further have 
			\begin{equation}
			\label{lr_upperbound_4}
			\tilde{\mathcal{O}}(\|x_{t}-x^{\mathcal{H}^{t'}}\|^2)\le \tilde{\mathcal{O}}(|F(x_{t})-F(x^{\mathcal{H}^{t'}})|)=
			\tilde{\mathcal{O}}(\frac{K|F(x_{t})-F(x^{\mathcal{H}^{t'}})|}{K})\le
			\tilde{\mathcal{O}}(\sqrt{K|F(x_{t})-F(x^{\mathcal{H}^{t'}})|}).
			\end{equation}
			The last inequality holds because $|F(x_{t})-F(x^{\mathcal{H}^{t'}})|\le K$. Following this proof, the second term of Eq. (\ref{l5e1}) has the same upper bound. Combine Eqs. (\ref{non_convex_upperbound}), (\ref{non_convex_upperbound_p2}) and (\ref{lr_upperbound_4}) together, we have 
			\begin{equation}
			\label{upperbound_final}
			\begin{aligned}
			&\mathbb{E}_{\mathcal{H}_T}\big[\min_{x^l\in X^l}(\sum_{t=1}^{T}\mathbf {1}(a_t\ne a_{d_t}^{x^{l}}))\big]\\
			\le&\mathcal{O}(1)+\sum_{t=t'+1}^{T}Pr(a_{d_t}^{x^{\mathcal{H}_{t'}}}\ne\arg\max_{k}f_k(d_t;x_t))\\
			&+\tilde{\mathcal{O}}(\mathbb{E}_{\mathcal{H}_{T}}\sum_{t=t'+1}^{T}\sqrt{K(F(x_{t})-F(x^{\mathcal{H}^{t'}}))}).
			\end{aligned}
			\end{equation}
			It means that if the expected local regret is sub-linear, the expected local optimal mismatching rate has a  zero-approaching upper bound.
		\end{proof}
	\end{lemma}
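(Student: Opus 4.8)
The plan is to isolate the only two mechanisms that can make the played action $a_t$ differ from the locally optimal action $a_{d_t}^{x^l}$, namely \emph{exploration} (the policy not playing the greedy action of the current model) and \emph{estimation error} (the greedy action of the current model disagreeing with the locally optimal action), and to show both are controlled on average once the iterates settle near a local minimizer. First I would replace the minimum over $x^l\in X$ by the particular local minimizer $x^{\mathcal{H}_{t'}}$ supplied by the event $\mathfrak{E}^{t'}$, which only enlarges the count. I would then split $\sum_{t=1}^T\mathbf{1}(a_t\ne a_{d_t}^{x^{\mathcal{H}_{t'}}})$ into the first $t'$ rounds, each contributing at most $1$, and the tail $t>t'$; taking expectations over $\mathcal{H}_T$ and peeling off $a_t$, $r_{d_t,a_t}$, and the remaining future randomness reduces the tail to $\sum_{t>t'}\mathbb{E}_{d_t,\mathcal{H}_{t-1}}Pr(a_t\ne a_{d_t}^{x^{\mathcal{H}_{t'}}}\,|\,\mathcal{H}_{t-1},d_t)$.

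Next I would decompose each per-round probability according to whether $a_t=\arg\max_k f_k(d_t;x_t)$. On the event $\{a_t=\arg\max_k f_k(d_t;x_t)\}$ a mismatch forces $\arg\max_k f_k(d_t;x_t)\ne a_{d_t}^{x^{\mathcal{H}_{t'}}}$, i.e.\ an estimation error, whereas on the complement we have a pure exploration step. This yields an exploration term $\sum_{t>t'}\mathbb{E}\,Pr(a_t\ne\arg\max_k f_k(d_t;x_t))$ and an estimation-error term $\sum_{t>t'}\mathbb{E}\,Pr(a_{d_t}^{x^{\mathcal{H}_{t'}}}\ne\arg\max_k f_k(d_t;x_t))$, the former of which is already part of the local regret via Lemma \ref{uecr}.

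The main work, and the step I expect to be the obstacle, is bounding the estimation-error term. A union bound over the $K-1$ suboptimal actions reduces it to controlling $Pr(f_k(d_t;x_t)>f_{a_{d_t}^{x^{\mathcal{H}_{t'}}}}(d_t;x_t))$ for each $k$. I would center each output at its mean and note that $f_k$ can overtake the locally optimal output only if one of the two deviates from its mean by at least half the gap between their means. Two ingredients are essential here: by Assumption \ref{boundedslope} together with the closeness $\|x_t-x^{\mathcal{H}_{t'}}\|\le\tilde{\mathcal{O}}(1/t^{\mathcal{L}})$ guaranteed by $\mathfrak{E}^{t'}$, both the bias $|\mathbb{E}[f_k(x_t;d_t)-f_k(x^{\mathcal{H}_{t'}};d_t)]|$ and the variance $\mbox{VAR}(f_k(x_t;d_t))$ are $\tilde{\mathcal{O}}(\|x_t-x^{\mathcal{H}_{t'}}\|)$ and $\tilde{\mathcal{O}}(\|x_t-x^{\mathcal{H}_{t'}}\|^2)$ respectively; and by Assumption \ref{banditproof} the gap $\Delta$ between the top expected output and the rest, evaluated at $x^{\mathcal{H}_{t'}}$, is strictly positive. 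The delicate point is that for large $t$ the bias drops below $\Delta/2$, so the relevant deviation threshold stays bounded below by $\Delta/2$; Chebyshev's inequality then bounds each probability by $\tilde{\mathcal{O}}(\|x_t-x^{\mathcal{H}_{t'}}\|^2)$.

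Finally I would convert the distance bound into a regret bound. Since $F$ is $(\alpha,\gamma,\epsilon,\delta)$-strict saddle (Assumption \ref{non_convex_PCB}) and $x_t$ lies in the $2\delta$-neighborhood where $F$ is $\alpha$-strongly convex, strong convexity gives $\|x_t-x^{\mathcal{H}_{t'}}\|^2\le\tilde{\mathcal{O}}(|F(x_t)-F(x^{\mathcal{H}_{t'}})|)$, and using the crude bound $|F(x_t)-F(x^{\mathcal{H}_{t'}})|\le K$ I can rewrite this as $\tilde{\mathcal{O}}(\sqrt{K|F(x_t)-F(x^{\mathcal{H}_{t'}})|})$ to match the form of the local regret. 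Summing over $t$, the mismatch count is bounded by $\mathcal{O}(1)$ plus the exploration term plus $\tilde{\mathcal{O}}(\mathbb{E}_{\mathcal{H}_T}\sum_{t>t'}\sqrt{K(F(x_t)-F(x^{\mathcal{H}_{t'}}))})$, which together constitute the expected local regret. Dividing by $T$, the $\mathcal{O}(1)/T$ term vanishes and, under the hypothesis that the local regret is sub-linear, the $1/T$-normalized sum also tends to zero, yielding the claimed zero-approaching upper bound on the expected mismatching rate.
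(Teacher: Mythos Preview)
Your proposal is correct and follows essentially the same route as the paper's proof: fix $x^{\mathcal{H}_{t'}}$ in place of the minimizer, split off the first $t'$ rounds, decompose the tail probability into exploration versus estimation error, union-bound the estimation error over suboptimal actions, apply the bounded-gradient/Lipschitz argument together with Assumption~\ref{banditproof} and Chebyshev to get $\tilde{\mathcal{O}}(\|x_t-x^{\mathcal{H}_{t'}}\|^2)$, and finally convert via strong convexity and the crude bound $|F(x_t)-F(x^{\mathcal{H}_{t'}})|\le K$ into the $\sqrt{K(F(x_t)-F(x^{\mathcal{H}_{t'}}))}$ form. Your explicit remark linking the exploration term back to Lemma~\ref{uecr} is a nice touch that the paper leaves implicit.
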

	\begin{remark}
        \label{convex_condition}
        If $F(x)$ is a convex function, the expected local optimal mismatching rate is the upper bound of expected cumulative regret averaged over $T$.
		\begin{proof}
			If $F(x)$ is a convex function, we have $X^l=\{x^*\}$. Then we get
			\begin{align*}
			&\mathbb{E}_{\mathcal{H}_T}\big[\min_{x^l\in X^l}(\sum_{t=1}^{T}\frac{\mathbf {1}(a_{d_t}\ne a_{d_t}^{x^l})}{T})\big]\\
			=&\mathbb{E}_{\mathcal{H}_{T}}\big[\sum_{t=1}^{T}\frac{\mathbf {1}(a_t\ne a_{d_t}^{x^*})}{T}\big]\\
			=&\sum_{t=1}^{T}\mathbb{E}_{\mathcal{H}_{t}}\big[\frac{\mathbf {1}(a_t\ne a_{d_t}^{x^*})}{T}\big]\\
			=&\sum_{t=1}^{T}\mathbb{E}_{d_t,\mathcal{H}_{t-1},a_t}\big[\frac{\mathbf {1}(a_t\ne a_{d_t}^{x^*})}{T}\big]\\
			\ge&\sum_{t=1}^{T}\mathbb{E}_{d_t,\mathcal{H}_{t-1},a_t,r_{d_t,a_t},r_{d_t,a^*_{d_t}}}\big[\frac{(r_{d_t,a^*_{d_t}}-r_{d_t,a_t})}{T}\big]\\
			=&\sum_{t=1}^{T}\mathbb{E}_{\mathcal{H}_t}\big[\frac{\mathbb{E}_{r_{d_t,a^*_{d_t}}}r_{d_t,a^*_{d_t}}-r_{d_t,a_t}}{T}\big]\\
			=&\frac{1}{T}\mathbb{E}_{\mathcal{H}_T}\sum_{t=1}^{T}\big[\mathbb{E}_{r_{d_t,a^*_{d_t}}}r_{d_t,a^*_{d_t}}-r_{d_t,a_t}\big]
			\end{align*}
		\end{proof}
	\end{remark}
	\begin{assumption}
	\label{Noise_SGD}
	$\forall d\sim\mathcal{D}$, $\forall a\in\{1,2,\dots,K\}$ and $\forall x\in\mathbb{R}^{n_{x}}$, $\exists M\in\mathbb{R}_*^+$, $\|\nabla l_{d,a}(x)\|^2\le M^2$.
	\end{assumption}     
	\begin{lemma}   
		\label{boundg}
		Based on Assumption \ref{Noise_SGD} and Lemma \ref{unbiased_g}, in the $n^{th}$ round of the $s^{th}$ stage of SSGD-SCB, we have
		\begin{equation*}
		\begin{aligned}
		&\mathbb{E}\big[\tilde{g}_{I(s,n)}(x_{I(s,n)})+\mathcal{N}_s]=\nabla F(x_{I(s,n)}),\\
		&\|\tilde{g}_{I(s,n)}(x_{I(s,n)})-\nabla F(x_{I(s,n)})+\mathcal{N}_s\|^2=\mathcal{O}( s^{\kappa}),\\
		&\mathbb{E}\|\tilde{g}_{I(s,n)}(x_{I(s,n)})-\nabla F(x_{I(s,n)})+\mathcal{N}_s\|^2=\Theta( s^{\kappa}).
		\end{aligned}
		\end{equation*} 
		\begin{proof}
			In the $n^{th}$ round of the $s^{th}$ stage, with $a_{I(s,n)}\in\{1,2,\dots,K\}$, Lemma \ref{unbiased_g} shows that
			\begin{equation*}
			\begin{aligned}
			&\mathbb{E}{\tilde{g}}_{I(s,n)}(x_{I(s,n)})=\nabla F(x_{I(s,n)}).
			\end{aligned}
			\end{equation*}
			
			Then according to the definition of $\mathcal{N}_s$, we have
			\begin{equation*}
			\begin{aligned}
			&\mathbb{E}[\mathcal{N}_s]=0,\\
			&\|\mathcal{N}_s\|=\mathcal{N}_0s^{\frac{\kappa}{2}}.
			\end{aligned}
			\end{equation*}
			
			With $\pi_S(a_{I(s,n)}|d_{I(s,n)},\mathcal{H}_{I(s,n)-1})\geq 0.05s^{-\frac{\kappa}{2}} K^{-1}$ and $\|\nabla l_{d_{I(s,n)},a_{I(s,n)}}(x_{I(s,n)})\|^2\le M^2$	
			, we have
			\begin{equation*}
			\begin{aligned}
			\|\tilde{g}_{I(s,n)}(x_{I(s,n)})\|\le\frac{Ks^{\frac{\kappa}{2}}}{0.05}\|\nabla l_{I(s,n),a_{I(s,n)}}(x_{I(s,n)})\|=\mathcal{O}(s^{\frac{\kappa}{2}}).
			\end{aligned}
			\end{equation*}
			Therefore, we have 
			\begin{equation}
			\label{variance_gradient}
			\begin{aligned}
			&\|\tilde{g}_{I(s,n)}(x_{I(s,n)})-\nabla F(x_{I(s,n)})+\mathcal{N}_s\|\\
			\le&\|\tilde{g}_{I(s,n)}(x_{I(s,n)})\|+\|\nabla F(x_{I(s,n)})\|+\|\mathcal{N}_s\|\\
			\le&\|\tilde{g}_{I(s,n)}(x_{I(s,n)})\|+\|\sum_{a=1}^K\mathbb{E}_{d_{I(s,n)},r_{d_{I(s,n)},a}}\nabla l_{I(s,n),a_{I(s,n)}}(x_{I(s,n)})\|+\|\mathcal{N}_s\|\\
			=&\mathcal{O}( s^{\frac{\kappa}{2}}).
			\end{aligned}
			\end{equation}
			
			Considering that $\|\mathcal{N}_s\|=\mathcal{N}_0s^{\frac{\kappa}{2}}$ and $\mathbb{E}[\mathcal{N}_s]=0$, we have 
			
			\begin{equation*}
			\begin{aligned}
			&\mathbb{E}\|\tilde{g}_{I(s,n)}(x_{I(s,n)})-\nabla F(x_{I(s,n)})+\mathcal{N}_s\|^2\\
			=&\mathbb{E}\|\tilde{g}_{I(s,n)}(x_{I(s,n)})-\nabla F(x_{I(s,n)})\|^2+\mathbb{E}\|\mathcal{N}_s\|^2\\
			=&\mathbb{E}\|\tilde{g}_{I(s,n)}(x_{I(s,n)})\|^2-\mathbb{E}\|\nabla F(x_{I(s,n)})\|^2+\mathbb{E}\|\mathcal{N}_s\|^2\\
			=&\mathbb{E}_{d_{I(s,n)}}\mathbb{E}_{\mathcal{H}_{I(s,n)-1}}\sum_{a=1}^K\mathbb{E}_{r_{d_{I(s,n)},a}}\frac{\|\nabla l_{I(s,n),a}(x_{I(s,n)};r_{d_{I(s,n)},a})\|^2}{\pi_S(a|d_{I(s,n)},\mathcal{H}_{I(s,n)-1})}-\mathbb{E}\|\nabla F(x_{I(s,n)})\|^2+\mathbb{E}\|\mathcal{N}_s\|^2\\
			=&\Theta( s^{\kappa}).
			\end{aligned}
			\end{equation*} 
		\end{proof}
	\end{lemma}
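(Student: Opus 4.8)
The plan is to establish the three claims in sequence, drawing on the unbiasedness and variance control of Lemma~\ref{unbiased_g}, the uniform gradient bound of Assumption~\ref{Noise_SGD} ($\|\nabla l\|\le M$), and the two defining properties of the injected noise $\mathcal{N}_s$, namely $\mathbb{E}[\mathcal{N}_s]=0$ and the \emph{deterministic} magnitude $\|\mathcal{N}_s\|=\mathcal{N}_0 s^{\kappa/2}$. The first claim is immediate: combining Lemma~\ref{unbiased_g}(1), which gives $\mathbb{E}\,\tilde{g}_{I(s,n)}(x_{I(s,n)})=\nabla F(x_{I(s,n)})$, with $\mathbb{E}[\mathcal{N}_s]=0$ and linearity of expectation yields $\mathbb{E}[\tilde{g}_{I(s,n)}(x_{I(s,n)})+\mathcal{N}_s]=\nabla F(x_{I(s,n)})$.

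For the almost-sure bound of order $s^\kappa$, the idea is to apply the triangle inequality $\|\tilde{g}-\nabla F+\mathcal{N}_s\|\le\|\tilde{g}\|+\|\nabla F\|+\|\mathcal{N}_s\|$ and control each term. For $\|\tilde{g}\|$ I would use the inverse-propensity form $\tilde{g}=\pi^{-1}\nabla l$ together with the stage-dependent policy floor $\pi_S\ge 0.05\,s^{-\kappa/2}K^{-1}$ and Assumption~\ref{Noise_SGD}, giving $\|\tilde{g}\|\le 20KM\,s^{\kappa/2}=\mathcal{O}(s^{\kappa/2})$. The full-gradient norm is $\mathcal{O}(1)$, since $\nabla F=\mathbb{E}_d\sum_a\mathbb{E}_r\nabla l$ and the triangle inequality plus the uniform bound give $\|\nabla F\|\le KM$. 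The noise term contributes exactly $\mathcal{N}_0 s^{\kappa/2}$. Summing, the whole expression is $\mathcal{O}(s^{\kappa/2})$, and squaring yields $\mathcal{O}(s^\kappa)$.

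For the two-sided bound $\Theta(s^\kappa)$ on the expected squared norm, I would expand the square and isolate the noise. Because $\mathcal{N}_s$ is zero-mean and independent of the action/reward sampling that produces $\tilde{g}$, and because $\tilde{g}-\nabla F$ is itself zero-mean by unbiasedness, the cross term $2\,\mathbb{E}\langle\tilde{g}-\nabla F,\mathcal{N}_s\rangle$ vanishes, leaving $\mathbb{E}\|\tilde{g}-\nabla F\|^2+\mathbb{E}\|\mathcal{N}_s\|^2$. The first summand is the stochastic-gradient variance; by the identity $\mathbb{E}\|\tilde{g}-\nabla F\|^2=\mathbb{E}\|\tilde{g}\|^2-\|\nabla F\|^2$ and Lemma~\ref{unbiased_g}(2) together with the policy floor, it is only $\mathcal{O}(s^{\kappa/2})$, hence lower order. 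The second summand equals $\mathcal{N}_0^2 s^\kappa$ exactly. Thus the sum is upper bounded by $\mathcal{O}(s^\kappa)$ (consistent with the previous part) and lower bounded by $\mathcal{N}_0^2 s^\kappa=\Omega(s^\kappa)$, giving the claimed $\Theta(s^\kappa)$.

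I expect the main subtlety to be the cancellation of the cross term: it relies on $\mathcal{N}_s$ being sampled independently of the bandit randomness, so that the inner product factors into a product of two zero-mean expectations, and on the recognition that the tight $\Theta(s^\kappa)$ scaling is driven \emph{entirely} by the deterministic magnitude of the injected noise, while the stochastic-gradient variance is only $\mathcal{O}(s^{\kappa/2})$ and therefore negligible in the dominant order.
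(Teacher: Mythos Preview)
Your proposal is correct and follows essentially the same approach as the paper: the first claim via Lemma~\ref{unbiased_g}(1) and $\mathbb{E}[\mathcal{N}_s]=0$; the almost-sure bound via the triangle inequality with the policy floor $\pi_S\ge 0.05\,s^{-\kappa/2}K^{-1}$, Assumption~\ref{Noise_SGD}, and $\|\mathcal{N}_s\|=\mathcal{N}_0 s^{\kappa/2}$; and the $\Theta(s^\kappa)$ claim by expanding the square, dropping the cross term via independence and zero mean, and recognizing that $\mathbb{E}\|\mathcal{N}_s\|^2=\mathcal{N}_0^2 s^\kappa$ dominates while $\mathbb{E}\|\tilde g-\nabla F\|^2\ge 0$ secures the lower bound. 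Your observation that the stochastic-gradient variance is only $\mathcal{O}(s^{\kappa/2})$ (hence strictly lower order) is slightly sharper than what the paper spells out, but this is added clarity rather than a different argument.
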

    \begin{theorem}
    	\label{converagerate}
    	Under Assumptions \ref{non_convex_PCB} and \ref{Noise_SGD}, 
    	for any $\zeta\in(0,1)$, 
    	there exists a large enough stage $s'$
    	such that with probability at least $1-\zeta$, 
    	for $\forall s>s'$ and $\forall n\in \{1, 2,\dots,T_0 s^{2\upsilon}\}$, we have $\|x_{I(s,n)}-x^{s'}\|^{2}\le\mathcal{O}( s^{\kappa-\upsilon}\log \frac{1}{\eta_{s}\zeta})\le\delta$, where $x^{s'}$ is a local minimizer of $F(x)$.
		
		\begin{proof}
			The proof of Theorem \ref{converagerate} can be found in Appendix \ref{proof_lemma_nsgd}.
			
		\end{proof}
	\end{theorem}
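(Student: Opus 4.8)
The plan is to treat the SSGD-SCB update within stage $s$ as a perturbed stochastic gradient step $x_{I(s,n+1)} = x_{I(s,n)} - \eta_s(\tilde{g}_{I(s,n)}(x_{I(s,n)}) + \mathcal{N}_s)$ and to exploit the $(\alpha,\gamma,\epsilon,\delta)$-strict-saddle structure of $F$ granted by Assumption \ref{non_convex_PCB}. By Lemma \ref{boundg} the effective search direction $\tilde{g} + \mathcal{N}_s$ is an unbiased estimate of $\nabla F$ whose second moment is $\Theta(s^{\kappa})$, while the injected perturbation $\mathcal{N}_s$ is isotropic with magnitude $\mathcal{N}_0 s^{\kappa/2}$. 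This is precisely the noisy-SGD setting of \cite{ge2015escaping}, so the global half of the argument follows their strict-saddle framework, and the conclusion $\|x_{I(s,n)}-x^{s'}\|^2\le\delta$ is exactly the kind of statement that instantiates the local-convergence event $\mathfrak{E}^{t'}$ of Definition \ref{local optimal convergence}.

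First I would partition the state space according to Definition \ref{ssp}: points with $\|\nabla F\|\ge\epsilon$, points where the minimum Hessian eigenvalue is below $\gamma$ (saddle regions), and points inside the $2\delta$ strongly-convex basin of some local minimizer. Using $L$-smoothness (Definition \ref{smooth_definition}) I would show that in the large-gradient regime a single noisy step decreases $F$ in expectation by a fixed amount; using the isotropic component $\mathcal{N}_s$ together with the $\rho$-Lipschitz Hessian I would show that in a saddle region the iterate escapes along the negative-curvature direction and again decreases $F$ with high probability. Since $F$ is bounded, only finitely many stages can undergo such decrease, so there exists a stage $s'$ after which, with probability at least $1-\zeta/2$, every iterate lies in the strongly-convex basin of a single local minimizer $x^{s'}$.

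Inside that basin I would run a standard strongly-convex SGD recursion. Expanding $\|x_{I(s,n+1)}-x^{s'}\|^2$ and using $\langle\nabla F(x), x-x^{s'}\rangle\ge\alpha\|x-x^{s'}\|^2$ (from $\alpha$-strong convexity in Definition \ref{storngly_convex_definition} and $\nabla F(x^{s'})=0$) gives the contraction $\mathbb{E}\|x_{I(s,n+1)}-x^{s'}\|^2\le(1-2\alpha\eta_s)\mathbb{E}\|x_{I(s,n)}-x^{s'}\|^2 + \eta_s^2\,\Theta(s^{\kappa})$. With the stage step size $\eta_s=\Theta(s^{-\upsilon})$ and $T_0 s^{2\upsilon}$ rounds per stage, the contraction over a full stage is $(1-\Theta(s^{-\upsilon}))^{T_0 s^{2\upsilon}}=\exp(-\Theta(s^{\upsilon}))$, which annihilates the burn-in term and leaves the stationary level $\Theta(\eta_s s^{\kappa})=\Theta(s^{\kappa-\upsilon})$. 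To upgrade this expectation bound to the stated high-probability bound, I would treat the centered noise terms as a martingale difference sequence, bounded by $\mathcal{O}(s^{\kappa/2})$ via Assumption \ref{Noise_SGD} and the propensity lower bound used in Lemma \ref{boundg}, and apply a sub-exponential / Azuma-type concentration with a union bound over the $T_0 s^{2\upsilon}$ rounds; this yields the factor $\log\frac{1}{\eta_s\zeta}$ and a failure probability $\zeta/2$, which combines with the global phase to give the overall $1-\zeta$ guarantee.

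The main obstacle is the saddle-escaping phase: quantifying how the isotropic noise $\mathcal{N}_s$ accumulates along the negative-curvature direction quickly enough to guarantee escape before the shrinking step size $\eta_s$ stalls progress, and then gluing this global phase to the local phase so that the guarantee holds \emph{uniformly} over all $s>s'$ and all $n$, not merely within a single stage. The delicate bookkeeping lies at the stage boundaries, where both $\eta_s$ and $\|\mathcal{N}_s\|=\mathcal{N}_0 s^{\kappa/2}$ change: one must verify that the iterate never leaves the $\delta$-ball around $x^{s'}$ as the schedule is refreshed, which is the technical core that the deferred proof in Appendix \ref{proof_lemma_nsgd} must control.
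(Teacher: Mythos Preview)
Your high-level plan mirrors the paper's two-phase argument (a Ge-et-al.\ style strict-saddle phase followed by a strongly-convex confinement phase via a supermartingale plus Azuma), and you have correctly identified the stage-boundary bookkeeping as the crux of Lemma~\ref{local minimum}. Two points deserve sharpening, however.

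First, the sentence ``since $F$ is bounded, only finitely many stages can undergo such decrease, so there exists a stage $s'$ after which \ldots\ every iterate lies in the strongly-convex basin'' conflates two things the paper keeps separate. The global phase is \emph{within-stage}, not cross-stage: one fixes a single large stage $s'$, and because $\eta_{s'}$ and $\sigma_{s'}$ are constant throughout that stage, the bounded-$F$ argument of Lemma~\ref{l11} forces the iterate to hit the region $\mathcal{L}_{s',3}$ within $\Theta(s'^{2\upsilon-\kappa}\log\tfrac{2}{\zeta})$ steps, which fits inside the stage length $T_0 s'^{2\upsilon}$. A cross-stage telescoping argument would be problematic precisely because the per-step decrease $\Omega(\eta_s^2\sigma_s^2)=\Omega(s^{\kappa-2\upsilon})$ shrinks with $s$, so ``finitely many stages'' does not follow from boundedness of $F$ alone. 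Moreover, the global phase only guarantees \emph{entry} into the basin with $\|\nabla F(x_{s',T_{s',1}})\|\le\mathcal{O}(s'^{(\kappa-\upsilon)/2})$; the statement that every subsequent iterate remains there is the \emph{conclusion} of the local phase, not its hypothesis.

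Second, your saddle-escaping sketch is right in spirit but the paper's execution (Lemmas~\ref{secondorderexpansion}--\ref{l10}) is more delicate than a one-line ``accumulate along the negative-curvature direction'': it couples the true iterates to a quadratic model $\tilde F$, controls $\|x_{s,n+t}-\tilde x_{s,n+t}\|$ and $\|\nabla F-\nabla\tilde F\|$ via a supermartingale, and only then extracts the $\mathcal{O}(-s^{\kappa-\upsilon})$ expected drop over an epoch of length $\Theta(s^{\upsilon})$. If you intend to fill in this part yourself rather than cite \cite{ge2015escaping} verbatim, that coupling is where the $\rho$-Lipschitz Hessian and the $\log\tfrac{1}{\eta_s}$ factors actually enter.
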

	\begin{lemma} 
		\label{boundoutput}
		In SSGD-SCB, if $\kappa-\upsilon<0$ and 
		\begin{equation*}
		\|x_{I(s,n)}-x^{s'}\|^2\leq\mathcal{O}(s^{\kappa-\upsilon}\log \frac{1}{\eta_{s}\zeta}),
		\end{equation*}
		under Assumption \ref{boundedslope}, for any action a and context d, the expectation and the variance of $f_a(x_{I(s,n)};d)$ can be bounded by
		\begin{equation*}
		\begin{aligned}
		&|\mathbb{E}\left[f_a(x_{I(s,n)};d)-f_a(x^{s'};d)|d\right]|
		\le \mathcal{O}(s^{\frac{\kappa-\upsilon}{2}}\log^{0.5} \frac{1}{\eta_{s}\zeta}),\\
		&\mbox{VAR}(f_a(x_{I(s,n)};d)|d)\le \mathcal{O}(s^{\kappa-\upsilon}\log \frac{1}{\eta_{s}\zeta}).
		\end{aligned}
		\end{equation*}       
		\begin{proof}
			In round $t\in\{1,2,\dots,T\}$, for any action $a$,
			according to Mean Value Inequality, we have 
			\begin{align*}
			&\|f_a(x_{I(s,n)};d)-f_a(x^{s'};d)\|^2\\
			\le&\|\int _{0}^{1}\nabla f_a(x_{I(s,n)}+h(x_{I(s,n)}-x^{s'});d)\,dh\|^2\| x_{I(s,n)}-x^{s'}\|^2\\
			\le&(\int _{0}^{1}\|\nabla f_a(x_{I(s,n)}+h(x_{I(s,n)}-x^{s'});d)\,\|dh)^2\| x_{I(s,n)}-x^{s'}\|^2,
			\end{align*}
			where $h\in[0,1]$.
			
			Then we have:
			\begin{equation}
			\label{Distance_t_star_2}
			\begin{aligned}
			&\mathbb{E}\big[\|f_a(x_{I(s,n)};d)-f_a(x^{s'};d)\|^2\,|\,d\big]\\
			\le&\mathbb{E}[(\int _{0}^{1}\|\nabla f_a(x_{I(s,n)}+h(x_{I(s,n)}-x^{s'});d)\,\|dh)^2\| x_{I(s,n)}-x^{s'}\|^2\,|\,d]\\
			\le&M_f\mathcal{O}(s^{\kappa-\upsilon}\log \frac{1}{\eta_{s}\zeta})\\
			=&\mathcal{O}(s^{\kappa-\upsilon}\log \frac{1}{\eta_{s}\zeta}).
			\end{aligned}  
			\end{equation}
			
			Eq. (\ref{Distance_t_star_2}) means that
			\begin{equation*}
			\begin{aligned}
			&\mathbb{E}\big[\|f_a(x_{I(s,n)};d)-f_a(x^{s'};d)\|^2\,|\,d\big]\\
			=&|\mathbb{E}\left[f_a(x_{I(s,n)};d)-f_a(x^{s'};d)\,|\,d\right]|^2+\mbox{VAR}\big[f_a(x_{I(s,n)};d)-f_a(x^{s'};d)\,|\,d\big]\\
			\le&\mathcal{O}(s^{\kappa-\upsilon}\log \frac{1}{\eta_{s}\zeta}),
			\end{aligned}
			\end{equation*}
			which yields 
			\begin{equation}
			\label{output_c1}
			|\mathbb{E}\left[f_a(x_{I(s,n)};d)-f_a(x^{s'};d)\,|\,d\right]|
			\le \mathcal{O}(s^{\frac{\kappa-\upsilon}{2}}\log^{0.5} \frac{1}{\eta_{s}\zeta}).
			\end{equation}
			
			Besides, Eq. (\ref{Distance_t_star_2}) also shows that 
			\begin{equation}
			\label{output_c2}
			\begin{aligned}
			&\mathbb{E}\big[\|f_a(x_{I(s,n)};d)-f_a(x^{s'};d)\|^2\,|\,d\big]\\
			=&\mathbb{E}[f_a(x_{I(s,n)};d)^2\,|\,d]+\mathbb{E}[f_a(x^{s'};d)^2\,|\,d]-2\mathbb{E}[f_a(x_{I(s,n)};d)f_a(x^{s'};d)\,|\,d]\\
			=&\mathbb{E}[f_a(x_{I(s,n)};d)^2\,|\,d]+f_a(x^{s'};d)^2-2\mathbb{E}[f_a(x_{I(s,n)};d)\,|\,d]f_a(x^{s'};d)\\
			\le& \mathcal{O}(s^{\kappa-\upsilon}\log \frac{1}{\eta_{s}\zeta}).
			\end{aligned}
			\end{equation}
			
			Based on Eq. (\ref{output_c2}), the variance of $f_a(x_{I(s,n)};d)$ can be bounded by
			\begin{equation}
			\label{output_c3}
			\begin{aligned}
			&\mbox{VAR}\left[f_a(x_{I(s,n)};d)\,|\,d\right]\\
			=&\mathbb{E}[f_a(x_{I(s,n)};d)^2\,|\,d]-\mathbb{E}[f_a(x_{I(s,n)};d)\,|\,d]^2\\
			\le&2\mathbb{E}[f_a(x_{I(s,n)};d)\,|\,d]f_a(x^{s'};d)-f_a(x^{s'};d)^2-\mathbb{E}[f_a(x_{I(s,n)};d)\,|\,d]^2+\mathcal{O}(s^{\kappa-\upsilon}\log \frac{1}{\eta_{s}\zeta})\\
			\le& \mathcal{O}(s^{\kappa-\upsilon}\log \frac{1}{\eta_{s}\zeta}).
			\end{aligned}
			\end{equation}
			With Eqs. (\ref{output_c2}) and (\ref{output_c3}), we complete the proof of Lemma \ref{boundoutput}. 
		\end{proof}
	\end{lemma}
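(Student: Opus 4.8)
The plan is to reduce both claimed bounds to a single conditional second-moment estimate on the output difference $f_a(x_{I(s,n)};d)-f_a(x^{s'};d)$, obtained by transferring the parameter-space displacement bound through the bounded-gradient assumption. First I would express the output difference as a line integral of the gradient along the segment joining $x^{s'}$ and $x_{I(s,n)}$: writing $g(h)=f_a(x^{s'}+h(x_{I(s,n)}-x^{s'});d)$, the fundamental theorem of calculus gives $f_a(x_{I(s,n)};d)-f_a(x^{s'};d)=\big(\int_0^1\nabla f_a(x^{s'}+h(x_{I(s,n)}-x^{s'});d)\,dh\big)^{T}(x_{I(s,n)}-x^{s'})$. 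Applying Cauchy--Schwarz to this inner product and then the triangle inequality $\|\int_0^1\nabla f_a\,dh\|\le\int_0^1\|\nabla f_a\|\,dh$ yields the pointwise bound $\|f_a(x_{I(s,n)};d)-f_a(x^{s'};d)\|^2\le\big(\int_0^1\|\nabla f_a(x^{s'}+h(x_{I(s,n)}-x^{s'});d)\|\,dh\big)^2\|x_{I(s,n)}-x^{s'}\|^2$, which is the mean-value-inequality step opening the proof.

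Next I would invoke Assumption \ref{boundedslope} so that $\|\nabla f_a\|\le\sqrt{M_f}$ pointwise; hence the integral factor is at most $M_f$, and I take the conditional expectation over the history (equivalently over the randomness carried into $x_{I(s,n)}$) given $d$. Substituting the hypothesis $\|x_{I(s,n)}-x^{s'}\|^2\le\mathcal{O}(s^{\kappa-\upsilon}\log\frac{1}{\eta_s\zeta})$ produces the key estimate $\mathbb{E}\big[\|f_a(x_{I(s,n)};d)-f_a(x^{s'};d)\|^2\,|\,d\big]\le\mathcal{O}(s^{\kappa-\upsilon}\log\frac{1}{\eta_s\zeta})$, which is exactly Eq. (\ref{Distance_t_star_2}). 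Throughout this step $x^{s'}$, and hence $f_a(x^{s'};d)$, is treated as deterministic given $d$.

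Both bounds then follow from this one estimate by the standard mean--variance split. For the expectation, Jensen's inequality gives $|\mathbb{E}[f_a(x_{I(s,n)};d)-f_a(x^{s'};d)\,|\,d]|^2\le\mathbb{E}\big[\|f_a(x_{I(s,n)};d)-f_a(x^{s'};d)\|^2\,|\,d\big]$, and taking square roots delivers the $\mathcal{O}(s^{\frac{\kappa-\upsilon}{2}}\log^{0.5}\frac{1}{\eta_s\zeta})$ bound. For the variance, since $f_a(x^{s'};d)$ is constant given $d$ we have $\mbox{VAR}(f_a(x_{I(s,n)};d)\,|\,d)=\mbox{VAR}(f_a(x_{I(s,n)};d)-f_a(x^{s'};d)\,|\,d)$, and because a variance never exceeds the corresponding second moment this is again bounded by $\mathcal{O}(s^{\kappa-\upsilon}\log\frac{1}{\eta_s\zeta})$, completing the argument.

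The computation is largely routine; the one place needing care is the bookkeeping of the expectation. The displacement bound in the hypothesis comes from Theorem \ref{converagerate} and holds on a high-probability ($1-\zeta$) event, so strictly speaking the conditional expectations above should be understood on that event (or the conclusion rephrased as holding with probability $1-\zeta$). I would therefore make explicit that the conditioning is taken only over the stochasticity of $x_{I(s,n)}$ through $\mathcal{H}$, while $d$, $x^{s'}$, and $f_a(x^{s'};d)$ are held fixed, so that the mean--variance decomposition of the second moment is applied to the single random quantity $f_a(x_{I(s,n)};d)$. No probabilistic machinery beyond the mean value inequality, Cauchy--Schwarz, the triangle inequality for integrals, and Jensen is required.
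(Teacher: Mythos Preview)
Your proposal is correct and follows essentially the same approach as the paper: both use the mean-value/line-integral representation of $f_a(x_{I(s,n)};d)-f_a(x^{s'};d)$, bound the gradient factor via Assumption~\ref{boundedslope}, and then read off both conclusions from the single second-moment bound $\mathbb{E}\big[\|f_a(x_{I(s,n)};d)-f_a(x^{s'};d)\|^2\mid d\big]\le\mathcal{O}(s^{\kappa-\upsilon}\log\frac{1}{\eta_s\zeta})$. The only cosmetic difference is that for the variance you invoke $\mbox{VAR}(X\mid d)=\mbox{VAR}(X-c\mid d)\le\mathbb{E}[(X-c)^2\mid d]$ directly, whereas the paper expands the square and rearranges (its Eqs.~(\ref{output_c2})--(\ref{output_c3})); your route is slightly cleaner but equivalent.
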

    \begin{theorem}
    \label{pcbsgdconclusion}
        Under Assumptions 2-5, if $\mathcal{S}$ is large enough,
        with probability at least $1-\zeta$,
        the expected local optimal mismatching rate is bounded by
    	\begin{equation*}
    	    \begin{aligned}
    	    &\textstyle \mathbb{E}
    	    \big[\min_{x^l\in X}(\sum_{s=1}^{\mathcal{S}}\sum_{n=1}^{T_0s^2}\frac{\mathbf {1}(a_{I(s,n)}\ne a^{x^l}_{d_{I(s,n)}})}{\sum_{s=1}^\mathcal{S}T_0s^2})\big]
            \le\tilde{\mathcal{O}}(\frac{1}{\mathcal{S}^\Lambda}),\\
            &\textstyle \Lambda=\min(\frac{\kappa}{2},\omega,(1-2\beta)(1+2\upsilon),(2\beta-1)(1+2\upsilon)+\kappa-\upsilon).
    	    \end{aligned}
    	\end{equation*}
    	where $\beta\in(0,0.5)$, $\upsilon\ge 1$, $\kappa\in(0,\upsilon)$, and $\omega>0$.
		\begin{proof}
			\label{incorrectB}
			When $C\ne0$, although we can get the upper bound of the expected mismatching rate by calculating the upper bound of the local cumulative regret, as shown in Lemma \ref{lowerbound_local_optimal}, we directly calculate the upper bound of the expected mismatching rate.
			According to Theorem \ref{converagerate}, we have $Pr(\mathfrak{E}^{I(s',T_{s'})})\ge 1-\zeta$. Conditioned on $\mathfrak{E}^{I(s',T_{s'})}$, we have
			\begin{equation*}
			\begin{aligned}
			&\mathbb{E}_{\mathcal{H}_{I(\mathcal{S},T_{\mathcal{S}})}}\big[\min_{x^l\in X}(\sum_{s=1}^{\mathcal{S}}\sum_{n=1}^{T_0s^{2\upsilon}}\frac{\mathbf {1}(a_{I(s,n)}\ne a^{x^l}_{d_{I(s,n)}})}{\sum_{s=1}^\mathcal{S}T_0s^{2\upsilon}})\big]\\
			\le&\mathbb{E}_{\mathcal{H}_{I(\mathcal{S},T_{\mathcal{S}})}}\big[\sum_{s=1}^{\mathcal{S}}\sum_{n=1}^{T_0s^{2\upsilon}}\frac{\mathbf {1}(a_{I(s,n)}\ne a^{x^{\mathcal{H}_{I(s',T_{s'})}}}_{d_{I(s,n)}})}{\sum_{s=1}^\mathcal{S}T_0s^{2\upsilon}}\big]\\
			\end{aligned}
			\end{equation*}
			For the sake of simplicity, we simplify $a^{x^{\mathcal{H}_{I(s',T_{s'})}}}_{d_{I(s,n)}}$ as $a_{I(s,n)}^l$. Then we have 
			\begin{equation*}
			\begin{aligned}
			&\sum_{s=1}^{\mathcal{S}}\sum_{n=1}^{T_0s^{2\upsilon}}\mathbb{E}\big[\frac{\mathbf {1}(a_{I(s,n)}\ne a_{I(s,n)}^l)}{\sum_{s=1}^\mathcal{S}T_0s^{2\upsilon}}\big]\\
			\le&\frac{\sum_{s=1}^{s'}T_0s^{2\upsilon}}{\sum_{s=1}^\mathcal{S}T_0s^{2\upsilon}}+\frac{\sum_{s=s'+1}^{\mathcal{S}}\sum_{n=1}^{T_0s^{2\upsilon}}(Pr(a_{I(s,n)}\ne a_{I(s,n)}^l)}{\sum_{s=1}^\mathcal{S}T_0s^{2\upsilon}}
			\end{aligned}
			\end{equation*}
			
			In the above equation, $Pr(a_{I(s,n)}\ne a_{I(s,n)}^l)$ can be bounded by 
			\begin{equation*}
			\begin{aligned}
			&Pr(a_{I(s,n)}\ne a_{I(s,n)}^l)\\
			\le&Pr(a_{I(s,n)}\ne a_{I(s,n)}^l,a_{I(s,n)}\ne \arg\max_i U_{s,n,i})+Pr(a_{I(s,n)}\ne a_{I(s,n)}^l,a_{I(s,n)}= \arg\max_i U_{s,n,i})\\
			\le &Pr(a_{I(s,n)}\ne a_{I(s,n)}^l,a_{I(s,n)}\ne \arg\max_i U_{s,n,i})+Pr(a_{I(s,n)}\ne a_{I(s,n)}^l\,|\,a_{I(s,n)}=\arg\max_i U_{s,n,i}).
			\end{aligned}
			\end{equation*}
			
			
			
			First, we give the upper bound of $Pr(a_{I(s,n)}\ne a_{I(s,n)}^l,a_{I(s,n)}\ne \arg\max_i U_{s,n,i})$:
			\begin{equation}
			\label{final_second_part}
			\begin{aligned}
			&Pr(a_{I(s,n)}\ne a_{I(s,n)}^l,a_{I(s,n)}\ne \arg\max_i U_{s,n,i})\\
			\le&\frac{0.05s^{\frac{-\kappa}{2}}}{K}+\frac{1}{\frac{max_i(U_{s,n,i})s^\omega}{max_{i\in\{1,2,\dots,K\}\backslash\arg\max_i(U_{s,n,i})}(U_{s,n,i})}}\\
			\le&\frac{0.05s^{\frac{-\kappa}{2}}}{K}+\frac{1}{s^\omega}=\mathcal{O}(s^{max(\frac{-\kappa}{2},-\omega)})\\
			\end{aligned}
			\end{equation}
			
			We then bound $Pr(a_{I(s,n)}\ne a_{I(s,n)}^l\,|\,a_{I(s,n)}=\arg\max_i U_{s,n,i})$.
			
			Based on the conclusion of Lemma \ref{boundoutput}, conditioned on $\mathfrak{E}^{s'}$,
			for $s> s'$ and $\forall a\in\{1,2,\dots,K\}$, we have
			\begin{equation*}
			\begin{aligned}
			&|\mathbb{E}\big[f_a(x_{I(s,n)};d)-f_a(x^{s'};d)\,|\,d\big]|\le \tilde{\mathcal{O}}(s^{\frac{\kappa-\upsilon}{2}});\\
			&\mbox{VAR}\left[f_a(x_{I(s,n)};d)\,|\,d\right]\le\tilde{\mathcal{O}}(s^{\kappa-\upsilon}),
			\end{aligned}
			\end{equation*}
			We define $\Delta$ as 
			\begin{equation}
			\label{Delta}
			\Delta=min_{s,n,a\ne a_{d_{I(s,n)}}^l}(\mathbb{E}\big[f_{a_{I(s,n)}^l}(d_{I(s,n)};x^{s'})\big]-\mathbb{E}\big[f_a(d_{I(s,n)};x^{s'})\big]).
			\end{equation}
			According to Assumption \ref{banditproof}, $\Delta>0$.
			
			For the sake of simplicity, let
			\begin{equation*}
			\begin{aligned}
			e_{s,n,a}=C\frac{(\sum_{k=1}^KN_{I(s,n),k,c_{s,n}})^{\beta}}{\sqrt{N_{I(s,n),a,c_{s,n}}}},
			\end{aligned}
			\end{equation*}
			Then we have 
			\begin{equation}
			\label{final_most_important}
			\begin{aligned}
			&Pr(a_{I(s,n)}\ne a_{s,n}^l\,|\,a_{I(s,n)}=\arg\max_i U_{s,n,i})\\
			\le&\sum_{k\in\{1,\dots,K\}/\{a^l_{I(s,n)}\}}Pr(f_k(d_{I(s,n)};x_{I(s,n)})+e_{s,n,k}> f_{a^l_{I(s,n)}}(d_{I(s,n)};x_{I(s,n)})+e_{p,n,a^l_{I(s,n)}})\\
			=&\sum_{k\in\{1,\dots,K\}/\{a^l_{I(s,n)}\}}Pr(f_k(d_{I(s,n)};x_{I(s,n)})-\mathbb{E}f_k(d_{I(s,n)};x_{I(s,n)})+\mathbb{E}f_k(d_{I(s,n)};x_{I(s,n)})-e_{s,n,k}\\
			&+2e_{s,n,k}> f_{a^l_{I(s,n)}}(d_{I(s,n)};x_{I(s,n)})-\mathbb{E}f_{a^l_{I(s,n)}}(d_{I(s,n)};x_{I(s,n)})+\mathbb{E}f_{a^l_{I(s,n)}}(d_{I(s,n)};x_{I(s,n)})+e_{s,n,a^l_{I(s,n)}})\\
			\le&\sum_{k\in\{1,\dots,K\}/\{a^l_{I(s,n)}\}}\big[Pr(f_k(d_{I(s,n)};x_{I(s,n)})-\mathbb{E}f_k(d_{I(s,n)};x_{I(s,n)})> e_{s,n,k})\\
			&+Pr(2e_{s,n,k}> \mathbb{E}f_{a^l_{I(s,n)}}(d_{I(s,n)};x_{I(s,n)})-\mathbb{E}f_k(d_{I(s,n)};x_{I(s,n)}))\\
			&+Pr(f_{a^l_{I(s,n)}}(d_{I(s,n)};x_{I(s,n)})-\mathbb{E}f_{a^l_{I(s,n)}}(d_{I(s,n)};x_{I(s,n)})< -e_{s,n,a^l_{I(s,n)}})\big].
			\end{aligned}
			\end{equation}
			
			In order to bound Eq. (\ref{final_most_important}) , for $k\in\{1,2,\dots,K\}$, we first bound $Pr(f_k(d_{I(s,n)};x_{I(s,n)})-\mathbb{E}f_k(d_{I(s,n)};x_{I(s,n)})\ge e_{s,n,k})$. Following the conclusion of Lemma \ref{boundoutput} and Chebyshev's Inequality, we can get an upper bound:
			\begin{equation}
			\label{Chebyshev_Inequality}
			\begin{aligned}
			&Pr(f_k(d_{I(s,n)};x_{I(s,n)})-\mathbb{E}f_k(d_{I(s,n)};x_{I(s,n)})> e_{s,n,k})\\
			=&\mathbb{E}_{d_{I(s,n)}}Pr(f_k(d_{I(s,n)};x_{I(s,n)})-\mathbb{E}f_k(d_{I(s,n)};x_{I(s,n)})\ge e_{s,n,k}\,|\,d_{I(s,n)})\\
			\le&\frac{\tilde{\mathcal{O}}(s^{\kappa-\upsilon})N_{I(s,n),k,c_{s,n}}}{(\sum_{k'=1}^KN_{I(s,n),k',c_{s,n}})^{2\beta}}\le\tilde{\mathcal{O}}(s^{\kappa-\upsilon})(\sum_{s=1}^{\mathcal{S}}T_0s^{2})^{1-2\beta}\\
			\le&\tilde{\mathcal{O}}((\mathcal{S}^{2\upsilon+1})^{1-2\beta}s^{\kappa-\upsilon})\\
			\le&\tilde{\mathcal{O}}(\mathcal{S}^{(2\upsilon+1)(1-2\beta)}s^{\kappa-\upsilon}).
			\end{aligned}    
			\end{equation}
			Similar to Eq. (\ref{Chebyshev_Inequality}), we have $Pr(f_{a^l_{I(s,n)}}(d_{I(s,n)};x_{I(s,n)})-\mathbb{E}f_{a^l_{I(s,n)}}(d_{I(s,n)};x_{I(s,n)})< -e_{s,n,a^l_{I(s,n)}})\le\tilde{\mathcal{O}}(\mathcal{S}^{(2\upsilon+1)(1-2\beta)}s^{\kappa-\upsilon})$.
			As for $Pr(2e_{s,n,k}> \mathbb{E}f_{a^l_{I(s,n)}}(d_{I(s,n)};x_{I(s,n)})-\mathbb{E}f_k(d_{I(s,n)};x_{I(s,n)}))$, with $k\ne a^l_{I(s,n)}$, $s>s'$ and $n\in[T_0 s^{2\upsilon}]$, we have 
			\begin{equation*}
			\begin{aligned}
			&Pr(2e_{s,n,k}> \mathbb{E}f_{a^l_{I(s,n)}}(d_{I(s,n)};x_{I(s,n)})-\mathbb{E}f_k(d_{I(s,n)};x_{I(s,n)}))\\
			\le&\mathbb{E}_{d_{I(s,n)}\sim\mathcal{D}}Pr(2e_{s,n,k}> \mathbb{E}f_{a^l_{I(s,n)}}(d_{I(s,n)};x_{I(s,n)})-\mathbb{E}f_k(d_{I(s,n)};x_{I(s,n)})\,|\,d_{I(s,n)})\\
			\le&\mathbb{E}_{d_{I(s,n)}\sim\mathcal{D}}Pr(2e_{s,n,k}> \mathbb{E}f_{a^l_{I(s,n)}}(d_{I(s,n)};x^{s'})-\mathbb{E}f_k(d_{I(s,n)};x^{s'})-\tilde{\mathcal{O}}(s^{\frac{\kappa-\upsilon}{2}})\,|\,d_{I(s,n)})\\
			\le&\mathbb{E}_{d_{I(s,n)}\sim\mathcal{D}}Pr(2C\frac{(\sum_{k'=1}^KN_{I(s,n),k',c_{s,n}})^{\beta}}{\sqrt{N_{I(s,n),k,c_{s,n}}}}> \Delta-\tilde{\mathcal{O}}(s^{\frac{\kappa-\upsilon}{2}})\,|\,d_{I(s,n)})\\
			\le&\mathbb{E}_{d_{I(s,n)}\sim\mathcal{D}}Pr(N_{I(s,n),k,c_{s,n}}<\frac{4C^2(\sum_{k'=1}^KN_{I(s,n),k',c_{s,n}})^{\beta}}{(\Delta-\tilde{\mathcal{O}}(s^{\frac{\kappa-\upsilon}{2}}))^2}\,|\,d_{I(s,n)})\\
			\le&\mathbb{E}_{d_{I(s,n)}\sim\mathcal{D}}Pr(N_{I(s,n),a,c_{s,n}}<\frac{\tilde{\mathcal{O}}(\mathcal{S}^{\beta (2\upsilon+1)})}{(\Delta-\tilde{\mathcal{O}}(s^{\frac{\kappa-\upsilon}{2}}))^2}\,|\,d_{I(s,n)}).
			\end{aligned}
			\end{equation*}
			There exists a stage $s_1$ such that when $s\ge s_1$, $(\Delta-\tilde{\mathcal{O}}(s^{\frac{\kappa-\upsilon}{2}}))^2\ge\frac{\Delta^2}{4}$. It means that when \begin{equation}
			\label{lowerbound_n_final}
			N_{I(s,n),a,c_{s,n}}>\tilde{\mathcal{O}}(\mathcal{S}^{\beta (2\upsilon+1)}),
			\end{equation}
			we have 
			\begin{equation*}
			Pr(2e_{s,n,k}> \mathbb{E}f_{a^l_{I(s,n)}}(d_{I(s,n)};x_{I(s,n)})-\mathbb{E}f_k(d_{I(s,n)};x_{I(s,n)}))=0,
			\end{equation*}
			which yields
			\begin{equation}
			\label{final_first_part}
			Pr(a_{I(s,n)}\ne a_{I(s,n)}^l\,|\,a_{I(s,n)}=\arg\max_i U_{s,n,i})\le\tilde{\mathcal{O}}(\mathcal{S}^{(2\upsilon+1)(1-2\beta)}s^{\kappa-\upsilon}).
			\end{equation}
			
			Let $s>max(s_1,s')$. With Eqs. (\ref{final_second_part}), (\ref{lowerbound_n_final}) and (\ref{final_first_part}), we have:
			\begin{equation}
			\label{final_conclusion}
			\begin{aligned}
			&\sum_{s=1}^{\mathcal{S}}\sum_{n=1}^{T_0s^{2\upsilon}}\mathbb{E}\big[\frac{\mathbf {1}(a_{I(s,n)}\ne a_{I(s,n)}^l)}{\sum_{s=1}^\mathcal{S}T_0s^{2\upsilon}}\big]\\
			\le&\frac{\sum_{s=1}^{max(s_1,s')}T_0s^{2\upsilon}}{\sum_{s=1}^\mathcal{S}T_0s^{2\upsilon}}+\sum_{s=max(s_1,s')+1}^{\mathcal{S}}\sum_{n=1}^{T_0s^{2\upsilon}}\frac{Pr(a_{I(s,n)}\ne a_{I(s,n)}^l)}{\sum_{s=1}^\mathcal{S}T_0s^{2\upsilon}}\\
			\le&\mathcal{O}(\mathcal{S}^{max(\frac{-\kappa}{2},-\omega)})+\frac{\tilde{\mathcal{O}}(\mathcal{S}^{\beta (2\upsilon+1)})}{\sum_{s=1}^\mathcal{S}T_0s^{2\upsilon}}+\sum_{s=max(s_1,s')+1}^{\mathcal{S}}\sum_{n=1}^{T_0s^{2\upsilon}}\frac{\tilde{\mathcal{O}}(\mathcal{S}^{(2\upsilon+1)(1-2\beta)}s^{\kappa-\upsilon})}{\sum_{s=1}^\mathcal{S}T_0s^{2\upsilon}}\\
			\le&\tilde{\mathcal{O}}(\frac{1}{\mathcal{S}^{\min(\frac{\kappa}{2},(1-2\beta)(1+2\upsilon),(2\beta-1)(1+2\upsilon)-\kappa+\upsilon)}}).\\
			\end{aligned}
			\end{equation}
			When $C=0$, $Pr(a_{I(s,n)}\ne a_{s,n}^l\,|\,a_{I(s,n)}=\arg\max_i U_{s,n,i})$ can be bounded following the proof of Eqs. (\ref{non_convex_upperbound_p31}), (\ref{l5e1}) and (\ref{c0bound}), then we can get the upper bound $\mathcal{O}(\frac{1}{\mathcal{S}^{\min(\frac{\kappa}{2},\upsilon-\kappa)}})$.
		\end{proof}
	\end{theorem}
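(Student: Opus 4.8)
The plan is to condition on the high-probability convergence guarantee of Theorem~\ref{converagerate} and then bound, stage by stage, the per-round probability that the played action $a_{I(s,n)}$ disagrees with the action that is optimal at the limiting local minimizer. First I would apply Theorem~\ref{converagerate} to obtain, for the given $\zeta$, a stage $s'$ and a local minimizer $x^{s'}=x^{\mathcal{H}_{I(s',T_{s'})}}$ of $F$ such that with probability at least $1-\zeta$ the event $\mathfrak{E}^{I(s',T_{s'})}$ holds, i.e. $\|x_{I(s,n)}-x^{s'}\|^2\le\tilde{\mathcal{O}}(s^{\kappa-\upsilon})\le\delta$ for every later round. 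Conditioned on $\mathfrak{E}^{I(s',T_{s'})}$ I would take $x^{s'}$ as the witness for the outer $\min_{x^l\in X}$, so that it suffices to upper bound $\big(\sum_{s}\sum_{n}Pr(a_{I(s,n)}\ne a^l_{I(s,n)})\big)/\sum_{s}T_0 s^{2\upsilon}$, where $a^l_{I(s,n)}=\arg\max_k f_k(d_{I(s,n)};x^{s'})$.

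Next I would split the double sum into the early stages $s\le\max(s_1,s')$, whose total normalized contribution is $\mathcal{O}(1)/\Theta(\mathcal{S}^{2\upsilon+1})$ and hence negligible, and the later stages, on which I decompose the mismatch event by whether the chosen action is also the index maximizer: $Pr(a_{I(s,n)}\ne a^l_{I(s,n)})\le Pr(a_{I(s,n)}\ne a^l_{I(s,n)},\,a_{I(s,n)}\ne\arg\max_i U_{s,n,i})+Pr(a_{I(s,n)}\ne a^l_{I(s,n)}\mid a_{I(s,n)}=\arg\max_i U_{s,n,i})$. The first term is an exploration mistake, and using the action-selection structure analyzed in Lemma~\ref{lowerbound_pi} it is bounded by $0.05 s^{-\kappa/2}/K+s^{-\omega}=\mathcal{O}(s^{\max(-\kappa/2,-\omega)})$, which after normalization contributes the $\min(\kappa/2,\omega)$ part of $\Lambda$.

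The delicate term is the conditional one, where the index $U_{s,n,\cdot}$ selects a wrong action. Writing the exploration bonus as $e_{s,n,a}=C(\sum_k N_{I(s,n),k,c_{s,n}})^\beta/\sqrt{N_{I(s,n),a,c_{s,n}}}$, I would bound this probability by $\sum_{k\ne a^l}Pr(f_k+e_{s,n,k}>f_{a^l}+e_{s,n,a^l})$ and then split each summand into three concentration events: two deviations of the form $|f_k-\mathbb{E}f_k|>e_{s,n,k}$ and one ``bonus-beats-gap'' event $2e_{s,n,k}>\mathbb{E}f_{a^l}-\mathbb{E}f_k$. Assumption~\ref{banditproof} supplies a strictly positive margin $\Delta>0$ at the local minimizer, while Lemma~\ref{boundoutput} gives $|\mathbb{E}f_k(x_{I(s,n)})-f_k(x^{s'})|\le\tilde{\mathcal{O}}(s^{(\kappa-\upsilon)/2})$ and $\mbox{VAR}(f_k)\le\tilde{\mathcal{O}}(s^{\kappa-\upsilon})$. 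Hence for $s\ge s_1$ the effective margin is at least $\Delta/2$, and the bonus-beats-gap event has probability zero once the visit count exceeds $\tilde{\mathcal{O}}(\mathcal{S}^{\beta(2\upsilon+1)})$; the two deviation events are controlled by Chebyshev's inequality, contributing $\tilde{\mathcal{O}}(\mathcal{S}^{(2\upsilon+1)(1-2\beta)}s^{\kappa-\upsilon})$ via $N_{I(s,n),k}/(\sum_{k'}N_{I(s,n),k'})^{2\beta}\le(\sum_{s}T_0 s^{2\upsilon})^{1-2\beta}$.

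Finally I would assemble the three pieces, sum over $n\in[T_0 s^{2\upsilon}]$ and over the later stages, divide by $\sum_{s=1}^{\mathcal{S}}T_0 s^{2\upsilon}=\Theta(\mathcal{S}^{2\upsilon+1})$, and extract the slowest-decaying exponent; balancing the exploration exponents $\kappa/2,\omega$ against the low-visit-count and Chebyshev exponents $(1-2\beta)(1+2\upsilon)$ and $(2\beta-1)(1+2\upsilon)+\kappa-\upsilon$ yields the four-way minimum $\Lambda$, and the degenerate case $C=0$ is handled separately by reusing the pure-concentration bound of Eqs.~(\ref{non_convex_upperbound_p31})--(\ref{c0bound}). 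The hard part will be the bonus-versus-margin accounting in the conditional term: one must ensure the exploration bonus $e_{s,n,a}$, whose decay is only guaranteed through visit counts that grow at rate $\tilde{\mathcal{O}}(\mathcal{S}^{\beta(2\upsilon+1)})$, is eventually dominated by the constant margin $\Delta$ while the output fluctuations decay like $s^{\kappa-\upsilon}$, and then balance the competing exponents against the per-stage horizon $T_0 s^{2\upsilon}$ so that the four terms combine into a single clean rate.
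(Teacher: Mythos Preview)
Your proposal is correct and follows essentially the same approach as the paper's proof: you condition on the convergence event from Theorem~\ref{converagerate}, take $x^{s'}$ as the witness for the outer minimum, split into early and late stages, decompose $Pr(a_{I(s,n)}\ne a^l_{I(s,n)})$ according to whether $a_{I(s,n)}=\arg\max_i U_{s,n,i}$, bound the exploration piece by $\mathcal{O}(s^{\max(-\kappa/2,-\omega)})$, and handle the conditional piece via the same three-event split (two Chebyshev deviation bounds from Lemma~\ref{boundoutput} plus one bonus-beats-gap term that vanishes once the visit count exceeds $\tilde{\mathcal{O}}(\mathcal{S}^{\beta(2\upsilon+1)})$), before assembling and treating $C=0$ separately. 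The intermediate bounds and the final balancing of exponents match the paper exactly.
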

	\newpage
	\section{Experiment Settings of SSGD-SCB and Baseline Algorithms}
	As shown in Fig. \ref{cnns} of this appendix, the reward function of SSGD-SCB is modeled by a variant of VGG-11 with batch normalization. Compared to the original version of VGG-11 with batch normalization, fully connected layers are removed except one with ten outputs, and the softmax layer is replaced by a sigmoid activation function.
	
    In Fig. \ref{cnns}, Conv3-$i$ means the kernel size of the convolutional layer is three and the number of outputs is $i$. Besides, padding and stride of all convolutional layers are 1. The kernel size, stride, and padding of max-pool layers are 2, 2, and 0, respectively. Finally, Fully Connected Layer-10 means that the number of the fully connected layer outputs is 10.
	
	\begin{figure}[ht]
		\centering
		\includegraphics[width=4cm]{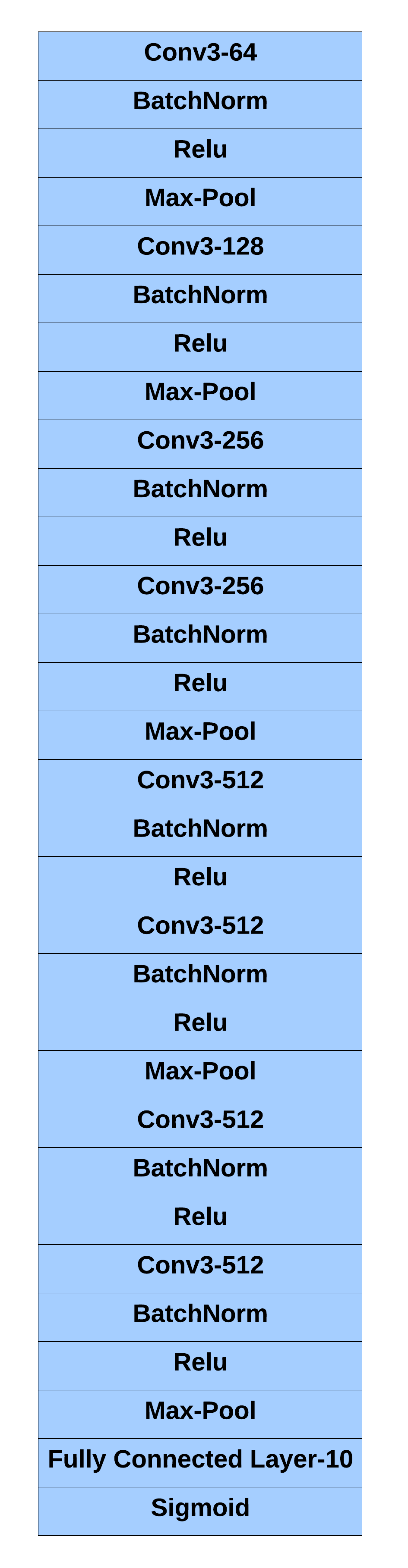}
		\caption{The DNN used in SSGD-SCB.}
		\label{cnns}
	\end{figure}
	
	The hyper-parameters of SSGD-SCB and four baseline algorithms are given as follows.
	\begin{enumerate}
		\item SSGD-SCB. According to Remark 2, we let $\kappa$ be 0.5, and further set $\upsilon$ and $\beta$ as 1 and 0.4583. For $T_0$, $C$, $\mathcal{N}_0$, and $\omega$, a grid search is conducted over $\{1000,5000\}$,  $\{0.01,0.05,0.1\}$, $\{$0.1, 0.001, 0.0001, 0.00001$\}$ and $\{0.5,1\}$ respectively. Besides, we test an additional hyper-parameter configuration, in which $C$ is halved for each of the first ten stages.
		\item $\epsilon$-greedy policy. In our experiment, a grid search is conducted on $\epsilon$ over $\{0.05,0.1,0.15\}$, and the reward function is modeled by the DNN used in SSGD-SCB. Following \citeauthor{foster2018practical}'s work (\cite{foster2018practical}), \emph{weighted regression onto all actions} is used in the $\epsilon$-greedy policy.
		\item DeepFPL. According to the paper proposing DeepFPL (\cite{kveton2019randomized}), we set the variance of noise as 1. The DNN used in DeepFPL is identical to that used in SSGD-SCB. For the number of neural networks $M$, a grid search is conducted over $\{3,5,10\}$, and the best $M$ will be selected.  
		\item EL and CE. The lower bounds of probability to select actions in EL and CE are set as 0.005. Because the outputs of EL and CE are a probability distribution rather than predicted rewards, the DNNs used in EL and CE are identical with that of SSGD-SCB except that the last layers of DNN in EL and CE are a softmax layer and the out-of-sample MSE of these two algorithms are not presented in the following result.
		\item Neural Linear (NL). Considering that the original version of Neural Linear is hard to be deployed to complex deep learning SCB problems. In our experiments, we implement an online version of Neural Linear: the DNN model generating the input of the Bayesian linear model will be updated every 50K rounds. After each update, instead of all observed history, a forward pass will be performed on history information collected within the latest 50K rounds to generate the training dataset for the Bayesian linear model. The hyper-parameters of NL follow (\cite{riquelme2018deep}), i.e., $\alpha=3$, $\beta=3$ and $\lambda=0.25$.
	\end{enumerate}
	
	For all algorithms mentioned above, the momentum is not used, and a grid search is performed over the learning rates $\{$0.05, 0.01, 0.005, 0.001$\}$. Except SSGD-SCB, we also test the performance of these algorithms with a decaying learning rate (the learning rate is halved every 1,500K rounds). Besides, in order to reduce the variance of the gradient, the tech of Mini-Batch Gradient Descent is applied to these five algorithms. Specifically, instead of updating parameters in every round, gradients are summed up and then the average is used to update parameters every 64 rounds.
	
	In addition, we compare  generalization performance of SSGD-SCB with DNN trained by CIFAR-like datasets under supervised learning.  
	The structure of DNN is identical to that used in SSGD-SCB. The model is trained and tested with the dataset of CIFAR-10 and CIFAR-10+N. The size of minibatch is 64, and SGD is used to fit the model. A grid search over $\{0.05,0.01,0.005,0.001\}$ is conducted on the learning rate and the learning rate is halved every 1,500K rounds (30 epochs). Note that the test set is used to tune parameters in this model since we would like to find the best top-1 test accuracy the DNN can get under supervised learning. 
	
	For SSGD-SCB with a linear reward function, the algorithm is implemented as will be shown in Appendix F. Note that we add the $l^2$-regularization to $F(x)$, which results in a strongly convex objective function. The hyper-parameters $\kappa$, $\upsilon$ and $\beta$ are set to 0.2, 0.4 and 0.45, respectively. According to the definition of $\Delta s$, we have $\Delta s=5$. Considering that $\Delta s$ is small, in this experiment we simply run SSGD-SCB from $s=1$ instead of $s=6$. For the other hyper-parameters $\omega$, $C$, the weight of $l^2$-regularization and the learning rate $\eta_0$, a systematic grid search is conducted over the values, resp., $\{0.2,0.4\}$, $\{0.01,0.05,0.1\}$,  $\{0.0005,0.005,0.05\}$ and   $\{0.1,0.05,0.01,0.005,0.001,0.0005,0.0001\}$. The hyper-parameters of the baseline algorithms follow \citeauthor{bietti2018contextual}'s work (\cite{bietti2018contextual}). The results of this simulation are shown in Fig. \ref{cdsgdresult}.
	\begin{figure}[htp]
    \centering
    \begin{minipage}{0.33\textwidth}
    \centering
    \includegraphics[width=\textwidth]{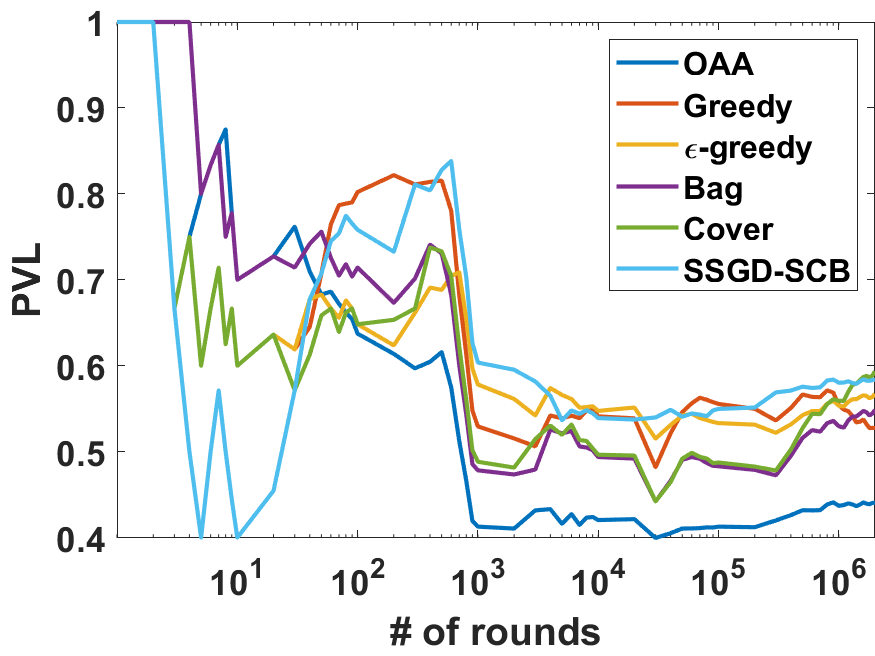}\\
    (a) MSLR
    \end{minipage}\hfill
    \begin{minipage}{0.33\textwidth}
    \centering
    \includegraphics[width=\textwidth]{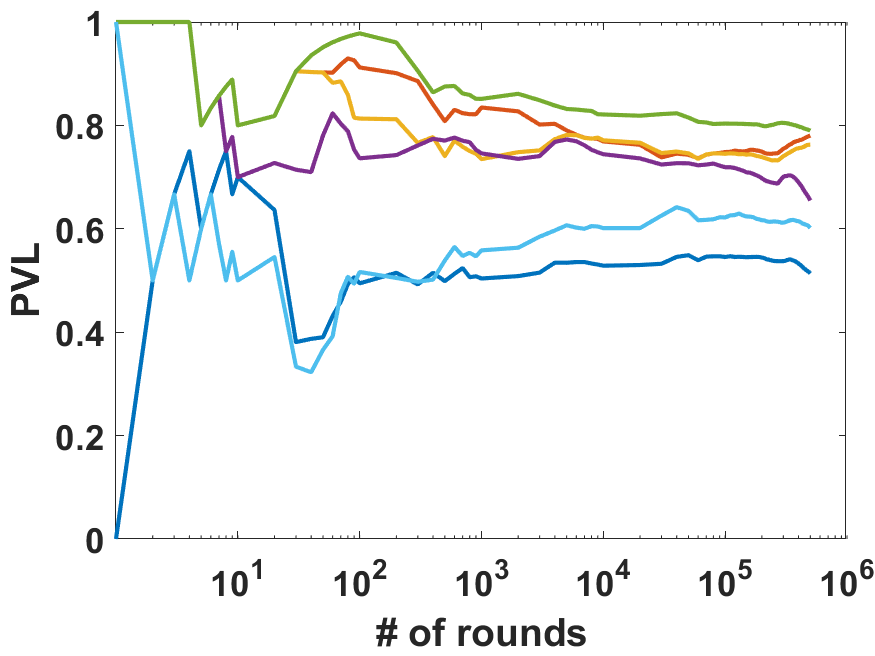}\\
    (b) Yahoo
    \end{minipage}\hfill
    \begin{minipage}{0.33\textwidth}
    \centering
    \includegraphics[width=\textwidth]{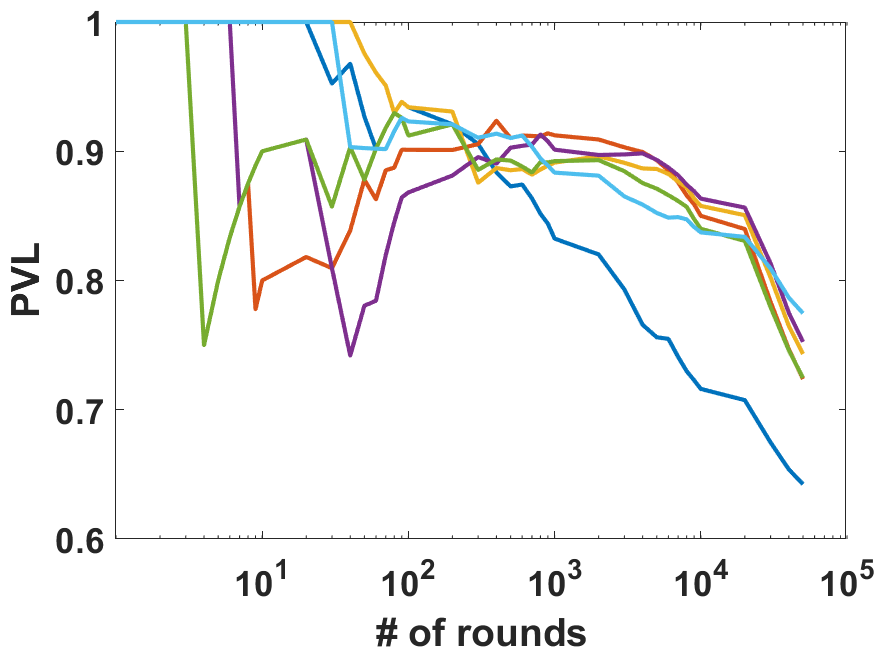}\\
    (c) CIFAR-10
    \end{minipage}\par
    \caption{\label{cdsgdresult}Progressive validation loss of SSGD-SCB with a linear reward function.}
    \end{figure}
	\newpage
	\section{Detailed Proof of Theorem \ref{converagerate}}
	\label{proof_lemma_nsgd}
	In this section, we give the proof of Theorem \ref{converagerate}.
	
	In the following proof, $\|\cdot\|$ denotes the l2 norm of vectors and spectral norm of matrices. For the sake of simplicity, let $x_{s,n}=x_{I(s,n)}$ and $\xi_{s,n}=\tilde{g}_{I(s,n)}+\mathcal{N}_s-\nabla F(x_{s,n})$. Then for better understanding, the conclusion of Lemma \ref{boundg} is simplified as:
	\begin{equation*}
	\begin{aligned}
	&\mathbb{E}\big[\xi_{s,n}+\nabla F(x_{s,n})\big]=\nabla F(x_{s,n}),\\
	&\|\xi_{s,n}\|^2\le \sigma_{s}^2,\\
	&r_{r}^2\sigma_{s}^2\le\mathbb{E}\|\xi_{s,n}\|^2\le \sigma_{s}^2,
	\end{aligned}
	\end{equation*} 
	where $r_{r}\in(0,1)$ and $\sigma_{s}=s^{\frac{\kappa}{2}}\sigma_{0}$.
	
	The proof of Theorem \ref{converagerate} consists of two steps. First, following the idea of Lemmas 15-18 in the paper of Noisy Stochastic Gradient \cite{ge2015escaping} under the relaxed assumption (The Remark of Lemma 16 in \cite{ge2015escaping}), Lemmas \ref{region2}-\ref{l11} show that in each stage, if the number of stage $s$ is large enough, with probability at least $1-\frac{\zeta}{2}$, $x$ will enter the strongly convex region at least once within $\Theta(s^{2\upsilon-\kappa}\log\frac{2}{\zeta})$ steps with $\|\nabla F(x)\|\le\mathcal{O}(s^{\frac{\kappa-\upsilon}{2}})$. 
	
	Then in Lemma \ref{local minimum}, we show that after entering the strongly convex region whose local minimizer is $x^{s'}$ with $\|\nabla F(x)\|\le\mathcal{O}({s'}^{\frac{\kappa-\upsilon}{2}})$, if stage $s'$ is large enough, with probability at least $1-\frac{\zeta}{2}$, for $\forall s>{s'}$ and $\forall n\in \{1,2,\dots,T_0 s^{2\upsilon}\}$, we have $\|x_{s,n}-x^{s'}\|^{2}\le\mathcal{O}( s^{{\kappa-\upsilon}}\log \frac{1}{\eta_{s}\zeta})$.
	
	\begin{lemma}
		\label{region2}
		With any point $x_{s,n}\in\mathbb{R}^{n_{x}}$ that is $\|\nabla F\left(x_{s,n}\right)\| \geq s^{\frac{\kappa-\upsilon}{2}}\sqrt{2\eta_0 \sigma_0^{2}L}$ where $\eta_{s}<\frac{1}{L}$ and $s^{\frac{\kappa-\upsilon}{2}}\sqrt{2\eta_0 \sigma_0^{2}L}\le \epsilon$, after one round we have:
		\begin{align*}
		\mathbb{E} \big[F\left(x_{s,n+1}\right)|x_{s,n}\big]-F\left(x_{s,n}\right) \le-\frac{L\eta_s^{2} \sigma_s^{2}}{2}.
		\end{align*}
		\begin{proof}
			\begin{align*} 
			&\mathbb{E} \big[F\left(x_{s,n+1}\right)|x_{s,n}\big]-F\left(x_{s,n}\right) \\
			\leq& \nabla F\left(x_{s,n}\right)^{T} \mathbb{E}\left[x_{s,n+1}-x_{s,n}|x_{s,n}\right]+\frac{L}{2} \mathbb{E}\big[\|x_{s,n+1}-x_{s,n}\|^2 |x_{s,n}\big] \\ 
			=&\nabla F\left(x_{s,n}\right)^{T} \mathbb{E}\big[-\eta_s\left(\nabla F\left(x_{s,n}\right)+\xi_{s,n}\right)|x_{s,n}\big]+\frac{L}{2} \mathbb{E}\big[\|-\eta_s\left(\nabla F\left(x_{s,n}\right)+\xi_{s,n}\right)\|^{2}|x_{s,n}\big] \\ 
			\le&-\left(\eta_s-\frac{L \eta_{s}^{2}}{2}\right)\left\|\nabla F\left(x_{s,n}\right)\right\|^{2}+\frac{L\eta_s^{2} \sigma_s^{2} }{2} \\ 
			\leq&-\frac{\eta_s}{2}\left\|\nabla F\left(x_{s,n}\right)\right\|^{2}+\frac{L\eta_s^{2} \sigma_s^{2} }{2}\\
			\le&-\frac{L\eta_s^{2} \sigma_s^{2}}{2}.
			\end{align*}
		\end{proof}
	\end{lemma}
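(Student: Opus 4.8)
The plan is to combine the $L$-smoothness of $F$ (Definition~\ref{smooth_definition}, guaranteed by Assumption~\ref{non_convex_PCB}) with the one-step update $x_{s,n+1}-x_{s,n}=-\eta_s(\nabla F(x_{s,n})+\xi_{s,n})$ and then take the expectation over the stochastic noise $\xi_{s,n}$ conditioned on $x_{s,n}$. Smoothness supplies the quadratic upper bound $F(x_{s,n+1})\le F(x_{s,n})+\nabla F(x_{s,n})^T(x_{s,n+1}-x_{s,n})+\frac{L}{2}\|x_{s,n+1}-x_{s,n}\|^2$, so substituting the update reduces the claim to controlling one linear term and one quadratic term in expectation.

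First I would handle the linear term: because the simplified form of Lemma~\ref{boundg} gives $\mathbb{E}[\xi_{s,n}\mid x_{s,n}]=0$, the cross term $\nabla F(x_{s,n})^T\mathbb{E}[\xi_{s,n}\mid x_{s,n}]$ vanishes and the linear contribution equals exactly $-\eta_s\|\nabla F(x_{s,n})\|^2$. For the quadratic term, the same unbiasedness splits $\mathbb{E}\|\nabla F(x_{s,n})+\xi_{s,n}\|^2=\|\nabla F(x_{s,n})\|^2+\mathbb{E}\|\xi_{s,n}\|^2$, after which the bound $\mathbb{E}\|\xi_{s,n}\|^2\le\sigma_s^2$ controls the noise. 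Collecting terms gives $\mathbb{E}[F(x_{s,n+1})\mid x_{s,n}]-F(x_{s,n})\le -(\eta_s-\frac{L\eta_s^2}{2})\|\nabla F(x_{s,n})\|^2+\frac{L\eta_s^2\sigma_s^2}{2}$. Invoking the hypothesis $\eta_s<\frac1L$ yields $\eta_s-\frac{L\eta_s^2}{2}=\eta_s(1-\frac{L\eta_s}{2})\ge\frac{\eta_s}{2}$, which enlarges the (negative) coefficient and produces the intermediate bound $-\frac{\eta_s}{2}\|\nabla F(x_{s,n})\|^2+\frac{L\eta_s^2\sigma_s^2}{2}$.

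The final step is to absorb the positive noise term into the negative drift. Under the step-size schedule $\eta_s=\eta_0 s^{-\upsilon}$ and noise scale $\sigma_s=\sigma_0 s^{\kappa/2}$, the hypothesized threshold rewrites as $s^{\frac{\kappa-\upsilon}{2}}\sqrt{2\eta_0\sigma_0^2 L}=\sqrt{2L\eta_s}\,\sigma_s$, so the gradient condition is precisely $\|\nabla F(x_{s,n})\|^2\ge 2L\eta_s\sigma_s^2$; multiplying by $\eta_s/2$ gives $\frac{\eta_s}{2}\|\nabla F(x_{s,n})\|^2\ge L\eta_s^2\sigma_s^2$, and subtracting $\frac{L\eta_s^2\sigma_s^2}{2}$ from both sides of the intermediate bound delivers $-\frac{\eta_s}{2}\|\nabla F(x_{s,n})\|^2+\frac{L\eta_s^2\sigma_s^2}{2}\le -\frac{L\eta_s^2\sigma_s^2}{2}$, which is the claim. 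The argument is analytically routine, and its only real subtlety is recognizing that the threshold on $\|\nabla F\|$ is not arbitrary but is calibrated exactly so that the gradient-driven decrease dominates \emph{twice} the variance-driven increase; the one place to be careful is therefore bookkeeping the powers of $s$ carried by the gradient bound, $\sigma_s$, and $\eta_s$ so that the chain $\frac{\eta_s}{2}\|\nabla F\|^2\ge L\eta_s^2\sigma_s^2$ closes, rather than any genuine analytic obstacle.
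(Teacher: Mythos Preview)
Your proposal is correct and follows essentially the same route as the paper: $L$-smoothness plus the one-step update, unbiasedness of $\xi_{s,n}$ to kill the cross term and bound the variance, then $\eta_s<1/L$ to simplify the coefficient, and finally the gradient threshold to absorb the noise term. If anything you are slightly more explicit than the paper in verifying that $s^{\frac{\kappa-\upsilon}{2}}\sqrt{2\eta_0\sigma_0^2 L}=\sqrt{2L\eta_s}\,\sigma_s$ so that $\frac{\eta_s}{2}\|\nabla F(x_{s,n})\|^2\ge L\eta_s^2\sigma_s^2$ closes the argument.
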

	\begin{lemma}
		\label{secondorderexpansion}
		Let $\tilde{F}(x)$ be the local second-order approximation of $F(x)$ around $x_{s,n}$. Let $H(x)$ be the hessian matrix of $F(x)$ in x. With $\|\nabla F\left(x_{s,n}\right)\| < s^{\frac{\kappa-\upsilon}{2}}\sqrt{2\eta_0 \sigma_{0}^{2}L}<\epsilon$
		and $\lambda_{\min }\left(H\left(x\right)\right)=-\gamma_{0}\le\gamma$, with probability at least $1-\mathcal{O}(s^{-3\upsilon})$, the following inequalities hold for all $0\le t\le\Theta(s^\upsilon)$:
		\begin{align*}
		&\|\nabla\tilde{F}(\tilde{x}_{s,n+t})\|\le\mathcal{O}(s^{\frac{\kappa-\upsilon}{2}}\log\frac{1}{\eta_s}),\\
		&\|\tilde{x}_{s,n+t}-x_{s,n}\|\le\mathcal{O}(s^{\frac{\kappa-\upsilon}{2}}\log\frac{1}{\eta_s}).
		\end{align*}
		\begin{proof}
			
			
			Supposing that with $\Delta n$, Eq. (\ref{region_2_lower_bound}) is satisfied. 
			\begin{equation}
			\label{region_2_lower_bound}
			\sum_{\tau=0}^{\Delta n-2}(1+\eta_s\gamma_0)^{2\tau}\le\frac{n_{x}}{r_r^2\gamma_0\eta_s}\le\sum_{\tau=0}^{\Delta n-1}(1+\eta_s\gamma_0)^{2\tau}.
			\end{equation}
			
			According to assumption, $F(x)$ is L-smooth. Therefore, we have $\gamma_0\le L$. Based on this conclusion and Eq. (\ref{region_2_lower_bound}), with $\eta_s\le min(\frac{\sqrt{2}-1}{L},\frac{n_{x}}{r_r^2\gamma_0})$, we have
			\begin{equation}
			\label{region_2_upper_bound}
			\begin{aligned}
			\sum_{\tau=0}^{\Delta n-1}(1+\eta_s\gamma_0)^{2\tau}=&1+(1+\eta_s\gamma_0)^2\sum_{\tau=0}^{\Delta n-2}(1+\eta_s\gamma_0)^{2\tau}\le1+\frac{2n_{x}}{r_r^2\gamma_0\eta_s}\le\frac{3n_{x}}{r_r^2\gamma_0\eta_s}.
			\end{aligned}
			\end{equation}
			Based on Eqs. (\ref{region_2_lower_bound}) ,(\ref{region_2_upper_bound}) and $\sum_{\tau=0}^{\Delta n-1}(1+\eta_s\gamma_0)^{2\tau}=\frac{(1+\eta_s\gamma_0)^{2\Delta n}-1}{\eta_s^2\gamma_0^2+2\eta_s\gamma_0}$, we have
			\begin{equation}
			\label{ub_1_s}
			\frac{n_{x}(\eta_s\gamma_0+2)}{r_r^2}+1\le(1+\eta_s\gamma_0)^{2\Delta n}\le \frac{3r_r^2n_{x}(\eta_s\gamma_0+2)}{r_r^2}+1,
			\end{equation}
			Substitute $2^{\frac{L}{(\sqrt{2}-1)\gamma_0}\eta_s\gamma_0\Delta n}\le(1+\eta_s\gamma_0)^{\frac{1}{\eta_s\gamma_0}2\eta_s\gamma_0\Delta n}\le e^{2\eta_s\gamma_0\Delta n}$ into Eq. (\ref{ub_1_s}), we get $\Delta n=\Theta(s^\upsilon)$.
			
			For ease of presentation, we refer to the hessian matrix $H(x_{s,n})$ as $H_{s,n}$. With Eqs. (\ref{region_2_lower_bound}), (\ref{region_2_upper_bound}) and $\Delta n=\Theta(s^\upsilon)$, $\|\nabla\tilde{F}(\tilde{x}_{s,n+\Delta n})\|$ and $\|\tilde{x}_{s,n+\Delta n}-\tilde{x}_{s,n}\|$ can be bounded. 
			
			For $\nabla\tilde{F}(\tilde{x}_{s,n+\Delta n})$, we have 
			\begin{equation}
			\label{1itemgf}
			\begin{aligned}
			&\|\nabla\tilde{F}(\tilde{x}_{s,n+\Delta n})\|\\
			=&\|\nabla\tilde{F}(\tilde{x}_{s,n-1+\Delta n})+H_{s,n}(\tilde{x}_{s,n+\Delta n}-\tilde{x}_{s,n-1+\Delta n})\|\\
			=&\|\nabla\tilde{F}(\tilde{x}_{s,n-1+\Delta n})-\eta_sH_{s,n}(\nabla \tilde{F}(\tilde{x}_{s,n-1+\Delta n})+\xi_{s,n-1+\Delta n})\|\\
			=&\|(I-\eta_sH_{s,n})\nabla\tilde{F}(\tilde{x}_{s,n-1+\Delta n})-\eta_sH_{s,n}\xi_{s,n-1+\Delta n}\|\\
			=&\|(I-\eta_sH_{s,n})^{\Delta n}\nabla F(x_{s,n})-\eta_s H_{s,n}\sum_{\tau=0}^{\Delta n-1}(I-\eta_sH_{s,n})^{\Delta n-1-\tau}\xi_{s,n+\tau}\|\\
			\le&\|(I-\eta_sH_{s,n})^{\Delta n}\nabla F(x_{s,n})\|+\|\eta_s H_{s,n}\sum_{\tau=0}^{\Delta n-1}(I-\eta_sH_{s,n})^{\Delta n-1-\tau}\xi_{s,n+\tau}\|.\\
			\end{aligned}
			\end{equation}
			For the first item of Eq. (\ref{1itemgf}), based on Eq. (\ref{ub_1_s}), we have 
			\begin{equation*}
			\begin{aligned}
			\|(I-\eta_sH_{s,n})^{\Delta n}\nabla F(x_{s,n})\|\le
			(1+\eta_s\gamma_0)^{\Delta n}s^{\frac{\kappa-\upsilon}{2}}\sqrt{2\eta_0 \sigma_0^2L}
			=\mathcal{O}(s^{\frac{\kappa-\upsilon}{2}}).
			\end{aligned}
			\end{equation*}
			
			For the second item, we first have
			\begin{equation}
			\label{second_item_delta_tild_F}
			\begin{aligned}
			&\sum_{\tau=0}^{\Delta n-1}\|\eta_s H_{s,n}(I-\eta_sH_{s,n})^{\Delta n-1-\tau}\xi_{s,n+\tau}\|^2\\
			\le&\sigma_s^2\eta_s^2 \sum_{\tau=0}^{\Delta n-1}L^2(1+\eta_s\gamma_0)^{2\Delta n-2-2\tau}\\
			=&L^2\sigma_s^2\eta_s^2 \sum_{\tau=0}^{\Delta n-1}(1+\eta_s\gamma_0)^{2\tau}
			\overset{Eq. (\ref{region_2_upper_bound})}{\le}\mathcal{O}(s^{\kappa-\upsilon}).
			\end{aligned}
			\end{equation}
			Therefore, based on Azuma–Hoeffding inequality, with a small enough $\eta_s$, for each dimension of $\eta_s H_{s,n}\sum_{\tau=0}^{\Delta n-1}(I-\eta_sH_{s,n})^{\Delta n-1-\tau}\xi_{s,n+\tau}$, which is marked as $(\eta_s H_{s,n}\sum_{\tau=0}^{\Delta n-1}(I-\eta_sH_{s,n})^{\Delta n-1-\tau}\xi_{s,n+\tau})_i$, we have 
			\begin{equation*}
			\begin{aligned}
			&Pr(|(\eta_s H_{s,n}\sum_{\tau=0}^{\Delta n-1}(I-\eta_sH_{s,n})^{\Delta n-1-\tau}\xi_{s,n+\tau})_i|\ge \sigma_s\eta_s^{0.5}\log\frac{1}{\eta_s})\\
			\le&2e^{\frac{-2\sigma_s^2\eta_s\log^2\frac{1}{\eta_s}}{\sum_{\tau=0}^{\Delta n-1}(\eta_s H_{s,n}(1-\eta_sH_{s,n})^{\Delta n-1-\tau}\xi_{s,n+\tau})_i^2}}\\
			\le&2e^{\frac{-2\sigma_s^2\eta_s\log^2\frac{1}{\eta_s}}{\sum_{\tau=0}^{\Delta n-1}\|\eta_s H_{s,n}(1-\eta_sH_{s,n})^{\Delta n-1-\tau}\xi_{s,n+\tau}\|^2}}\\
			\le&2\eta_s^{\Omega(\log\frac{1}{\eta_s} )}
			\le\mathcal{O}(\eta_s^{4})
			=\mathcal{O}(s^{-4\upsilon}).
			\end{aligned}
			\end{equation*}
			Therefore, we have 
			\begin{equation}
			\label{final_sum_t_1}
			\begin{aligned}
			&Pr(|\eta_s H_{s,n}\sum_{\tau=0}^{\Delta n-1}(I-\eta_sH_{s,n})^{\Delta n-1-\tau}\xi_{s,n+\tau}|\ge \sigma_s\eta_s^{0.5}\log\frac{1}{\eta_s})\le\mathcal{O}(\eta_s^{4}).
			\end{aligned}
			\end{equation}
			The next step is to bound $\|\tilde{x}_{s,n+\Delta n}-x_{s,n}\|$:
			\begin{equation}
			\label{2itemgf}
			\begin{aligned}
			&\|\tilde{x}_{s,n+\Delta n}-x_{s,n}\| \\
			=&\|-\eta_s\sum_{\tau=0}^{\Delta n-1}\big[\nabla \tilde{F}(\tilde{x}_{s,n+\tau})+\xi_{s,n+\tau}\big]\|\\
			=&\|-\eta_s\sum_{\tau=0}^{\Delta n-1}(I-\eta_sH_{s,n})^\tau\nabla F(x_{s,n})+\eta_s^2H_{s,n}\sum_{\tau=1}^{\Delta n-1}\sum_{i=0}^{\tau-1}(I-\eta_sH_{s,n})^{\tau-1-i}\xi_{s,n+i}-\eta_s\sum_{\tau=0}^{\Delta n-1} \xi_{s,n+\tau}\|\\
			=&\|-\eta_s\sum_{\tau=0}^{\Delta n-1}(I-\eta_sH_{s,n})^\tau\nabla F(x_{s,n})+\eta_s^2H_{s,n}\sum_{\tau=0}^{\Delta n-2}\sum_{i=0}^{\Delta n-2-\tau}(I-\eta_sH_{s,n})^i\xi_{s,n+\tau}-\eta_s\sum_{\tau=0}^{\Delta n-1} \xi_{s,n+\tau}\|\\
			=&\|-\eta_s\sum_{\tau=0}^{\Delta n-1}(I-\eta_sH_{s,n})^\tau\nabla F(x_{s,n})-\eta_s\sum_{\tau=0}^{\Delta n-2}((I-\eta_sH_{s,n})^{\Delta n-\tau-1}-I)\xi_{s,n+\tau}-\eta_s\sum_{\tau=0}^{\Delta n-1} \xi_{s,n+\tau}\|\\
			\le&\|\eta_s\sum_{\tau=0}^{\Delta n-1}(I-\eta_sH_{s,n})^\tau\nabla F(x_{s,n})\|+\|\eta_s\sum_{\tau=0}^{\Delta n-1}(I-\eta_sH_{s,n})^{\Delta n-\tau-1}\xi_{s,n+\tau}\|.
			\end{aligned}
			\end{equation}
			For the first item of Eq. (\ref{2itemgf}), we have 
			\begin{equation*}
			\begin{aligned}
			&\|\eta_s\sum_{\tau=0}^{\Delta n-1}(I-\eta_sH_{s,n})^\tau\nabla F(x_{s,n})\|\\
			\le&\sqrt{2\sum_{\tau=0}^{\Delta n-1}(1+\eta_s\gamma_0)^{\tau} \eta_s^2 \sigma_{s}^{2}L}\le\sqrt{2\frac{(1+\eta_s\gamma_0)^{\Delta n}-1}{\eta_s\gamma_0} \eta_s^2 \sigma_{s}^{2}L}\overset{Eq. (\ref{ub_1_s})}{\le}\mathcal{O}(s^{\frac{\kappa-\upsilon}{2}}).
			\end{aligned}
			\end{equation*}
			For the second item of Eq. (\ref{2itemgf}), following the idea of Eq. (\ref{final_sum_t_1}), we first have 
			\begin{equation*}
			\begin{aligned}
			&\sum_{\tau=0}^{\Delta n-1}\|\eta_s(I-\eta_sH_{s,n})^{\Delta n-\tau-1}\xi_{s,n+\tau}\|^2\le\mathcal{O}(s^{\kappa-\upsilon}).
			\end{aligned}
			\end{equation*}
			We then have 
			\begin{equation}
			\label{final_sum_t_2}
			\begin{aligned}
			&Pr(|\eta_s\sum_{\tau=0}^{\Delta n-1}(I-\eta_sH_{s,n})^{\Delta n-\tau-1}\xi_{s,n+\tau}|\ge \sigma_s\eta_s^{0.5}\log\frac{1}{\eta_s})\le\mathcal{O}(s^{-4\upsilon}).
			\end{aligned}
			\end{equation}
			
			With Eqs. (\ref{final_sum_t_1}) and (\ref{final_sum_t_2}), we have 
			\begin{equation*}
			\begin{aligned}
			&Pr(\forall t<=\Delta n, |\eta_s H_{s,n}\sum_{\tau=0}^{t-1}(1-\eta_sH_{s,n})^{t-1-\tau}\xi_{s,n+\tau}|\ge \sigma_s\eta_s^{0.5}\log\frac{1}{\eta_s})\le\mathcal{O}(s^{-3\upsilon}),\\
			&Pr(\forall t<=\Delta n, |\eta_s\sum_{\tau=0}^{t-1}(1-\eta_sH_{s,n})^{t-\tau-1}\xi_{s,n+\tau}|\ge \sigma_s\eta_s^{0.5}\log\frac{1}{\eta_s})\le\mathcal{O}(s^{-3\upsilon}).\\
			\end{aligned}
			\end{equation*}
			Finally, with probability larger than $1-\mathcal{O}(s^{-3\upsilon})$, for all $t<\Theta(s^\upsilon)$, we have 
			\begin{align*}
			&\|\nabla\tilde{F}(\tilde{x}_{s,n+t})\|\le\mathcal{O}(s^{\frac{\kappa-\upsilon}{2}}\log\frac{1}{\eta_s}),\\
			&\|\tilde{x}_{s,n+t}-x_{s,n}\|\le\mathcal{O}(s^{\frac{\kappa-\upsilon}{2}}\log\frac{1}{\eta_s}).
			\end{align*}
		\end{proof}
	\end{lemma}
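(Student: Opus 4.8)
The plan is to exploit the fact that under the second-order surrogate $\tilde{F}$ the iterates evolve \emph{linearly}, so that both $\nabla\tilde{F}(\tilde{x}_{s,n+t})$ and the displacement $\tilde{x}_{s,n+t}-x_{s,n}$ admit exact closed forms consisting of a deterministic drift plus a martingale sum in the noise terms $\xi_{s,n+\tau}$. First I would write $\nabla\tilde{F}(x)=\nabla F(x_{s,n})+H_{s,n}(x-x_{s,n})$ with $H_{s,n}:=H(x_{s,n})$, substitute the update $\tilde{x}_{s,n+t+1}=\tilde{x}_{s,n+t}-\eta_s(\nabla\tilde{F}(\tilde{x}_{s,n+t})+\xi_{s,n+t})$, and unroll the resulting one-step recursion $\nabla\tilde{F}(\tilde{x}_{s,n+t+1})=(I-\eta_sH_{s,n})\nabla\tilde{F}(\tilde{x}_{s,n+t})-\eta_sH_{s,n}\xi_{s,n+t}$ to obtain
\[\nabla\tilde{F}(\tilde{x}_{s,n+\Delta n})=(I-\eta_sH_{s,n})^{\Delta n}\nabla F(x_{s,n})-\eta_sH_{s,n}\sum_{\tau=0}^{\Delta n-1}(I-\eta_sH_{s,n})^{\Delta n-1-\tau}\xi_{s,n+\tau},\]
and an analogous expression for the displacement. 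Since $\lambda_{\min}(H_{s,n})=-\gamma_0$, every power $(I-\eta_sH_{s,n})^{m}$ has spectral norm at most $(1+\eta_s\gamma_0)^{m}$, so all subsequent estimates reduce to controlling this single expanding factor.

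The second step is to \emph{calibrate the horizon} $\Delta n$. I would take $\Delta n$ as the largest integer for which the geometric sum $\sum_{\tau=0}^{\Delta n-1}(1+\eta_s\gamma_0)^{2\tau}$ stays of order $\frac{n_{x}}{r_r^2\gamma_0\eta_s}$; using $L$-smoothness ($\gamma_0\le L$) to cap the sum by $\frac{3n_{x}}{r_r^2\gamma_0\eta_s}$ and inverting its closed form $\frac{(1+\eta_s\gamma_0)^{2\Delta n}-1}{\eta_s^2\gamma_0^2+2\eta_s\gamma_0}$ pins down $(1+\eta_s\gamma_0)^{\Delta n}=\Theta(1)$ and, through $\eta_s=\Theta(s^{-\upsilon})$, yields the horizon $\Delta n=\Theta(s^\upsilon)$. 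With this calibration the deterministic drift is immediate: $\|(I-\eta_sH_{s,n})^{\Delta n}\nabla F(x_{s,n})\|\le(1+\eta_s\gamma_0)^{\Delta n}\|\nabla F(x_{s,n})\|\le\mathcal{O}(s^{\frac{\kappa-\upsilon}{2}})$ by the hypothesis $\|\nabla F(x_{s,n})\|<s^{\frac{\kappa-\upsilon}{2}}\sqrt{2\eta_0\sigma_0^2L}$, and likewise $\|\eta_s\sum_{\tau}(I-\eta_sH_{s,n})^{\tau}\nabla F(x_{s,n})\|\le\mathcal{O}(s^{\frac{\kappa-\upsilon}{2}})$ for the displacement.

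For the stochastic sums I would treat each coordinate as a martingale in $\tau$, since the $\xi_{s,n+\tau}$ are conditionally mean-zero with $\|\xi_{s,n+\tau}\|\le\sigma_s$. The controlling quantity is the sum of squared increment norms, which by the same geometric bound satisfies $\sum_{\tau=0}^{\Delta n-1}\|\eta_sH_{s,n}(I-\eta_sH_{s,n})^{\Delta n-1-\tau}\xi_{s,n+\tau}\|^2\le L^2\sigma_s^2\eta_s^2\sum_{\tau}(1+\eta_s\gamma_0)^{2\tau}\le\mathcal{O}(s^{\kappa-\upsilon})$. Azuma--Hoeffding then gives, per coordinate, a tail at level $\sigma_s\eta_s^{0.5}\log\frac{1}{\eta_s}$ of order $\eta_s^{\Omega(\log(1/\eta_s))}\le\mathcal{O}(\eta_s^4)=\mathcal{O}(s^{-4\upsilon})$; summing over the $n_{x}$ coordinates and union-bounding over all $t\le\Delta n=\Theta(s^\upsilon)$ inflates the failure probability only to $\Theta(s^\upsilon)\cdot\mathcal{O}(s^{-4\upsilon})=\mathcal{O}(s^{-3\upsilon})$. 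The displacement term is handled identically after reindexing its geometric sum, giving the second claimed inequality.

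The hard part is the horizon calibration in the presence of an \emph{expanding} linear map: because $-\gamma_0<0$ is a genuine negative-curvature direction, $(I-\eta_sH_{s,n})$ has spectral radius $1+\eta_s\gamma_0>1$, so the bounds survive only up to the critical time $\Delta n$ at which the accumulated growth $(1+\eta_s\gamma_0)^{\Delta n}$ is still $\Theta(1)$; taking $\Delta n$ too large breaks every estimate simultaneously, while taking it too small sacrifices the $\Theta(s^\upsilon)$ window needed downstream in Lemma \ref{local minimum}. The delicacy is that the single quantity $\sum_{\tau}(1+\eta_s\gamma_0)^{2\tau}$ sizes $\Delta n$, the deterministic drift, and the martingale variance proxy at once, so all three must be tuned against each other; in particular the concentration level $\sigma_s\eta_s^{0.5}\log\frac{1}{\eta_s}$ and the polynomial decay $\eta_s=\Theta(s^{-\upsilon})$ must cooperate so that $\eta_s^{\Omega(\log(1/\eta_s))}$ beats $s^{-4\upsilon}$ uniformly, which is precisely what licenses the final union bound.
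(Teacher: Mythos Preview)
Your proposal is correct and follows essentially the same approach as the paper's proof: unrolling the linear recursion under the quadratic surrogate into a deterministic drift plus a martingale sum, calibrating the horizon $\Delta n$ via the same geometric-sum sandwich $\sum_{\tau}(1+\eta_s\gamma_0)^{2\tau}\asymp\frac{n_x}{r_r^2\gamma_0\eta_s}$ to get $(1+\eta_s\gamma_0)^{\Delta n}=\Theta(1)$ and $\Delta n=\Theta(s^\upsilon)$, bounding the drift via the spectral radius, and controlling the stochastic part coordinatewise by Azuma--Hoeffding at level $\sigma_s\eta_s^{0.5}\log\frac{1}{\eta_s}$ before union-bounding over $t\le\Delta n$. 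The paper's argument differs only in presentation details, not in substance.
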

	\begin{lemma}
		\label{exactexpansion}
		Based on the definition of Lemma \ref{secondorderexpansion},
		for all $t<\Theta(s^{\upsilon})$, with probability at least $1-\mathcal{O}(\eta_{s}^2)$, we have:
		\begin{align*}
		\left\|\nabla F(x_{s,n+t})-\nabla\tilde{F}(\tilde{x}_{s,n+t})\right\|&\le \mathcal{O}(s^{\kappa-\upsilon}\log^2\frac{1}{\eta_s}),\\
		\left\|x_{s,n+t}-\tilde{x}_{s,n+t}\right\|&\le \mathcal{O}(s^{\kappa-\upsilon}\log^2\frac{1}{\eta_s}).\\
		\end{align*}
		\begin{proof}
			With $H_{s,n}=H(x_{s,n})$ and $H'_{s,n+t}=H_{s,n+t}-H_{s,n}$ , we have 
			\begin{equation*}
			\begin{aligned}
			&\nabla F(x_{s,n+t})\\
			=&(1-\eta_sH_{s,n})\nabla F(x_{s,n+t-1})-\eta_s H_{s,n}\xi_{s,n+t-1}
			-\eta_s H_{s,n+t-1}'
			(\nabla F(x_{s,n+t-1})+\xi_{s,n+t-1})+\theta_{s,n+t-1},\\
			&\triangle_{s,n+t}\\
			=&\nabla F(x_{s,n+t})-\nabla\tilde{F}(\tilde{x}_{s,n+t})\\
			=&(1-\eta_sH_{s,n})\triangle_{s,n+t-1}-\eta_sH_{s,n+t-1}'\big[\triangle_{s,n+t-1}+\nabla\tilde{F}(\tilde{x}_{s,n+t-1})+\xi_{s,n+t-1}\big]+\theta_{s,n+t-1},\\
			\end{aligned}
			\end{equation*}
			where
			\begin{equation*}
			\theta_{s,n+t-1}= \int_{0}^{1}\left[H\left(x_{s,n+t-1}+h\left(x_{s,n+t}-x_{s,n+t-1}\right)\right)-H_{s,n+t-1}\right] \mathrm{d} h \cdot\left(x_{s,n+t}-x_{s,n+t-1}\right).   
			\end{equation*}
			Then define events $\mathfrak{K}_{s,n+t}$ and $\mathfrak{E}_{s,n+t}$ as 
			\begin{equation*}
			\begin{aligned}
			&\mathfrak{K}_{s,n+t}=\big\{\forall\tau\le t,\|\nabla\tilde{F}(\tilde{x}_{s,n+\tau})\|\le\mathcal{O}(\sigma_s\eta_s^{0.5}\log\frac{1}{\eta_s}) ,\|\tilde{x}_{s,n+\tau}-x_{s,n}\|\le\mathcal{O}(\sigma_s\eta_s^{0.5}\log\frac{1}{\eta_s}) \big\},\\
			&\mathfrak{E}_{s,n+t}=\big\{\forall\tau\le t,\|\triangle_{s,n+\tau}\|\le\mathcal{O}(\mu \sigma_s^2\eta_s\log^2\frac{1}{\eta_s}) \big\}.
			\end{aligned}    
			\end{equation*}
			Conditioned on $\mathfrak{K}_{s,n+t-1}$ and $\mathfrak{E}_{s,n+t-1}$, with $t\le \Theta(s)$, we have 
			\begin{equation*}
			\begin{aligned}
			&\|x_{s,n+t}-\tilde{x}_{s,n+t}\|\\
			=&\|\big[x_{s,n}-\eta_s\sum_{\tau=0}^{t-1} [\nabla F(x_{s,n+\tau})+\xi_{s,n+\tau}]-x_{s,n}+\eta_s\sum_{\tau=0}^{t-1} [\nabla \tilde{F}(\tilde{x}_{s,n+\tau})+\xi_{s,n+\tau}]\|\\
			=&\eta_s\|\sum_{\tau=0}^{t-1} [\nabla F(x_{s,n+\tau})-\nabla\tilde{F}(\tilde{x}_{s,n+\tau})]\|=\eta_s\|\sum_{\tau=0}^{t-1} \Delta_{s,n+\tau}\|\le\mathcal{O}(\mu \sigma_s^2\eta_s\log^2\frac{1}{\eta_s}).
			\end{aligned}
			\end{equation*}
			By Hessian smoothness and conditioned on $\mathfrak{K}_{s,n+t-1}$ and $\mathfrak{E}_{s,n+t-1}$, we have 
			\begin{equation*}
			\begin{aligned}
			&\|H_{s,n+t-1}'\|=\|H_{s,n+t-1}-H_{s,n}\|\\
			\le&\rho\|x_{s,n+t-1}-x_{s,n}\|\\
			\le&\rho(\|x_{s,n+t-1}-\tilde{x}_{s,n+t-1}\|+\|\tilde{x}_{s,n+t-1}-x_{s,n}\|)\\
			\le&\mathcal{O}(\mu \sigma_s^2\eta_s\log^2\frac{1}{\eta_s})+\mathcal{O}(s^{\frac{\kappa-\upsilon}{2}}\log\frac{1}{\eta_s})=\mathcal{O}(s^{\frac{\kappa-\upsilon}{2}}\log\frac{1}{\eta_s}),\\
			&\|\theta_{s,n+t-1}\|\le\frac{\rho}{2}\|x_{s,n+t}-x_{s,n+t-1}\|^2\\
			\le&3\eta_s^2\frac{\rho}{2}\big[\|\Delta_{s,n+t-1}\|+\|\nabla\tilde{F}(\tilde{x}_{s,n+t-1})\|+\|\xi_{s,n+t-1}\|\big]^2\\
			=&\mathcal{O}(s^{\kappa-2\upsilon}).
			\end{aligned}
			\end{equation*}
			
			In summary, we have the following bound
			\begin{equation}
			\label{fourineq}
			\begin{aligned}
			\|(1-\eta_sH_{s,n})\triangle_{s,n+t-1}\|&\le\mathcal{O}(\mu s^{\kappa-\upsilon}\log^2\frac{1}{\eta_s}),\\
			\|\eta_sH_{s,n+t-1}'(\triangle_{s,n+t-1}+\nabla\tilde{F}(\tilde{x}_{s,n+t-1}))\|&\le\mathcal{O}(s^{\kappa-2\upsilon}\log^2\frac{1}{\eta_s}),\\
			\|\eta_sH_{s,n+t-1}'\xi_{s,n+t-1}\|&\le\mathcal{O}(s^{\kappa-1.5\upsilon}\log\frac{1}{\eta_s}),\\
			\|\theta_{s,n+t-1}\|&\le\mathcal{O}(s^{\kappa-2\upsilon}).\\
			\end{aligned}
			\end{equation}
			
			Let filtration $\mathfrak{F}_{s,n+t}=\sigma\left\{\xi_{s,1}, \cdots \xi_{s,n+t}\right\}$, based on Eq. (\ref{fourineq}) and $\mathbb{E}\big[\xi_{s,n+t-1}|\mathfrak{F}_{s,n+t-2}\big]=0
			$, we have 
			\begin{equation*}
			\begin{aligned}
			&\mathbb{E}\big[\|\triangle_{s,n+t}\|^21_{\mathfrak{K}_{s,n+t-1}\cap \mathfrak{E}_{s,n+t-1}}\big|\mathfrak{F}_{s,n+t-2}]\\
			\le&\big[(1+\eta_s\gamma_0)^2\|\triangle_{s,n+t-1}\|^2+\mathcal{O}(\mu s^{2\kappa-3\upsilon}\log^4\frac{1}{\eta_s})\big]1_{\mathfrak{K}_{s,n+t-1}\cap \mathfrak{E}_{s,n+t-1}}.
			\end{aligned}
			\end{equation*}
			Define
			\begin{equation*}
			\begin{aligned}
			G_{s,n,t}=(1+\eta_s\gamma_0)^{-2t}\big[\|\triangle_{s,n+t}\|^2+a s^{2\kappa-2\upsilon}\log^4\frac{1}{\eta_s} \big].
			\end{aligned}
			\end{equation*}
			The next step is to prove $G_t$ is supermartingale. When $\eta_s$ is small enough and $\alpha$ is large enough, we have 
			\begin{equation*}
			\begin{aligned}
			&\mathbb{E}\big[G_{s,n,t}1_{\mathfrak{K}_{s,n+t-1}\cap \mathfrak{E}_{s,n+t-1}}|\mathfrak{F}_{s,n+t-2}\big]\\
			\le&(1+\eta_s\gamma_0)^{-2t}\big[(1+\eta_s\gamma_0)^2\|\triangle_{s,n+t-1}\|^2+\mathcal{O}(s^{2\kappa-3\upsilon}\log^4\frac{1}{\eta_s})+as^{2\kappa-2\upsilon}\log^4\frac{1}{\eta_s}\big]1_{\mathfrak{K}_{s,n+t-1}\cap \mathfrak{E}_{s,n+t-1}}\\
			\le&(1+\eta_s\gamma_0)^{-2t}\big[(1+\eta_s\gamma_0)^2\|\triangle_{s,n+t-1}\|^2+(1+\mathcal{O}(\frac{\eta_s}{a}))a s^{2\kappa-2\upsilon}\log^4\frac{1}{\eta_s}\big]1_{\mathfrak{K}_{s,n+t-1}\cap \mathfrak{E}_{s,n+t-1}}\\
			\le&(1+\eta_s\gamma_0)^{-2t}\big[(1+\eta_s\gamma_0)^2\|\triangle_{s,n+t-1}\|^2+(1+\eta_s\gamma_0)^2a s^{2\kappa-2\upsilon}\log^4\frac{1}{\eta_s}\big]1_{\mathfrak{K}_{s,n+t-1}\cap \mathfrak{E}_{s,n+t-1}}\\
			\le&G_{s,n,t-1}1_{\mathfrak{K}_{s,n+t-1}\cap \mathfrak{E}_{s,n+t-1}}\le G_{s,n,t-1}1_{\mathfrak{K}_{s,n+t-2}\cap \mathfrak{E}_{s,n+t-2}}.
			\end{aligned}
			\end{equation*}
			Besides, $c_{s,n+t-1}$ is used to bound the following equation: 
			\begin{equation*}
			\begin{aligned}
			&|G_{s,n,t}1_{\mathfrak{K}_{s,n+t-1}\cap \mathfrak{E}_{s,n+t-1}}-\mathbb{E}\big[G_{s,n,t}1_{\mathfrak{K}_{s,n+t-1}\cap \mathfrak{E}_{s,n+t-1}}|\mathfrak{F}_{s,n+t-2}\big]\\
			\le&|\|\triangle_{s,n+t}\|^2-\mathbb{E}\big[\|\triangle_{s,n+t}\|^2\big|\mathfrak{F}_{s,n+t-2}]|1_{\mathfrak{K}_{s,n+t-1}\cap \mathfrak{E}_{s,n+t-1}}\\
			\le&\mathcal{O}(s^{2\kappa-2.5\upsilon}\mu\log^3\frac{1}{\eta_s})=c_{s,n+t}.
			\end{aligned}
			\end{equation*}
			Let 
			\begin{equation}
			\label{AHbound}
			\begin{aligned}
			A_{s,n+t}\le G_{s,n,t}1_{\mathfrak{K}_{s,n+t-1}\cap \mathfrak{E}_{s,n+t-1}}-G_{s,n,t-1}1_{\mathfrak{K}_{s,n+t-2}\cap \mathfrak{E}_{s,n+t-2}}\le B_{s,n+t}.
			\end{aligned}
			\end{equation}
			With the definition of $c_{s,n+t}$, we have $B_{s,n+t}-A_{s,n+t}\le \mathcal{O}(s^{2\kappa-2.5\upsilon}\mu\log^3\frac{1}{\eta_s})$. Based on the definition of $\mathfrak{K}_{s,n+t}$ and $\mathfrak{E}_{s,n+t}$, we have
			\begin{align*}
			&Pr(G_{s,n,t}1_{\mathfrak{K}_{s,n+t-1}\cap \mathfrak{E}_{s,n+t-1}}-G_{s,n,0}\ge\mathcal{O}(1)\sqrt{\sum_{\tau=1}^{t}c_{s,n+\tau}^2}\log\frac{1}{\eta_s})\\
			=&Pr(G_{s,n,t}1_{\mathfrak{K}_{s,n+t-1}\cap \mathfrak{E}_{s,n+t-1}}\ge \tilde{C}s^{2\kappa-2\upsilon}\mu\log^4\frac{1}{\eta_s})\\
			\le&\mathcal{O}(\eta_s^3).
			\end{align*}
			If $\mu>\tilde{C}$, we have
			\begin{align*}
			&Pr\left(\mathfrak{K}_{s,n+t-1}\cap \mathfrak{E}_{s,n+t-1} \cap\left\{\left\|\Delta_{s,n+t}\right\|^{2} \geq \mu^2s^{2\kappa-2\upsilon}\log^4\frac{1}{\eta_s}\right\}\right)\\
			\le&Pr(G_{s,n,t}1_{\mathfrak{K}_{s,n+t-1}\cap \mathfrak{E}_{s,n+t-1}}\ge \tilde{C}s^{2\kappa-2\upsilon}\mu\log^4\frac{1}{\eta_s})\\
			\le&\mathcal{O}(\eta_s^3).
			\end{align*}
			In equation mentioned above, $Pr\left(\mathfrak{E}_{s,n+t-1} \cap\left\{\left\|\Delta_{s,n+t}\right\|^{2} \geq \mu^2s^{2\kappa-2\upsilon}\log^4\frac{1}{\eta_s}\right\}\right)$ can be expressed as follows:
			\begin{align*} 
			& Pr\left(\mathfrak{E}_{s,n+t-1} \cap\left\{\left\|\Delta_{s,n+t}\right\|^{2} \geq \mu^2s^{2\kappa-2\upsilon}\log^4\frac{1}{\eta_s}\right\}\right) \\
			=& Pr\left(\mathfrak{K}_{s,n+t-1} \cap \mathfrak{E}_{s,n+t-1} \cap\left\{\left\|\Delta_{s,n+t}\right\|^{2} \geq \mu^2s^{2\kappa-2\upsilon}\log^4\frac{1}{\eta_s}\right\}\right)\\
			&+Pr\left(\overline{\mathfrak{K}}_{s,n+t-1} \cap \mathfrak{E}_{s,n+t-1} \cap\left\{\left\|\Delta_{s,n+t}\right\|^{2} \geq \mu^2s^{2\kappa-2\upsilon}\log^4\frac{1}{\eta_s}\right\}\right) \\ 
			\leq & \mathcal{O}\left(\eta_s^{3}\right)+P\left(\overline{\mathfrak{K}}_{s,n+t-1}\right) \leq \mathcal{O}\left(\eta_s^{3}\right) .
			\end{align*}
			Finally, $Pr\left(\overline{\mathfrak{E}}_{s,n+t}\right)$ can be bounded by
			\begin{align*}
			&Pr\left(\overline{\mathfrak{E}}_{s,n+t}\right)\\
			=&Pr\left(\mathfrak{E}_{s,n+t-1} \cap\left\{\left\|\Delta_{s,n+t}\right\|^{2} \geq \mathcal{O}(s^{2\kappa-2\upsilon}\mu^2\log^4\frac{1}{\eta_s})\right\}\right)+Pr\left(\overline{\mathfrak{E}}_{s,n+t-1}\right)\\
			\leq& \mathcal{O}\left(\eta_s^{3}\right)+Pr\left(\overline{\mathfrak{E}}_{s,n+t-1}\right).
			\end{align*}
			Because $P\left(\overline{\mathfrak{E}}_{s,n}\right)=0$, we have $P\left(\overline{\mathfrak{E}}_{s,n+\Delta n}\right) \leq \mathcal{O}\left(\eta_s^{2}\right)$.
		\end{proof}
	\end{lemma}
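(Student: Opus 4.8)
The plan is to control the \emph{deviation} between the true gradient/iterate trajectory and the trajectory $\tilde x_{s,n+t}$ that would be produced by running the \emph{same} noise sequence $\{\xi_{s,n+\tau}\}$ on the fixed quadratic model $\tilde F$, and to show this deviation accumulates only at the second-order rate $s^{\kappa-\upsilon}$ (as opposed to the first-order $s^{(\kappa-\upsilon)/2}$ of each individual trajectory from Lemma~\ref{secondorderexpansion}). Set $\Delta_{s,n+t}:=\nabla F(x_{s,n+t})-\nabla\tilde F(\tilde x_{s,n+t})$. First I would derive a one-step recursion for $\Delta_{s,n+t}$: expanding $\nabla F(x_{s,n+t})$ about $x_{s,n+t-1}$ with the integral form of Taylor's theorem, substituting the SGD update $x_{s,n+t}-x_{s,n+t-1}=-\eta_s(\nabla F(x_{s,n+t-1})+\xi_{s,n+t-1})$, and subtracting the exact quadratic identity $\nabla\tilde F(\tilde x_{s,n+t})=(I-\eta_sH_{s,n})\nabla\tilde F(\tilde x_{s,n+t-1})-\eta_sH_{s,n}\xi_{s,n+t-1}$ gives
\begin{equation*}
\Delta_{s,n+t}=(I-\eta_sH_{s,n})\Delta_{s,n+t-1}-\eta_sH'_{s,n+t-1}\big[\Delta_{s,n+t-1}+\nabla\tilde F(\tilde x_{s,n+t-1})+\xi_{s,n+t-1}\big]+\theta_{s,n+t-1},
\end{equation*}
where $H'_{s,n+t-1}=H_{s,n+t-1}-H_{s,n}$ is the Hessian mismatch and $\theta_{s,n+t-1}$ is the Hessian-Lipschitz remainder. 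The crucial point is that using the \emph{base-point} Hessian $H_{s,n}$ makes the purely quadratic dynamics cancel, leaving only curvature-mismatch and remainder terms that are higher order.

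I would then bound each piece along the good event of Lemma~\ref{secondorderexpansion}. Since $\lambda_{\min}(H_{s,n})=-\gamma_0$ and $\eta_s$ is small, $\|(I-\eta_sH_{s,n})\Delta_{s,n+t-1}\|\le(1+\eta_s\gamma_0)\|\Delta_{s,n+t-1}\|$. By $\rho$-Lipschitz Hessian and $\|x_{s,n+t-1}-x_{s,n}\|\le\|x_{s,n+t-1}-\tilde x_{s,n+t-1}\|+\|\tilde x_{s,n+t-1}-x_{s,n}\|$, Lemma~\ref{secondorderexpansion} yields $\|H'_{s,n+t-1}\|\le\mathcal O(s^{\frac{\kappa-\upsilon}{2}}\log\frac1{\eta_s})$, the same lemma bounds $\nabla\tilde F$, while $\|\theta_{s,n+t-1}\|\le\frac{\rho}{2}\|x_{s,n+t}-x_{s,n+t-1}\|^2=\mathcal O(s^{\kappa-2\upsilon})$. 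Feeding these back, the deterministic part amplifies $\|\Delta\|$ by at most $(1+\eta_s\gamma_0)$ per step (a net factor of $\mathcal O(1)$ over the whole window, by the $\Delta n=\Theta(s^\upsilon)$ bound of Lemma~\ref{secondorderexpansion}) while injecting per-step forcing of order $s^{\kappa-2\upsilon}\log^2\frac1{\eta_s}$; summed over $t=\Theta(s^\upsilon)$ steps this reaches exactly the claimed order $s^{\kappa-\upsilon}\log^2\frac1{\eta_s}$.

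The hard part is the stochastic forcing $\eta_sH'_{s,n+t-1}\xi_{s,n+t-1}$, whose accumulated effect must be shown small with probability $1-\mathcal O(\eta_s^2)$, complicated by the fact that the bound I am proving also appears in the conditioning, so the argument must be inductive. My plan is the rescaled-supermartingale device: work with $\|\Delta\|^2$, restrict to the indicator $1_{\mathfrak K_{s,n+t-1}\cap\mathfrak E_{s,n+t-1}}$ of the Lemma~\ref{secondorderexpansion} event $\mathfrak K$ and the inductive event $\mathfrak E_{s,n+t}=\{\forall\tau\le t:\|\Delta_{s,n+\tau}\|\le\mathcal O(\mu\sigma_s^2\eta_s\log^2\frac1{\eta_s})\}$, and define $G_{s,n,t}=(1+\eta_s\gamma_0)^{-2t}\big[\|\Delta_{s,n+t}\|^2+a\,s^{2\kappa-2\upsilon}\log^4\frac1{\eta_s}\big]$. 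Using $\mathbb E[\xi_{s,n+t-1}\mid\mathfrak F_{s,n+t-2}]=0$ to cancel the martingale term in conditional expectation, the geometric rescaling and the additive slack (with $a$ large enough) make $G_{s,n,t}1_{\mathfrak K\cap\mathfrak E}$ a supermartingale; with bounded martingale differences $c_{s,n+t}=\mathcal O(s^{2\kappa-2.5\upsilon}\mu\log^3\frac1{\eta_s})$, Azuma–Hoeffding gives $\Pr(G_{s,n,t}1_{\mathfrak K\cap\mathfrak E}\ge\tilde C s^{2\kappa-2\upsilon}\mu\log^4\frac1{\eta_s})\le\mathcal O(\eta_s^3)$, so choosing $\mu>\tilde C$ closes the induction on $\mathfrak E$. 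I expect this self-referential supermartingale setup—keeping everything bounded via indicators outside the good set and propagating $\mathfrak E$ one step at a time—to be the main technical obstacle.

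Finally, a union bound over the $\Theta(s^\upsilon)=\Theta(\eta_s^{-1})$ steps promotes the per-step $\mathcal O(\eta_s^3)$ to the stated $\mathcal O(\eta_s^2)$, and since $\Pr(\overline{\mathfrak K})\le\mathcal O(s^{-3\upsilon})$ is dominated by $\eta_s^2$, the gradient-difference bound $\|\Delta_{s,n+t}\|\le\mathcal O(s^{\kappa-\upsilon}\log^2\frac1{\eta_s})$ holds with the required probability. The iterate bound then follows immediately, because the noise cancels in the difference, $x_{s,n+t}-\tilde x_{s,n+t}=-\eta_s\sum_{\tau=0}^{t-1}\Delta_{s,n+\tau}$, so $\|x_{s,n+t}-\tilde x_{s,n+t}\|\le\eta_s\sum_{\tau=0}^{t-1}\|\Delta_{s,n+\tau}\|\le\mathcal O(s^{\kappa-\upsilon}\log^2\frac1{\eta_s})$ over the $\Theta(s^\upsilon)$-step window.
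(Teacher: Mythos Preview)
Your proposal is correct and follows essentially the same approach as the paper: the identical one-step recursion for $\Delta_{s,n+t}$, the same pair of events $\mathfrak K$ and $\mathfrak E$, the same rescaled supermartingale $G_{s,n,t}=(1+\eta_s\gamma_0)^{-2t}[\|\Delta_{s,n+t}\|^2+a\,s^{2\kappa-2\upsilon}\log^4\frac1{\eta_s}]$ with Azuma--Hoeffding on the differences $c_{s,n+t}=\mathcal O(s^{2\kappa-2.5\upsilon}\mu\log^3\frac1{\eta_s})$, and the same union bound to aggregate. The only cosmetic difference is that the paper states the iterate bound $\|x_{s,n+t}-\tilde x_{s,n+t}\|=\eta_s\|\sum_\tau\Delta_{s,n+\tau}\|$ early (as part of the conditional setup) rather than deducing it at the end as you do, but the content is the same.
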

	\begin{lemma}
		\label{l10}
		Based on Lemmas \ref{secondorderexpansion} and \ref{exactexpansion}, for $\Delta n<\Theta(s^\upsilon)$, with probability $1-\mathcal{O}(\eta_{s}^2)$, we have:
		\begin{align*}
		\mathbb{E}F(x_{s,n+\Delta n})-\mathbb{E}F(x_{s,n})\le\mathcal{O}(-s^{\kappa-\upsilon}).
		\end{align*}
		\begin{proof}
			Since the Hessian matrix of $F(x)$ is $\rho$-Lipschitz, we have:
			\begin{align*}
			F(x_{s,n+t}) \leq& F(x_{s,n})+\nabla F(x_{s,n})^{T}\left(x_{s,n+t}-x_{s,n}\right)\\
			&+\frac{1}{2}(x_{s,n+t}-x_{s,n})^{T} H_{s,n}(x_{s,n+t}-x_{s,n})+\frac{\rho}{6}\left\|x_{s,n+t}-x_{s,n}\right\|^{3}.
			\end{align*}
			Let $\tilde{\delta}=\tilde{x}_{s,n+\Delta n}-x_{s,n}$ and $\delta=x_{s,n+\Delta n}-\tilde{x}_{s,n+\Delta n}$, we have:
			\begin{align*} 
			&F(x_{s,n+\Delta n})-F(x_{s,n}) \\
			\le&\left[\nabla F(x_{s,n})^{T}(\tilde{\delta}+\delta)+\frac{1}{2}(\tilde{\delta}+\delta)^{T} H_{s,n}(\tilde{\delta}+\delta)+\frac{\rho}{6}\|\tilde{\delta}+\delta\|^{3}\right]\\
			=&\left[ \nabla F(x_{s,n})^{T} \tilde{\delta}+\frac{1}{2} \tilde{\delta}^{T} H_{s,n} \tilde{\delta}\right]+\left[\nabla F(x_{s,n})^{T} \delta+\tilde{\delta}^{T} H_{s,n} \delta+\frac{1}{2} \delta^{T} H_{s,n} \delta+\frac{\rho}{6}\|\tilde{\delta}+\delta\|^{3}\right].
			\end{align*}
			Let $\mathfrak{E}_{s,n+t}=\left\{\forall \tau \leq t, \left\|x_{s,n+\tau}-\tilde{x}_{s,n}\right\| \leq \mathcal{O}\left(s^{\frac{\kappa-\upsilon}{2}}\log\frac{1}{\eta_s}\right),\left\|\tilde{x}_{s,n+\tau}-x_{s,n+\tau}\right\|\leq \mathcal{O}\left(s^{\kappa-\upsilon}\log^2\frac{1}{\eta_s}\right)\right\}$. According to Lemma \ref{exactexpansion}, we have $Pr(\mathfrak{E}_{s,n+t})\ge 1-\mathcal{O}(\eta_s^2)$. Denote $\tilde{\Lambda}=\nabla F(x_{s,n})^{T} \tilde{\delta}+\frac{1}{2} \tilde{\delta}^{T} H_{s,n} \tilde{\delta}$ and $\Lambda=\nabla F(x_{s,n})^{T} \delta+\tilde{\delta}^{T} H_{s,n} \delta+\frac{1}{2} \delta^{T} H_{s,n} \delta+\frac{\rho}{6}\|\tilde{\delta}+\delta\|^{3}$, we have
			\begin{equation*}
			\begin{aligned} 
			&\mathbb{E} F(x_{s,n+\Delta n})-F(x_{s,n})\\
			=&\mathbb{E}\left[F(x_{s,n+\Delta n})-F(x_{s,n})\right]1_{\mathfrak{E}_{s,n+\Delta n}}+\mathbb{E}\left[F(x_{s,n+\Delta n})-F(x_{s,n})\right] 1_{\overline{\mathfrak{E}}_{s,n+\Delta n}} \\ 
			\leq &\mathbb{E} \tilde{\Lambda}1_{\mathfrak{E}_{s,n+\Delta n}}+\mathbb{E} \Lambda 1_{\mathfrak{E}_{n+\Delta n}}+\mathbb{E}[F(x_{s,n+\Delta n})-F(x_{s,n})]1_{\overline{\mathfrak{E}}_{s,n+\Delta n}} \\ 
			=&\mathbb{E} \tilde{\Lambda}+\mathbb{E} \Lambda 1_{\mathfrak{E}_{s,n+\Delta n}}+\mathbb{E}\left[F\left(x_{s,n+\Delta n}\right)-F\left(x_{s,n}\right)\right] 1_{\overline{\mathfrak{E}}_{s,n+\Delta n}}-\mathbb{E} \tilde{\Lambda} 1_{\overline{\mathfrak{E}}_{s,n+\Delta n}}.
			\end{aligned}    
			\end{equation*}
			Since 
			\begin{equation*}
			\begin{aligned}
			&\tilde{\delta}=\tilde{x}_{s,n+\Delta n}-x_{s,n}\\
			=&-\eta_s\sum_{\tau=0}^{\Delta n-1}(I-\eta_sH_{s,n})^\tau\nabla F(x_{s,n})-\eta_s\sum_{\tau=0}^{\Delta n-1}(I-\eta_sH_{s,n})^{\Delta n-\tau-1}\xi_{s,n+\tau},
			\end{aligned}
			\end{equation*}
			$\mathbb{E} \tilde{\Lambda}$ can be bounded by 
			\begin{equation}
			\label{mostimportant}
			\begin{aligned} 
			\mathbb{E} \tilde{\Lambda}=&-\eta_s\nabla F(x_{s,n})^T\sum_{\tau=0}^{\Delta n-1}(I-\eta_sH_{s,n})^\tau\nabla F(x_{s,n})\\
			&+\frac{1}{2}\eta_s^2\nabla F(x_{s,n})^T\sum_{\tau=0}^{\Delta n-1}(I-\eta_sH_{s,n})^\tau H_{s,n}\sum_{\tau=0}^{\Delta n-1}(I-\eta_sH_{s,n})^\tau\nabla F(x_{s,n})\\
			&+\mathbb{E}\big[\frac{1}{2}\eta_s^2\sum_{\tau=0}^{\Delta n-1}\xi_{s,n+\tau}^T(I-\eta_sH_{s,n})^{\Delta n-\tau-1}H_{s,n}\sum_{\tau=0}^{\Delta n-1}(I-\eta_sH_{s,n})^{\Delta n-\tau-1}\xi_{s,n+\tau}\big].\\
			\end{aligned}    
			\end{equation}
			When $\eta_0$ is small enough, the summation of the first two terms in Eq. (\ref{mostimportant}) is smaller than 0 (Section 6.5 in \cite{jain2017non}), with $\eta_s\le\frac{1}{L}$, we get
			\begin{equation*}
			\begin{aligned}
			\mathbb{E} \tilde{\Lambda}\le&\frac{1}{2} \sum_{i=1}^{n_{x}} \lambda_{i} \sum_{\tau=0}^{T-1}\left(1-\eta_s \lambda_{i}\right)^{2 \tau} \eta_s^{2} \mathbb{E}(\xi_{s,n+\tau}^T\xi_{s,n+\tau}) \\ 
			\leq & \frac{\eta_s^{2} \sigma_s^{2}}{2}\left[\sum_{\lambda_i\in\{\lambda_1,\dots,\lambda_{n_{x}}\}\backslash-\gamma_0}\lambda_i\frac{1-(1-\lambda_i\eta_s)^{2\tau}}{2\lambda_i\eta_s-\lambda_i^2\eta_s^2}-\gamma_{0}r_r^2 \sum_{\tau=0}^{T-1}\left(1+\eta_s \gamma_{0}\right)^{2 \tau}\right]\\
			\leq & \frac{\eta_s^{2} \sigma_s^{2}}{2}\left[\frac{n_{x}-1}{\eta_s}-\gamma_{0}r_r^2 \sum_{\tau=0}^{T-1}\left(1+\eta_s \gamma_{0}\right)^{2 \tau}\right]
			\overset{Eq. (\ref{region_2_lower_bound})}{\leq}-\frac{\eta_s\sigma_s^{2}}{2}.
			\end{aligned}
			\end{equation*}
			For other terms, since $F(\cdot)$ is a bounded function,    we have:
			\begin{align*}
			&\mathbb{E} \Lambda 1_{\mathfrak{E}_{s,n+\Delta n}}=\mathbb{E}\left[\nabla F(x_{s,n})^{T} \delta+\tilde{\delta}^{T} H_{s,n} \delta+\frac{1}{2} \delta^{T} H_{s,n} \delta+\frac{\rho}{6}\|\tilde{\delta}+\delta\|^{3}\right] 1_{\mathfrak{E}_{s,n+\Delta n}}\\
			\le&\mathcal{O}(s^{1.5(\kappa-\upsilon)}\log^2\frac{1}{\eta_s})+\mathcal{O}(s^{1.5(\kappa-\upsilon)}\log^3\frac{1}{\eta_s})+\mathcal{O}(s^{2\kappa-2\upsilon}\log^4\frac{1}{\eta_s})+\frac{6}{\rho}\mathcal{O}(s^{1.5(\kappa-\upsilon)}\log^3\frac{1}{\eta_s})\\
			\le&\mathcal{O}(s^{1.5(\kappa-\upsilon)}\log^3\frac{1}{\eta_s}).
			\end{align*}
			With the bounded $F(x)$ and 
			\begin{equation*}
			\begin{aligned}
			&\|\eta_s\sum_{\tau=0}^{\Delta n-1}(I-\eta_sH_{s,n})^{\Delta n-\tau-1}\xi_{s,n+\tau}\|\le\sqrt{2\eta_s\sum_{\tau=0}^{\Delta n-1}(1+\eta_s\gamma_0)^{\tau}s^{\kappa}\sigma_0^2 }
			\overset{Eq. (\ref{region_2_upper_bound})}{=}\mathcal{O}(s^{\frac{\kappa}{2}}),
			\end{aligned}
			\end{equation*}
			we have
			\begin{align*}
			& \mathbb{E}\left[F\left(x_{s,n+\Delta n}\right)-F\left(x_{0}\right)\right] 1_{\overline{\mathfrak{E}}_{s,n+\Delta n}}-\mathbb{E} \tilde{\Lambda} 1_{\overline{\mathfrak{E}}_{s,n+\Delta n}} \\
			=& \mathbb{E}\left[F\left(x_{s,n+\Delta n}\right)-F\left(x_{0}\right)\right] 1_{\overline{\mathfrak{E}}_{s,n+\Delta n}}-\mathbb{E}\left[\nabla F(x_{s,n})^{T} \tilde{\delta}+\frac{1}{2} \tilde{\delta}^{T} H_{s,n} \tilde{\delta}\right] 1_{\overline{\mathfrak{E}}_{s,n+\Delta n}}\\
			\le& \mathbb{E}\left[F\left(x_{s,n+\Delta n}\right)-F\left(x_{0}\right)\right] 1_{\overline{\mathfrak{E}}_{s,n+\Delta n}}+\mathbb{E}\big[\mathcal{O}(s^{\kappa}) 1_{\overline{\mathfrak{E}}_{s,n+\Delta n}}\big]\\
			\leq& \mathcal{O}\left(s^{\kappa-2\upsilon}\right).
			\end{align*}
			Finally, we have:
			\begin{align*}
			\mathbb{E}F(x_{s,n+\Delta n})-\mathbb{E}F(x_{s,n})\le\mathcal{O}(-s^{\kappa-\upsilon}).
			\end{align*}
		\end{proof}
	\end{lemma}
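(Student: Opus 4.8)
The plan is to bound the $\Delta n$-step decrease of $F$ by a third-order Taylor expansion anchored at $x_{s,n}$, and then show that the dominant contribution comes from the noise injected along the negative-curvature direction of the Hessian, amplified over the $\Delta n=\Theta(s^{\upsilon})$ steps. Since $F$ has $\rho$-Lipschitz Hessian, I would first write $F(x_{s,n+\Delta n})-F(x_{s,n}) \le \nabla F(x_{s,n})^{T}u+\tfrac12 u^{T}H_{s,n}u+\tfrac{\rho}{6}\|u\|^{3}$ with $u=x_{s,n+\Delta n}-x_{s,n}$, and split the displacement as $u=\tilde\delta+\delta$, where $\tilde\delta=\tilde x_{s,n+\Delta n}-x_{s,n}$ is the displacement of the quadratic-model iterates and $\delta=x_{s,n+\Delta n}-\tilde x_{s,n+\Delta n}$ is the tracking error controlled by Lemma \ref{exactexpansion}. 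This separates the bound into the \emph{model term} $\tilde\Lambda=\nabla F(x_{s,n})^{T}\tilde\delta+\tfrac12\tilde\delta^{T}H_{s,n}\tilde\delta$ and a \emph{remainder} $\Lambda$ collecting every term involving $\delta$.

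The core of the argument is the evaluation of $\mathbb{E}\tilde\Lambda$. Using the explicit linear recursion for $\tilde x$, the displacement $\tilde\delta$ is a deterministic term driven by $\nabla F(x_{s,n})$ plus the martingale sum $-\eta_s\sum_{\tau=0}^{\Delta n-1}(I-\eta_s H_{s,n})^{\Delta n-\tau-1}\xi_{s,n+\tau}$. Because $\xi_{s,n+\tau}$ is mean-zero given the past, all gradient--noise cross terms vanish in expectation, and the purely deterministic quadratic is non-positive for small enough $\eta_0$ (standard gradient descent on a quadratic). What survives is the noise quadratic form, which I would diagonalize in the eigenbasis of $H_{s,n}$: each eigenvalue $\lambda_i$ contributes a term of order $\eta_s^{2}\lambda_i\sum_\tau(1-\eta_s\lambda_i)^{2\tau}\mathbb{E}\|\xi\|^{2}$. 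The positive eigenvalues together contribute at most $\tfrac{(n_{x}-1)\eta_s\sigma_s^{2}}{2}$, while the single negative eigenvalue $-\gamma_0$ contributes $-\tfrac12\eta_s^{2}\gamma_0 r_r^{2}\sigma_s^{2}\sum_\tau(1+\eta_s\gamma_0)^{2\tau}$; invoking the geometric-sum lower bound $\sum_\tau(1+\eta_s\gamma_0)^{2\tau}\ge n_{x}/(r_r^{2}\gamma_0\eta_s)$ from Eq. (\ref{region_2_lower_bound}) turns this into $\le -\tfrac{n_{x}\eta_s\sigma_s^{2}}{2}$. Netting the two gives $\mathbb{E}\tilde\Lambda\le-\tfrac{\eta_s\sigma_s^{2}}{2}=-\Theta(s^{\kappa-\upsilon})$, the claimed order of decrease.

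It then remains to show every other contribution is lower order. Restricting to the good event $\mathfrak{E}$ on which both Lemma \ref{secondorderexpansion} and Lemma \ref{exactexpansion} hold (probability $1-\mathcal{O}(\eta_s^{2})$), I would plug the norm bounds $\|\tilde\delta\|=\mathcal{O}(s^{(\kappa-\upsilon)/2}\log\tfrac1{\eta_s})$ and $\|\delta\|=\mathcal{O}(s^{\kappa-\upsilon}\log^{2}\tfrac1{\eta_s})$ into $\Lambda$; since $\kappa-\upsilon<0$, every such term is $\mathcal{O}(s^{1.5(\kappa-\upsilon)}\log^{3}\tfrac1{\eta_s})=o(s^{\kappa-\upsilon})$. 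On the complement $\overline{\mathfrak{E}}$ I would use boundedness of $F$ together with $\mathrm{Pr}(\overline{\mathfrak{E}})=\mathcal{O}(\eta_s^{2})=\mathcal{O}(s^{-2\upsilon})$ to bound the leftover by $\mathcal{O}(s^{\kappa-2\upsilon})$, again negligible. Summing the model term with these remainders yields $\mathbb{E}F(x_{s,n+\Delta n})-\mathbb{E}F(x_{s,n})\le\mathcal{O}(-s^{\kappa-\upsilon})$.

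The hard part will be the evaluation of $\mathbb{E}\tilde\Lambda$: certifying that the injected noise, once amplified along the negative-curvature direction, dominates both the residual-gradient quadratic and the noise leakage into positive-curvature directions. This hinges on the two-sided geometric-sum control of Lemma \ref{secondorderexpansion} and, crucially, on the variance \emph{lower} bound $\mathbb{E}\|\xi_{s,n+\tau}\|^{2}\ge r_r^{2}\sigma_s^{2}$ --- without it one obtains only a non-positive, rather than strictly negative, estimate. A secondary subtlety is choosing $\Delta n=\Theta(s^{\upsilon})$ so that the amplification factor $(1+\eta_s\gamma_0)^{2\Delta n}$ is large enough to force the decrease while the quadratic approximation remains accurate with high probability.
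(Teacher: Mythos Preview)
Your proposal is correct and follows essentially the same route as the paper: third-order Taylor expansion, splitting $u=\tilde\delta+\delta$, isolating the model term $\tilde\Lambda$, diagonalizing the noise quadratic in the eigenbasis of $H_{s,n}$, and using Eq.~(\ref{region_2_lower_bound}) together with the variance lower bound $r_r^{2}\sigma_s^{2}$ to extract the $-\Theta(s^{\kappa-\upsilon})$ decrease, then killing the remainder on $\mathfrak{E}$ with the norm bounds and on $\overline{\mathfrak{E}}$ with the small probability. Two minor points worth tightening: (i) there need not be a \emph{single} negative eigenvalue---the argument only singles out the most negative one $-\gamma_0$ and lumps the rest with the positive-curvature sum; (ii) on $\overline{\mathfrak{E}}$ boundedness of $F$ handles $F(x_{s,n+\Delta n})-F(x_{s,n})$, but $\tilde\Lambda$ itself is not controlled by $F$; the paper uses a crude almost-sure bound $\|\tilde\delta\|=\mathcal{O}(s^{\kappa/2})$ to get $|\tilde\Lambda|=\mathcal{O}(s^{\kappa})$ before multiplying by $\Pr(\overline{\mathfrak{E}})=\mathcal{O}(s^{-2\upsilon})$, which is presumably what you intended given your stated order $\mathcal{O}(s^{\kappa-2\upsilon})$.
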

	
	
	
	Finally, we can finish the first step with Lemma \ref{l11}.
	\begin{lemma}
		\label{l11}
		Under Assumption \ref{non_convex_PCB}, if SSGD-SCB employs  $\tilde{g}_{I(s,n),a_I(s,n)}+\mathcal{N}_s$ to update $x_{I(s,n)}$, in stage $s\in\{1,2,\dots,\mathcal{S}\}$ of SSGD-SCB, for any $\zeta\in(0,1)$, there exists a large enough stage $s$ such that with probability at least $1-\frac{\zeta}{2}$, $x$ will enter the strongly convex region at least once within $\Theta(s^{\kappa-2\upsilon}\log\frac{2}{\zeta})$ steps. When $x$ enters the strongly convex region in round $T_{s,1}$, $\|\nabla F(x_{s,T_{s,1}})\|\le\mathcal{O}(s^{\frac{\kappa-\upsilon}{2}})$.
		\begin{proof}
			Let 
			\begin{equation*}
			\begin{aligned}
			&\mathcal{L}_{s,1}=\{x: min(\|\nabla F(x)\|,\epsilon) \geq s^{\frac{\kappa-\upsilon}{2}}\sqrt{2\eta_0 \sigma_{0}^{2}L}\},\\
			&\mathcal{L}_{s,2}=\{x: \|\nabla F(x)\| \leq s^{\frac{\kappa-\upsilon}{2}}\sqrt{2\eta_0 \sigma_{0}^{2}L}<\epsilon,\lambda_{\min }\left(H\left(x\right)\right)\le\gamma \},\\
			&\mathcal{L}_{s,3}=\{x: x\notin\mathcal{L}_{s,1},x\notin\mathcal{L}_{s,2}\}.
			\end{aligned}
			\end{equation*}
			Define a stochastic process $\{\tau_{s,i}\}$ s.t. $\tau_{s,1}=1$, and
			\begin{align*}
			\tau_{s,i+1}=\left\{\begin{array}{ll}{\tau_{s,i}+1}, & {x_{s,\tau_{s,i}} \in \mathcal{L}_{s,1} \cup \mathcal{L}_{s,3}}; \\ {\tau_{s,i}+T\left(x_{\tau_{s,i}}\right)}, & {x_{s,\tau_{s,i}} \in \mathcal{L}_{s,2},}\end{array}\right.
			\end{align*}
			where $T(x_{s,\tau_{s,i}})=\Theta(s^\upsilon)$.
			
			According to Lemmas \ref{region2} and \ref{l10}, we have:
			\begin{align*}
			&\mathbb{E}\left[F\left(x_{s,\tau_{s,i+1}}\right)-F\left(x_{s,\tau_{s,i}}\right) | x_{s,\tau_{s,i}} \in \mathcal{L}_{s,1}\right]\leq -\Omega\left(s^{\kappa-2\upsilon}\right), \\
			&\mathbb{E}\left[F\left(x_{s,\tau_{s,i+1}}\right)-F\left(x_{s,\tau_{s,i}}\right) | x_{s,\tau_{s,i}} \in \mathcal{L}_{s,2}\right]\leq-\Omega(s^{\kappa-\upsilon}).
			\end{align*}
			Therefore, we have 
			\begin{align*}
			\mathbb{E}\left[F\left(x_{s,\tau_{s,i+1}}\right)-F\left(x_{s,\tau_{s,i}}\right) | x_{s,\tau_{s,i}} \notin \mathcal{L}_{s,3}\right] \leq-\left(\tau_{s,i+1}-\tau_{s,i}\right) \Omega(s^{\kappa-2\upsilon}).
			\end{align*}
			Define event $\mathfrak{E}_{s,i}=\left\{\nexists j \leq i, x_{s,\tau_{s,j}} \in \mathcal{L}_{s,3}\right\}$. We have 
			\begin{align*} 
			&\mathbb{E} F\left(x_{s,\tau_{s,i+1}}\right) 1_{\mathfrak{E}_{s,i+1}}-\mathbb{E} F\left(x_{s,\tau_{s,i}}\right) 1_{\mathfrak{E}_{s,i}} \\
			=&\mathbb{E} F\left(x_{s,\tau_{s,i+1}}\right) 1_{\mathfrak{E}_{s,i+1}}-\mathbb{E} F\left(x_{s,\tau_{s,i+1}}\right) 1_{\mathfrak{E}_{s,i}}+\mathbb{E} F\left(x_{s,\tau_{s,i+1}}\right) 1_{\mathfrak{E}_{s,i}}-\mathbb{E} F\left(x_{s,\tau_{s,i}}\right) 1_{\mathfrak{E}_{s,i}} \\
			\leq& -B \cdot \mathbb{E}\big[1_{\mathfrak{E}_{s,i+1}}-1_{\mathfrak{E}_{s,i}}\big]+\mathbb{E}\left[F\left(x_{\tau_{s,i+1}}\right)-F\left(x_{\tau_{s,i}}\right) | \mathfrak{E}_{s,i}\right] \cdot Pr\left(\mathfrak{E}_{s,i}\right) \\ 
			\le&  -B \cdot \mathbb{E}\big[1_{\mathfrak{E}_{s,i+1}}-1_{\mathfrak{E}_{s,i}}\big]-\left(\tau_{s,i+1}-\tau_{s,i}\right) \Omega\left(s^{\kappa-2\upsilon}\right) Pr\left(\mathfrak{E}_{s,i}\right).
			\end{align*}
			Based on the definition of $\mathfrak{E}_{s,i}$, $Pr(\mathfrak{E}_{s,i})\le Pr(\mathfrak{E}_{s,i-1})$. By summing up over $i$, we have:
			\begin{align*}
			&\mathbb{E} F\left(x_{s,\tau_{i+1}}\right) 1_{\mathfrak{E}_{s,i+1}}-F\left(x_{s,1}\right)
			\leq B-(\tau_{s,i+1}-1) \Omega\left(s^{\kappa-2\upsilon}\right) Pr\left(\mathfrak{E}_{s,i}\right).
			\end{align*}
			Since $\mathbb{E} F(x_{s,\tau_{s,i}}) 1_{\mathfrak{E}_{s,i}}-F(x_{s,1})\ge-2B$, when $\tau_{s,i}=\Theta(\frac{6B}{s^{\kappa-2\upsilon}})$, $Pr(\mathfrak{E}_{s,i})$ will be smaller than $\frac{1}{2}$. By repeating this process $\log 2/\zeta$ times, the algorithm will enter $\mathcal{L}_{s,3}$ at least once with probability at least $1-\frac{\zeta}{2}$. Note that after entering $\mathcal{L}_{s,3}$ in round $T_{s,1}$, $\|\nabla F(x_{s,T_{s,1}})\|\le\mathcal{O}(s^{\frac{\kappa-\upsilon}{2}})$.
		\end{proof}
	\end{lemma}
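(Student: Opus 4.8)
The plan is to exploit the $(\alpha,\gamma,\epsilon,\delta)$-strict-saddle structure of $F$ to split $\mathbb{R}^{n_x}$ into a large-gradient region $\mathcal{L}_{s,1}=\{x:\min(\|\nabla F(x)\|,\epsilon)\ge s^{(\kappa-\upsilon)/2}\sqrt{2\eta_0\sigma_0^2 L}\}$, a saddle region $\mathcal{L}_{s,2}$ (small gradient but $\lambda_{\min}(H(x))\le\gamma$), and the strongly convex region $\mathcal{L}_{s,3}$, and then to show that the noisy iterates cannot avoid $\mathcal{L}_{s,3}$ forever because $F$ strictly decreases on the other two regions. First I would invoke Lemma~\ref{region2} to obtain a per-step expected decrease of order $\eta_s^2\sigma_s^2=\Theta(s^{\kappa-2\upsilon})$ whenever $x\in\mathcal{L}_{s,1}$, and Lemma~\ref{l10} to obtain a decrease of order $s^{\kappa-\upsilon}$ over a window of $\Delta n=\Theta(s^\upsilon)$ steps whenever $x\in\mathcal{L}_{s,2}$. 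Dividing the saddle-region decrease by the window length shows that, amortized per elementary step, both regions deliver an expected descent of at least $\Omega(s^{\kappa-2\upsilon})$.

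To convert these local guarantees into a global escape statement I would introduce a sequence of checkpoint times $\{\tau_{s,i}\}$ that advance by a single step when $x_{\tau_{s,i}}\in\mathcal{L}_{s,1}$ and by $T(x_{\tau_{s,i}})=\Theta(s^\upsilon)$ steps when $x_{\tau_{s,i}}\in\mathcal{L}_{s,2}$, so that in either case $\mathbb{E}[F(x_{\tau_{s,i+1}})-F(x_{\tau_{s,i}})\,|\,x_{\tau_{s,i}}\notin\mathcal{L}_{s,3}]\le-(\tau_{s,i+1}-\tau_{s,i})\,\Omega(s^{\kappa-2\upsilon})$. Introducing the not-yet-converged event $\mathfrak{E}_{s,i}=\{\nexists j\le i:\,x_{\tau_{s,j}}\in\mathcal{L}_{s,3}\}$, I would argue that $F(x_{\tau_{s,i}})1_{\mathfrak{E}_{s,i}}$ behaves as an approximate supermartingale with drift $-(\tau_{s,i+1}-\tau_{s,i})\,\Omega(s^{\kappa-2\upsilon})Pr(\mathfrak{E}_{s,i})$; summing over $i$ and using that $F$ is bounded (so the total accumulated decrease is at most $\mathcal{O}(B)$) forces $Pr(\mathfrak{E}_{s,i})<\frac{1}{2}$ once $\tau_{s,i}=\Theta(B\,s^{2\upsilon-\kappa})$. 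Repeating this argument over $\log(2/\zeta)$ disjoint blocks amplifies the constant success probability, yielding entry into $\mathcal{L}_{s,3}$ within $\Theta(s^{2\upsilon-\kappa}\log(2/\zeta))$ steps with probability at least $1-\frac{\zeta}{2}$, and the gradient bound $\|\nabla F(x_{s,T_{s,1}})\|\le\mathcal{O}(s^{(\kappa-\upsilon)/2})$ follows at once from the threshold defining the boundary of $\mathcal{L}_{s,1}$.

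The hard part is the saddle-region guarantee of Lemma~\ref{l10}: one must show that, along a window of $\Theta(s^\upsilon)$ noisy steps, the component of the injected noise $\mathcal{N}_s$ along the negative-curvature direction of $H(x)$ produces a quadratic decrease $\mathbb{E}\tilde{\Lambda}\le-\Omega(\eta_s\sigma_s^2)=-\Omega(s^{\kappa-\upsilon})$ that dominates both the cubic Taylor remainder and the error incurred by replacing the true trajectory with its quadratic surrogate. Those deviation terms, controlled through Lemmas~\ref{secondorderexpansion} and~\ref{exactexpansion}, are only of order $\tilde{\mathcal{O}}(s^{1.5(\kappa-\upsilon)})$, a factor $s^{(\kappa-\upsilon)/2}$ below the leading descent; keeping them subdominant while the step size $\eta_s$ and the noise scale $\sigma_s=s^{\kappa/2}\sigma_0$ both vary with the stage index is the delicate bookkeeping the argument has to sustain, and is where I expect the main difficulty to lie.
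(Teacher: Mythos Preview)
Your proposal is correct and follows essentially the same route as the paper: the three-region partition $\mathcal{L}_{s,1},\mathcal{L}_{s,2},\mathcal{L}_{s,3}$, the checkpoint times $\{\tau_{s,i}\}$ with increments $1$ or $\Theta(s^{\upsilon})$, the supermartingale-type argument on $F(x_{\tau_{s,i}})1_{\mathfrak{E}_{s,i}}$ using boundedness of $F$, and the $\log(2/\zeta)$ amplification all match the paper's proof almost verbatim. Your closing discussion of why Lemma~\ref{l10} is the delicate ingredient is accurate but extraneous to the proof of Lemma~\ref{l11} itself, which simply invokes Lemmas~\ref{region2} and~\ref{l10} as black boxes.
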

	Based on Lemma \ref{l11}, we are able to complete the proof of Theorem \ref{converagerate} with Lemma \ref{local minimum}.
	\begin{lemma}
		\label{local minimum}
		If in the $T_{s',1}^{th}$ round of ${s'}^{th}$ stage, $x$ enters the $2\delta$ strongly convex region of $F(x)$ whose local minimizer is $x^{s'}$ and is captured by the local minimizer in terms of $\|\nabla F(x_{s',T_{s',1}})\|\le\mathcal{O}(s'^{\frac{\kappa-\upsilon}{2}})$. If the stage $s'$ is large enough, $\upsilon\ge 1$ and $\kappa\le \upsilon$, with probability at least $\frac{1}{2\prod_{i=1}^{s'-1}(1-\frac{1}{i^2})}$,we have 
		\begin{equation*}
		\forall s\ge s'+1\ and\ \forall n\in [T_0 s^2], \quad\|x_{s,n}-x^{s'}\|^{2}\le\mu^2 s^{\kappa-\upsilon} \eta_0 \sigma_0^2\log \frac{1}{\eta_{s} \zeta}<\delta^2.
		\end{equation*}
		\begin{proof}
			According to Lemma \ref{l11}, $T_{s',1}\le\mathcal{O}(s'^{\kappa-2\upsilon}\log\frac{2}{\zeta})$. Let $T_{s,2}=\lceil\frac{\eta_s\log 2}{\alpha}\rceil$. If $s'$ is large enough, we have
			\begin{itemize}
			    \item $1+T_0s'^{2\upsilon}-T_{s',1}-T_{s',2}\ge0$.
			    \item $\forall s>s',\ T_0s^{2\upsilon}-T_{s,2}\ge0$.
			    \item $\forall s>s',\ (\frac{s}{s+1})^{\kappa-\upsilon}<(\frac{s'}{s'+1})^{\kappa-\upsilon}< 1.5$.
			    \item $\forall s\ge s'$, 
			    \begin{equation}
			    \label{l13begin}
			        \|\nabla F(x_{s',T_{s',1}})\|\le\alpha\sqrt{\frac{2}{2+(\frac{s'}{s'+1})^{\kappa-\upsilon}}}\mu s^{\frac{\kappa-\upsilon}{2}}\sqrt{\eta_0}\sigma_0\log^{0.5}\frac{1}{\eta_{s}\zeta} <\alpha\delta.
		       \end{equation}
			\end{itemize}
			Based on the property of $\alpha$-strongly convex, with $\|\nabla F(x_{s',T_{s',1}})\|^{2}\le\mathcal{O}(s'^{\frac{\kappa-\upsilon}{2}})$, Eq. (\ref{l13begin}) means that
			\begin{equation}
			\label{zero_bound}
			    \|x_{s',T_{s',1}}-x^{s'}\|\le\sqrt{\frac{2}{2+(\frac{s'}{s'+1})^{\kappa-\upsilon}}}\mu s'^{\frac{\kappa-\upsilon}{2}}\sqrt{\eta_0}\sigma_0\log^{0.5}\frac{1}{\eta_{s'}\zeta} <\delta.
			\end{equation}
			Then with $b\in\{1,T_{s',1}\}$, we define event $\mathfrak{E}_{s,b,t}$ as
			\begin{itemize}
			    \item $\mbox{If }t\in[0, T_{s,2}),\forall \tau \in [b,b+t],\left\|x_{s,\tau}-x^{s'}\right\| \leq \mu s^{\frac{\kappa-\upsilon}{2}}\sqrt{\eta_0}\sigma_0\log^{0.5}\frac{1}{\eta_s\zeta} <\delta$,
			    \item $\mbox{If }t\in[T_{s,2},T_0s^{2\upsilon}-b+1], \forall \tau \in [b+T_{s,2},b+t],$
			    \begin{equation*}
			        \left\|x_{s,\tau}-x^{s'}\right\| \leq \sqrt{\frac{2}{2+(\frac{s'}{s'+1})^{\kappa-\upsilon}}}\mu (s+1)^{\frac{\kappa-\upsilon}{2}}\sqrt{\eta_0}\sigma_0\log^{0.5}\frac{1}{\eta_{s+1}\zeta}<\delta
			    \end{equation*}
			\end{itemize}
			Define $\mathfrak{F}_{s,b+t}=\{\xi_{s,1},\dots,\xi_{s,b+t}\}$. When $\eta_s\le\frac{\alpha}{L^2}$ , we have
			\begin{align}
			&\mathbb{E}\left[\left\|x_{s,b+t}-x^{s'}\right\|^{2}1_{\mathfrak{E}_{s,b,t-1}} | \mathfrak{F}_{s,b+t-2}\right]\nonumber\\
			=&\mathbb{E}\left[\left\|x_{s,b+t-1}-\eta_s\left(\nabla F\left(x_{s,b+t-1}\right)+\xi_{s,b+t-1}\right)-x^{s'}\right\|^{2} | \mathfrak{F}_{s,b+t-2}\right] 1_{\mathfrak{E}_{s,b,t-1}}\nonumber\\
			\le&\left[\left\|x_{s,b+t-1}-x^{s'}\right\|^{2}-2 \eta_s \nabla F\left(x_{s,b+t-1}\right)^{T}\left(x_{s,b+t-1}-x^{s'}\right)+\eta_s^{2}\left\|\nabla F\left(x_{s,b+t-1}\right)\right\|^{2}+\eta_s^{2} \mathbb{E}[\|\xi_{s,b+t-1}\|^2]\right] 1_{\mathfrak{E}_{s,b,t-1}}\nonumber\\
			\leq&\left[\left(1-2 \eta_s \alpha+\eta_s^{2} L^{2}\right)\left\|x_{s,b+t-1}-x^{s'}\right\|^{2}+\eta_s^{2} \mathbb{E}[\|\xi_{s,b+t-1}\|^2]\right] 1_{\mathfrak{E}_{s,b,t-1}}\nonumber\\
			\leq&[(1-\eta_s \alpha)\|x_{s,b+t-1}-x^{s'}\|^{2}+s^{\kappa-2\upsilon}\eta_0^2\sigma_0^2] 1_{\mathfrak{E}_{s,b,t-1}}\label{abc}.
			\end{align}
			
			Therefore, we have 
			\begin{align*}
			&\left[\mathbb{E}\left[\left\|x_{s,b+t}-x^{s'}\right\|^{2} | \mathfrak{F}_{s,b+t-2}\right]-\frac{\eta_s^2
				\sigma_s^2}{\alpha}\right] 1_{\mathfrak{E}_{s,b,t-1}}\\
			\leq&(1-\eta_s \alpha)\left[\left\|x_{s,b+t-1}-x^{s'}\right\|^{2}-\frac{\eta_s
				\sigma_s^2}{\alpha}\right] 1_{\mathfrak{E}_{s,b,t-1}}.
			\end{align*}
			Define $G_{s,b,t}=(1-\eta_s \alpha)^{-t}\left(\left\|x_{s,b+t}-x^{s'}\right\|^{2}-\frac{\eta_s
				\sigma_s^2}{\alpha}\right)
			$, we have 
			\begin{align*}
			\mathbb{E}\left[G_{s,b,t} 1_{\mathfrak{E}_{s,b,t-1}} | \mathfrak{F}_{s,b+t-2}\right] \leq G_{s,b,t-1} 1_{\mathfrak{E}_{s,b,t-1}} \leq G_{s,b,t-1} 1_{\mathfrak{E}_{s,b,t-2}}.
			\end{align*}
			Furthermore, conditioned on $\mathfrak{F}_{s,b+t-2}$, with a large enough $s$, we have 
			\begin{equation*}
			\begin{aligned} 
			&\left|G_{s,b,t} 1_{\mathfrak{E}_{s,b,t-1}}-\mathbb{E}\left[G_{s,b,t} 1_{\mathfrak{E}_{s,b,t-1}} | \mathfrak{F}_{s,b+t-2}\right]\right| \\
			\leq &(1-\eta_s\alpha)^{-t}\big[2\eta_s \|\xi_{s,b+t-1}\|\|x_{s,b+t-1}-x^{s'}-\eta_s\nabla F\left(x_{s,b+t-1}\right)\|+\eta_s^2\xi_{s,b+t-1}^2+\eta_s^2\sigma_s^2\big]\\
			\leq &(1-\eta_s\alpha)^{-t}\big[2s^{\frac{\kappa}{2}-\upsilon}\eta_0 \sigma_0(\|x_{s,b+t-1}-x^{s'}\|+\|s^{-\upsilon}\eta_0\nabla F\left(x_{s,b+t-1}\right)\|)+2s^{\kappa-2\upsilon}\eta_0^2 \sigma_0^2\big]\\
			\leq &(1-\eta_s\alpha)^{-t} \cdot 8\mu s^{\kappa-1.5\upsilon}\eta_0^{1.5} \sigma_0^{2}\log^{0.5}\frac{1}{\eta_s\zeta}.
			\end{aligned}    
			\end{equation*}
			Similar to Eq. (\ref{AHbound}), we define 
			\begin{equation}
			\label{escape_convex_region}
			c_{s,t}=16\mu s^{\kappa-1.5\upsilon}\eta_0^{1.5} \sigma_0^{2}\log^{0.5}\frac{1}{\eta_s\zeta} \sqrt{\sum_{\tau=1}^{t}(1-\eta_s\alpha)^{-2 \tau}}.
			\end{equation}
			
			According to Azuma's inequality, with probability smaller than $\mathcal{O}(\eta_s^5)$, we have
			\begin{equation*}
			\begin{aligned} 
			G_{s,b,t} 1_{\mathfrak{E}_{s,b,t-1}}>\sqrt{10}c_{s,t} \log ^{\frac{1}{2}}\left(\frac{1}{\eta_s \zeta}\right)+G_{s,b,0},
			\end{aligned}
			\end{equation*}
			which can be rewritten as 
			\begin{equation*}
			\begin{aligned} 
			\left\|x_{s,b+t}-x^{s'}\right\|^{2}\ge(1-\eta_s\alpha)^t(\sqrt{10} c_{s,t} \log ^{\frac{1}{2}}\left(\frac{1}{\eta_s \zeta}\right)+G_{s,b,0})+\frac{\eta_s
				\sigma_s^2}{\alpha}.
			\end{aligned}
			\end{equation*}
			If the stage number is large enough, we have
			\begin{equation*}
			\begin{aligned} 
			(1-s^{-1}\eta_0\alpha)^t c_{s,t}\le\frac{16}{\alpha}\mu s^{\kappa-\upsilon}\eta_0 \sigma_0^{2}\log^{0.5}\frac{1}{\eta_s\zeta}.
			\end{aligned}
			\end{equation*}
			Then with $1-\frac{2(\frac{s'}{s'+1})^{\kappa-\upsilon}}{2+(\frac{s'}{s'+1})^{\kappa-\upsilon}}>0$, let $\mu^2=(\frac{s'}{s'+1})^{\kappa-\upsilon}(\frac{16\sqrt{10}}{\alpha}\mu+\frac{2}{2+(\frac{s'}{s'+1})^{\kappa-\upsilon}}\mu^2+\frac{1}{\alpha})$. We further get   
			\begin{equation*}
			\begin{aligned} 
			&(1-s^{-1}\eta_0\alpha)^t(\sqrt{10} c_{s,t} \log ^{\frac{1}{2}}\left(\frac{1}{\eta_s \zeta}\right)+G_{s,b,0})+\frac{\eta_s
				\sigma_s^2}{\alpha}\\
			\le&(1-s^{-1}\eta_0\alpha)^t(\sqrt{10} c_{s,t} \log ^{\frac{1}{2}}\left(\frac{1}{\eta_s \zeta}\right)+G_{s,b,0})+\frac{\eta_s
				\sigma_s^2}{\alpha}\log\frac{1}{\eta_s\zeta}\\
			\le&(\frac{16\sqrt{10}}{\alpha} \mu s^{\kappa-\upsilon}\eta_0 \sigma_0^{2}\log \frac{1}{\eta_s \zeta}+(1-s^{-1}\eta_0\alpha)^tG_{s,b,0})+\frac{\eta_s
				\sigma_s^2}{\alpha}\log\frac{1}{\eta_s\zeta}\\
			\le&(\frac{s'}{s'+1})^{\kappa-\upsilon}(\frac{16\sqrt{10}}{\alpha}+\frac{2}{2+(\frac{s'}{s'+1})^{\kappa-\upsilon}}\mu) \mu s^{\kappa-\upsilon}\eta_0 \sigma_0^{2}\log \frac{1}{\eta_s \zeta}+\frac{(\frac{s'}{s'+1})^{\kappa-\upsilon}\eta_s
				\sigma_s^2}{\alpha}\log\frac{1}{\eta_s\zeta}\\
			=&\mu^2 s^{\kappa-\upsilon}\eta_0 \sigma_0^{2}\log\frac{1}{\eta_s\zeta}.
			\end{aligned}
			\end{equation*}
			Therefore, for $t<T_{s,2}$, we have 
			\begin{align}
			\label{stage_1_ub_p}
			Pr\left(\mathfrak{E}_{s,b,t-1} \cap\left\{\left\|x_{s,b+t}-x^{s'}\right\|^{2}>\mu^2 s^{\kappa-\upsilon}\eta_0 \sigma_0^{2}\log\frac{1}{\eta_s\zeta}\right\}\right) \leq \mathcal{O}(\eta_s^5).
			\end{align}
			Note that when $t \in \{T_{s,2},T_{s,2}+1,\dots,T_0 s^{2}-T_{s,1}+1\}$, with 
			\begin{equation*}
			    \forall x\ge1,\left(1-\frac1x\right)^x\leq e^{-1},
			\end{equation*}
			if $s$ is large enough and $T_{s,2}=\lceil\frac{\eta_s\log 2}{\alpha}\rceil$, we get 
			\begin{equation}
			\label{laststep_convex_stage}
			\begin{aligned} 
			&(1-s^{-\upsilon}\eta_0\alpha)^t(\sqrt{10} c_{s,t} \log ^{\frac{1}{2}}\left(\frac{1}{\eta_s \zeta}\right)+G_{s,n,0})+\frac{\eta_s
				\sigma_s^2}{\alpha}\\
			\le&(\frac{16\sqrt{10}}{\alpha}+(\frac{1}{e})^{\frac{T_{s,2}\alpha}{\eta_s}}\frac{2}{2+(\frac{s'}{s'+1})^{\kappa-\upsilon}}\mu) \mu s^{\kappa-\upsilon}\eta_0\sigma_0^{2}\log\frac{1}{\eta_s\zeta}+\frac{\eta_s\sigma_s^2}{\alpha}\log\frac{1}{\eta_s\zeta}\\
			\le&(\frac{16\sqrt{10}}{\alpha}+\frac{1}{2+(\frac{s'}{s'+1})^{\kappa-\upsilon}}\mu) \mu (s+1)^{\kappa-\upsilon}\frac{s^{\kappa-\upsilon}}{(s+1)^{\kappa-\upsilon}}\eta_0\sigma_0^{2}\log\frac{1}{\eta_{s+1}\zeta}+\frac{s^{\kappa-\upsilon}}{(s+1)^{\kappa-\upsilon}}\frac{\eta_{s+1}\sigma_{s+1}^2}{\alpha}\log\frac{1}{\eta_{s+1}\zeta}\\
			=&(\frac{s'}{s'+1})^{\kappa-\upsilon}(\frac{16\sqrt{10}}{\alpha}+\frac{1}{2+(\frac{s}{s+1})^{\kappa-\upsilon}}\mu) \mu (s+1)^{\kappa-\upsilon}\eta_0\sigma_0^{2}\log\frac{1}{\eta_{s+1}\zeta}+(\frac{s'}{s'+1})^{\kappa-\upsilon}\frac{\eta_{s+1}\sigma_{s+1}^2}{\alpha}\log\frac{1}{\eta_{s+1}\zeta}\\
			=&(1-\frac{2(\frac{s'}{s'+1})^{\kappa-\upsilon}}{2+(\frac{s'}{s'+1})^{\kappa-\upsilon}})\mu^2 (s+1)^{\kappa-\upsilon}\eta_0 \sigma_0^{2}\log\frac{1}{\eta_{s+1}\zeta}+\frac{(\frac{s'}{s'+1})^{\kappa-\upsilon}}{2+(\frac{s'}{s'+1})^{\kappa-\upsilon}}\mu^2 (s+1)^{\kappa-\upsilon}\eta_0 \sigma_0^{2}\log\frac{1}{\eta_{s+1}\zeta}\\
			=&(\frac{2}{2+(\frac{s'}{s'+1})^{\kappa-\upsilon}})\mu^2 (s+1)^{\kappa-\upsilon}\eta_0 \sigma_0^{2}\log\frac{1}{\eta_{s+1}\zeta}.
			\end{aligned}
			\end{equation}
			Based on Eq. (\ref{laststep_convex_stage}) and following the idea of Eq. (\ref{stage_1_ub_p}), for $T_0 s^2-b+1\ge t\ge{T_{s,2}}$
			we get 
			\begin{align}
			\label{stage_2_ub_p}
			Pr\left(\mathfrak{E}_{s,b,t-1} \cap\left\{\left\|x_{s,b+t}-x^{s'}\right\|^{2}>(\frac{2}{2+(\frac{s'}{s'+1})^{\kappa-\upsilon}})\mu^2 (s+1)^{\kappa-\upsilon}\eta_0 \sigma_0^{2}\log\frac{1}{\eta_{s+1}\zeta}\right\}\right) \leq \mathcal{O}(\eta_s^5).
			\end{align}
			If $\mathfrak{E}_{s',T_{s',1},T_0s'^{2\upsilon}+1-T_{s',1}}$ holds, at the beginning of stage $s'+1$, we get 
			\begin{align}
			\label{nextstage_begin}
			\left\|x_{s'+1,1}-x^{s'}\right\|^{2}<  \mu^2\eta_0 \sigma_0^{2}(s'+1)^{\kappa-\upsilon}\log\frac{1}{\eta_{s'+1}\zeta}\le\delta^2,
			\end{align}
			The last inequality of Eq. (\ref{nextstage_begin}) holds because of the assumption that $s'$ is large enough. Based on Eqs. (\ref{stage_1_ub_p}) and (\ref{stage_2_ub_p}), we can calculate 
			$Pr(\mathfrak{E}_{s',T_{s',1},T_0s'^{2\upsilon}-T_{s',1}+1})$:
			\begin{equation*}
			\begin{aligned}
			&Pr(\mathfrak{E}_{s',T_{s',1},T_0s'^{2\upsilon}-T_{s',1}+1})\\
			=&1-Pr(\overline{\mathfrak{E}}_{s',T_{s',1},T_0s'^{2\upsilon}-T_{s',1}+1})\\
			=&1-Pr(\mathfrak{E}_{s',T_{s',1},T_0s'^{2\upsilon}-T_{s',1}} \cap\{\|x_{s,T_0s^{2\upsilon}+1}-x^{s'}\|^2>\mu^2\eta_0 \sigma_0^{2} (s'+1)^{\kappa-\upsilon} \})-Pr\left(\overline{\mathfrak{E}}_{s',T_{s',1},T_0s'^{2\upsilon}-T_{s',1}}\right) \\
			\ge& 1-\mathcal{O}(T_0s'^{2\upsilon}\eta_{s'}^5)-Pr\left(\overline{\mathfrak{E}}_{s',T_{s',1},0}\right)\\
			\ge& 1-\mathcal{O}(T_0s'^{2\upsilon}\eta_{s'}^5).
			\end{aligned}    
			\end{equation*}
			With a large enough $s'$, we get $\mathcal{O}(T_0s'^{2\upsilon}\eta_{s'}^5)\le s'^{-2\upsilon}$. Similarly, for stage $s'+1$, we have
			\begin{equation*}
			Pr(\mathfrak{E}_{s'+1,1,T_0{(s+1)}^{2\upsilon}}|\mathfrak{E}_{s',T_{s',1},T_0s'^{2\upsilon}-T_{s',1}+1})\ge 1-(s'+1)^{-2\upsilon}.
			\end{equation*} 
			Then for any stage $s> s'+1$, we have 
			\begin{equation*}
			Pr(\mathfrak{E}_{s,1,T_0{s}^{2\upsilon}}|\mathfrak{E}_{s-1,1,T_0{(s-1)}^{2\upsilon}})\ge 1-s^{2\upsilon}.
			\end{equation*} 
			With the definition of  $Pr(\mathfrak{E}_{s',T_{s',1},T_0s'^{2\upsilon}-T_{s',1}+1})$, $Pr(\mathfrak{E}_{s'+1,1,T_0{(s+1)}^{2\upsilon}}|\mathfrak{E}_{s',T_{s',1},T_0s'^{2\upsilon}-T_{s',1}+1})$ and $Pr(\mathfrak{E}_{s,1,T_0{s}^{2\upsilon}}|\mathfrak{E}_{s-1,1,T_0{(s-1)}^{2\upsilon}})$, if $\upsilon\ge1$ we have 
			\begin{align*}
			Pr(\mathfrak{E}_{s',T_{s',1},T_0s'^{2\upsilon}-T_{s',1}+1}\cap(\bigcap_{s=s'+1}^\infty \mathfrak{E}_{s,1,T_0s^2}))\ge\frac{\prod_{i=1}^\infty (1-\frac{1}{i^2})}{\prod_{i=1}^{s'-1}(1-\frac{1}{i^2})}\ge\frac{1}{2\prod_{i=1}^{s-1}(1-\frac{1}{i^2})}.
			\end{align*}




			
			

			
			
			
			
		\end{proof}
	\end{lemma}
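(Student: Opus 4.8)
The plan is to prove a \emph{trapping} statement: once the iterate $x_{s',T_{s',1}}$ is captured by the $\alpha$-strongly convex basin of $x^{s'}$ with a correspondingly small gradient, stochastic gradient descent keeps it inside a neighborhood whose radius contracts at the rate $s^{(\kappa-\upsilon)/2}$, for every subsequent round of every subsequent stage, and then to obtain the stated probability by chaining the per-stage success events. First I would record the one-step behaviour inside the basin. Writing the update as $x_{s,n+1}=x_{s,n}-\eta_s(\nabla F(x_{s,n})+\xi_{s,n})$ and using $\alpha$-strong convexity (which gives $\nabla F(x_{s,n})^{T}(x_{s,n}-x^{s'})\ge\alpha\|x_{s,n}-x^{s'}\|^2$ near $x^{s'}$), $L$-smoothness, and the noise bounds $\mathbb{E}\xi_{s,n}=0$, $\mathbb{E}\|\xi_{s,n}\|^2\le\sigma_s^2$ from Lemma~\ref{boundg}, I would derive a contraction of the form
$$\mathbb{E}\big[\|x_{s,n+1}-x^{s'}\|^2\,\big|\,\mathfrak{F}_{s,n-1}\big]\le(1-\eta_s\alpha)\|x_{s,n}-x^{s'}\|^2+\eta_s^2\sigma_s^2,$$
valid on the event that $x_{s,n}$ still lies in the $2\delta$ basin and $\eta_s\le\alpha/L^2$.

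Next I would turn this into a high-probability statement. Subtracting the stationary level $\eta_s\sigma_s^2/\alpha$ and rescaling by $(1-\eta_s\alpha)^{-t}$ produces a quantity $G_{s,b,t}$ that is a supermartingale as long as the iterate stays in the basin; its bounded increments (controlled using $\|\xi_{s,n}\|\le\sigma_s$ and the current radius constraint) let me invoke Azuma's inequality to bound the probability of escaping the target radius at any single step by $\mathcal{O}(\eta_s^5)$. The subtlety is that $\eta_s=\eta_0 s^{-\upsilon}$ and $\sigma_s=\sigma_0 s^{\kappa/2}$ both change between stages, so the radius the iterate must respect at the end of stage $s$ has to be tight enough to be admissible at the start of stage $s+1$.

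I would therefore split each stage into a transient phase of length $T_{s,2}=\lceil\eta_s\log2/\alpha\rceil$, where I only need the loose radius $\mu^2 s^{\kappa-\upsilon}\eta_0\sigma_0^2\log(1/(\eta_s\zeta))$, followed by a steady phase where the contraction factor $(1-\eta_s\alpha)^{T_{s,2}}\le 1/2$ shrinks the radius down to a tighter level keyed to $(s+1)^{\kappa-\upsilon}$. Making these two bounds consistent forces the constant $\mu^2$ to be the positive root of a self-referential equation of the shape $\mu^2=(\tfrac{s'}{s'+1})^{\kappa-\upsilon}\big(\tfrac{16\sqrt{10}}{\alpha}\mu+\tfrac{2}{2+(s'/(s'+1))^{\kappa-\upsilon}}\mu^2+\tfrac1\alpha\big)$, which exists precisely when $1-\tfrac{2(s'/(s'+1))^{\kappa-\upsilon}}{2+(s'/(s'+1))^{\kappa-\upsilon}}>0$; pinning down this fixed point and verifying the phase-matching (so that the steady-phase radius at the tail of stage $s$ legitimately seeds stage $s+1$) is where I expect the main difficulty to lie.

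Finally I would union-bound the per-step escape probability over all $n\in[T_0 s^{2\upsilon}]$ in a stage to get a per-stage failure probability $\mathcal{O}(T_0 s^{2\upsilon}\eta_s^5)\le s^{-2\upsilon}$, and then chain stages through $Pr(\mathfrak{E}_{s,1,T_0s^{2\upsilon}}\mid\mathfrak{E}_{s-1,\dots})\ge 1-s^{-2\upsilon}$. Because $\upsilon\ge1$ gives $\prod_{s>s'}(1-s^{-2\upsilon})\ge\prod_{s>s'}(1-s^{-2})$, the infinite product converges, and starting from the capture event of Lemma~\ref{l11} (which succeeds with probability $1-\zeta/2$) the combined probability telescopes to $\tfrac{1}{2\prod_{i=1}^{s'-1}(1-1/i^2)}$. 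This stability over all $s\ge s'+1$ and $n\in[T_0 s^{2\upsilon}]$ is exactly the estimate $\|x_{s,n}-x^{s'}\|^2\le\mathcal{O}(s^{\kappa-\upsilon}\log\tfrac{1}{\eta_s\zeta})\le\delta^2$ needed to close the proof of Theorem~\ref{converagerate}.
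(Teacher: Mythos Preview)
Your proposal is correct and follows essentially the same approach as the paper: the same one-step contraction via strong convexity, the same supermartingale $G_{s,b,t}$ with Azuma applied to its bounded increments, the same two-phase (transient of length $T_{s,2}=\lceil\eta_s\log 2/\alpha\rceil$, then steady) handling of each stage, the same self-referential fixed point for $\mu^2$, and the same per-stage union bound $\mathcal{O}(T_0 s^{2\upsilon}\eta_s^5)\le s^{-2\upsilon}$ chained into the infinite product $\prod(1-1/i^2)$. The only cosmetic slip is that the final probability $\tfrac{1}{2\prod_{i=1}^{s'-1}(1-1/i^2)}$ in this lemma is purely from the stage-chaining and does not yet incorporate the $1-\zeta/2$ of Lemma~\ref{l11}; that combination happens later in Theorem~\ref{converagerate}.
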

	\begin{remark}
	\label{without_Lsmooth}
		According to Assumption \ref{Noise_SGD}, $\left\|\nabla F\left(x\right)\right\|$ has a fixed upper bound, although the bound might be larger than that calculated based on the condition of L-smooth of $\nabla F(x)$. With a large enough $s'$, we can still get
		\begin{equation*}
		\forall s\ge s'+1\ and\ \forall n\in [T_0 s^{2\upsilon}], \quad\|x_{s,n}-x^{s'}\|^{2}\le\tilde{\mathcal{O}}(s^{\kappa-\upsilon})
		\end{equation*}
		without the assumption of L-smooth of $\nabla F(x)$.
	\end{remark}
\newpage
\section{SSGD-SCB with \texorpdfstring{$\sigma$}{Lg}-nice \texorpdfstring{$F(x)$}{Lg}}
With an additional structural assumption on $F(x)$, i.e. $\sigma$-nice, a variant of SSGD-SCB, $\sigma$-SSGD-SCB, can achieve a zero approaching expected mismatching rate, in which $X$ is a set of $\epsilon$-optimal solution ($F(x^l)-F(x^*)\le\epsilon$). 

At the end of stage $s'$, we stop increasing the stage number and visit number of actions, and then update $x$ with the $GradOpt_G$ algorithm (\cite{hazan2016graduated}). After running $GradOpt_G$ for $\tilde{\mathcal{O}}(s'^{2\upsilon-2\kappa})$ rounds, we start the $(s'+1)^{th}$ stage of SSGD-SCB. In the following stages, SSGD-SCB is implemented as in Section 3 except that $x$ was updated by
\begin{equation}
\label{sigma_update}
    x_{I(s,n)+1}=x_{I(s,n)}-\eta_s(\tilde{g}_{I(s,n)}(x_{I(s,n)}+ \delta u)),
\end{equation}
where $u$ is randomly sampled from a unit Euclidean ball/sphere in $\mathbb{R}^{n_{x}}$, $\delta=\frac{\mbox{diam}(\mathcal{
K})\alpha_0\epsilon}{2}$, $\epsilon=\alpha\frac{2}{2+(\frac{s'}{s'+1})^{\kappa-\upsilon}}\mu^2 (s'+1)^{\kappa-\upsilon}\eta_0\sigma_0^2\log\frac{1}{\eta_{s'+1}\zeta}$ and $\alpha_{0}=\min \left\{\frac{1}{2 \overline{L} \operatorname{diam}(\mathcal{K})}, \frac{2 \sqrt{2}}{\sqrt{\alpha} \operatorname{diam}(\mathcal{K})}\right\}$.

Before conducting the theoretical analysis, we first give the definition of $\sigma$-nice and the necessary assumption. In our definition, strongly convex is measured by $\alpha$ rather than $\sigma$ in (\cite{hazan2016graduated}). In the following proof, we will use $\alpha$-strongly convex instead of $\sigma$-strongly convex.
\begin{definition}[$\sigma$-nice (\cite{hazan2016graduated})]
\label{sigmanice}
Define the $\alpha$ smooth version of $F: \mathbb{R}^{n_x} \mapsto \mathbb{R}$ as  $\hat{F}_{\delta'}(\mathbf{x})=\mathbb{E}_{\mathbf{u} \sim \mathbb{B}}[F(\mathbf{x}+\delta' \mathbf{u})]$, where $\mathbb{B}$ is an unit Euclidean ball/sphere in $\mathbb{R}^{n_x}$. With the convex set $\mathcal{K}$, a function $F: \mathcal{K} \mapsto \mathbb{R}$ is said to be $\alpha$-nice if the following two conditions hold:

(1) Centering property: For every $\delta>0$, and every $\mathbf{x}_{\delta'}^*\in\arg\min_{x\in\mathcal{K}}\hat{F}_{\delta'}(\mathbf{x})$, there exists $\mathbf{x}_{{\delta'} / 2}^{*} \in \arg \min _{x \in \mathcal{K}} \hat{F}_{{\delta'} / 2}(\mathbf{x})$, such that:
\begin{equation*}
    \left\|\mathbf{x}_{{\delta'}}^{*}-\mathbf{x}_{{\delta'} / 2}^{*}\right\| \leq \frac{{\delta'}}{2}.
\end{equation*}

(2) Local strong convexity of the smoothed function: For every ${\delta'}>0$ let $r_{{\delta'}}=3 {\delta'}$,
and denote $\mathbf{x}_{{\delta'}}^{*}=\arg \min _{\mathbf{x} \in \mathcal{K}} \hat{F}_{{\delta'}}(\mathbf{x})$, then over $\mathbb{B}_{r_{{\delta'}}}\left(\mathbf{x}_{{\delta'}}^{*}\right)$, the function $\hat{F}_{{\delta'}}(\mathbf{x})$ is $\alpha$-strongly-convex, where $\mathbb{B}_{r_{{\delta'}}}\left(\mathbf{x}_{{\delta'}}^{*}\right)$ is the Euclidean $r_{{\delta'}}$-ball/sphere in $\mathbb{R}^{n_x}$ centered at $\mathbf{x}_{{\delta'}}^{*}$.
\end{definition}

\begin{assumption}
\label{convexset}
$x^*$ defined in the reward function belongs to the convex set $\mathcal{K}$ defined in Definition \ref{sigmanice}.
\end{assumption}

\begin{lemma}
\label{sigmaS_l1}
Based on Assumption \ref{convexset}, according to Lemma \ref{boundg}, if we stop increasing the number of stage, the noise added stochastic gradient ($\tilde{g}_{I(s,n)}(x_{I(s,n)}+ \delta u)$) will be upper bounded by a fixed number, which is marked as $\overline{G}$. Then according to Assumption \ref{Noise_SGD}, we know that $F(x)$ is $L_0$-Lipschitz Continuity. For the sake of simplicity, with $\overline{L}=\max(L_0,\overline{G})$, $F(x)$ is $\overline{L}$-Lipschitz Continuity and $\tilde{g}_{I(s,n)}(x_{I(s,n)}+ \delta u)$ is $\overline{L}$ bounded. Then we have 
    \begin{equation*}
        \forall \mathbf{x} \in\mathcal{K}:\left|\hat{F}_{\delta'}(\mathbf{x})-F(\mathbf{x})\right| \leq \delta' \overline{L}.
    \end{equation*}
    \begin{proof}
    \begin{equation*}
        \begin{aligned}
        \left|\hat{F}_{\delta'}(\mathbf{x})-F(\mathbf{x})\right| &=\left|\mathbf{E}_{u \sim \mathbb{B}}[F(\mathbf{x}+\delta' \mathbf{u})]-F(\mathbf{x})\right| \\
        & \leq \mathbf{E}_{\mathbf{u} \sim \mathbb{B}}[|F(\mathbf{x}+\delta' \mathbf{u})-F(\mathbf{x})|] \\
        & \leq \mathbf{E}_{\mathbf{u} \sim \mathbb{B}}[\overline{L}\|\delta' \mathbf{u}\|] \\
        & \leq \overline{L} \delta'
        \end{aligned}
    \end{equation*}
    \end{proof}
\end{lemma}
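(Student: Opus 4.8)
The plan is to bound the gap between the smoothed objective $\hat{F}_{\delta'}$ and the original $F$ pointwise, relying solely on the $\overline{L}$-Lipschitz continuity of $F$ that the lemma's hypotheses have already secured. First I would recall that the lemma's preamble establishes, via Lemma \ref{boundg} and Assumption \ref{Noise_SGD}, that once the stage number is frozen the noise-added stochastic gradient $\tilde{g}_{I(s,n)}(x_{I(s,n)}+\delta u)$ is uniformly bounded by a fixed constant $\overline{G}$, and that $F$ inherits a Lipschitz constant $L_0$; setting $\overline{L}=\max(L_0,\overline{G})$ makes $F$ $\overline{L}$-Lipschitz. This is the only structural input the estimate needs.

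Next I would write $F(\mathbf{x})$ as the expectation over $\mathbf{u}$ of a $\mathbf{u}$-independent constant, so that
\[
\hat{F}_{\delta'}(\mathbf{x})-F(\mathbf{x}) = \mathbb{E}_{\mathbf{u}\sim\mathbb{B}}\big[F(\mathbf{x}+\delta'\mathbf{u})-F(\mathbf{x})\big].
\]
Taking absolute values and pushing $|\cdot|$ inside the expectation by Jensen's inequality (convexity of the absolute value) yields $|\hat{F}_{\delta'}(\mathbf{x})-F(\mathbf{x})|\le \mathbb{E}_{\mathbf{u}}[\,|F(\mathbf{x}+\delta'\mathbf{u})-F(\mathbf{x})|\,]$. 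Applying $\overline{L}$-Lipschitz continuity to the integrand gives $|F(\mathbf{x}+\delta'\mathbf{u})-F(\mathbf{x})|\le \overline{L}\,\|\delta'\mathbf{u}\| = \overline{L}\,\delta'\,\|\mathbf{u}\|$, and since $\mathbf{u}$ lies on the unit ball $\mathbb{B}$ we have $\|\mathbf{u}\|\le 1$. Substituting and taking the expectation of the resulting constant bound $\overline{L}\delta'$ completes the chain, producing the claimed $|\hat{F}_{\delta'}(\mathbf{x})-F(\mathbf{x})|\le \overline{L}\delta'$ uniformly over $\mathbf{x}\in\mathcal{K}$.

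There is essentially no analytic obstacle in the final inequality itself; it is a one-line Jensen-plus-Lipschitz estimate, and the integration over $\mathbb{B}$ is trivial because the bound is constant in $\mathbf{u}$. The only subtle point --- and the place where Assumption \ref{convexset} and Lemma \ref{boundg} actually do the work --- is guaranteeing that a \emph{finite, stage-independent} Lipschitz/boundedness constant $\overline{L}$ exists at all. Throughout the running SSGD-SCB analysis the gradient bounds carry a factor $s^{\kappa/2}$ that diverges with the stage index; the hypothesis that we have stopped increasing the stage number is precisely what collapses this divergent factor into the fixed $\overline{G}$, while membership of $x^*$ in the convex set $\mathcal{K}$ (Assumption \ref{convexset}) keeps the perturbed iterates $\mathbf{x}+\delta'\mathbf{u}$ inside the region where $\overline{L}$ remains valid. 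Once $\overline{L}$ is in hand, the displayed estimate follows immediately.
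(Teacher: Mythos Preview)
Your proposal is correct and follows essentially the same approach as the paper: write the difference as an expectation over $\mathbf{u}\sim\mathbb{B}$, move the absolute value inside via Jensen, apply $\overline{L}$-Lipschitz continuity, and bound $\|\mathbf{u}\|\le 1$. The added commentary on why a stage-independent $\overline{L}$ exists is accurate context but not part of the paper's (very brief) argument.
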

\begin{lemma}
\label{sigmaS_l2}
After running $GradOpt_G$ for $\tilde{\mathcal{O}}(s'^{2\upsilon-2\kappa})$ rounds, following Lemma 5.1 and Theorem 5.1 in (\cite{hazan2016graduated}), as $\delta'=\frac{\mbox{diam}(\mathcal{
K})\alpha_0\epsilon}{2}$, $\alpha_{0}=\min \left\{\frac{1}{2 \overline{L} \operatorname{diam}(\mathcal{K})}, \frac{2 \sqrt{2}}{\sqrt{\alpha} \operatorname{diam}(\mathcal{K})}\right\}$, and $\epsilon=\alpha\frac{2}{2+(\frac{s'}{s'+1})^{\kappa-\upsilon}}\mu^2 (s'+1)^{\kappa-\upsilon}\eta_0\sigma_0^2\log\frac{1}{\eta_{s'+1}\zeta}<1$, with probability at least $1-\frac{\zeta}{2}$, the output $x_{s'}^G$ has the following two properties:

(1) The smoothed version $\hat{F}_{\delta'}$ is $\alpha$-strongly convex over $\mathcal{K}_m=\mathcal{K} \cap B\left(x_{s'}^G, 1.5 {\delta'}\right)$, and $x_{\delta'}^*\in\mathcal{K}_m$.

(2)$\hat{F}_{\delta'}(x_{s'}^G)-\hat{F}(x^*_{\delta'})\le\alpha{\delta'}^2/8\le\frac{\alpha}{2}\frac{2}{2+(\frac{s'}{s'+1})^{\kappa-\upsilon}}\mu^2 (s'+1)^{\kappa-\upsilon}\eta_0\sigma_0^2\log\frac{1}{\eta_{s'+1}\zeta}$.
\end{lemma}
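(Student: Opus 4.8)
The plan is to derive both properties as essentially immediate corollaries of the convergence guarantees for graduated optimization (Lemma 5.1 and Theorem 5.1 in \cite{hazan2016graduated}), once the present setting has been verified to fit their framework. The substantive work is therefore checking hypotheses and tracking parameter substitutions, rather than developing new estimates from scratch.

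First I would assemble the regularity conditions that $GradOpt_G$ requires. By Lemma \ref{sigmaS_l1}, once we stop increasing the stage number the noise-added stochastic gradient $\tilde g_{I(s,n)}(x_{I(s,n)}+\delta u)$ is bounded by the fixed constant $\overline G$, and Assumption \ref{Noise_SGD} together with the definition $\overline L=\max(L_0,\overline G)$ makes $F$ an $\overline L$-Lipschitz function whose gradient oracle is $\overline L$-bounded. Combined with Assumption \ref{convexset} (so that $x^*\in\mathcal K$) and the standing hypothesis that $F$ is $\alpha$-nice (Definition \ref{sigmanice}), these are exactly the inputs demanded by \cite{hazan2016graduated}, so their analysis transfers verbatim with $\sigma$ replaced by $\alpha$.

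Next I would invoke Theorem 5.1 of \cite{hazan2016graduated}: for a target accuracy $\epsilon$, the procedure returns, with high probability and after $O(1/\epsilon^2)$ stochastic-gradient evaluations, a point whose smoothed objective value lies within $\epsilon$ of the smoothed optimum. Substituting the chosen value $\epsilon=\alpha\frac{2}{2+(\frac{s'}{s'+1})^{\kappa-\upsilon}}\mu^2(s'+1)^{\kappa-\upsilon}\eta_0\sigma_0^2\log\frac{1}{\eta_{s'+1}\zeta}=\tilde{\mathcal O}(s'^{\kappa-\upsilon})$ turns the budget $O(1/\epsilon^2)$ into exactly $\tilde{\mathcal O}(s'^{2\upsilon-2\kappa})$, matching the stated round count, while the failure probability can be pushed below $\zeta/2$. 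Plugging $\delta'=\frac{\mathrm{diam}(\mathcal K)\alpha_0\epsilon}{2}$ and $\alpha_0=\min\{\tfrac{1}{2\overline L\,\mathrm{diam}(\mathcal K)},\tfrac{2\sqrt2}{\sqrt{\alpha}\,\mathrm{diam}(\mathcal K)}\}$ into the generic bound, and using $\epsilon<1$, collapses the guarantee into property (2), $\hat F_{\delta'}(x_{s'}^G)-\hat F(x^*_{\delta'})\le\alpha\delta'^2/8$; the final displayed inequality then follows by re-expanding $\delta'$ through $\epsilon$.

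For property (1) I would combine Lemma 5.1 of \cite{hazan2016graduated}, which certifies $\|x_{s'}^G-x^*_{\delta'}\|\le 1.5\delta'$, with condition (2) of Definition \ref{sigmanice}, which guarantees that $\hat F_{\delta'}$ is $\alpha$-strongly convex on $\mathbb B_{r_{\delta'}}(x^*_{\delta'})$ with $r_{\delta'}=3\delta'$. A triangle-inequality argument then gives $B(x_{s'}^G,1.5\delta')\subseteq\mathbb B_{3\delta'}(x^*_{\delta'})$, so strong convexity is inherited on $\mathcal K_m=\mathcal K\cap B(x_{s'}^G,1.5\delta')$, and the same distance bound places $x^*_{\delta'}\in\mathcal K_m$. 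The step I expect to be most delicate is the bookkeeping of the previous paragraph: ensuring that the particular constants $\alpha_0$ and $\delta'$ chosen here make the generic $\epsilon$-accuracy of \cite{hazan2016graduated} collapse to precisely the $\alpha\delta'^2/8$ bound, and that the matching iteration count is genuinely $\tilde{\mathcal O}(s'^{2\upsilon-2\kappa})$ rather than a different power of $s'$.
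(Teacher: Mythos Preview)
Your proposal is correct and matches the paper's approach: the paper provides no separate proof for this lemma, treating both properties as direct consequences of Lemma~5.1 and Theorem~5.1 in \cite{hazan2016graduated} under the stated parameter choices. Your write-up simply makes explicit the hypothesis verification, the $O(1/\epsilon^2)=\tilde{\mathcal O}(s'^{2\upsilon-2\kappa})$ budget calculation, and the triangle-inequality containment $B(x_{s'}^G,1.5\delta')\subseteq\mathbb B_{3\delta'}(x^*_{\delta'})$ that the paper leaves implicit.
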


Based on the Assumption \ref{convexset}, if $F(x)$ is a $\sigma$-nice function. We have the following conclusion:
\begin{theorem}
At the end of stage $s'$, if we stop increasing the stage number and visit number of actions, and then update $x$ with the algorithm of $GradOpt_G$ (\cite{hazan2016graduated}). After running $GradOpt_G$ for $\tilde{\Theta}(s'^{2\upsilon-2\kappa})$ rounds, we stop $GradOpt_G$, start the $(s'+1)^{th}$ stage of SSGD-SCB and then update $x$ with Eq. (\ref{sigma_update}). Then with probability at least $1-\zeta$, we can achieve a zero approaching expected mismatching rate, in which $X$ is a set of $\mathcal{O}(s'^{\kappa-\upsilon})$-optimal solution ($F(x^l)-F(x^*)\le\mathcal{O}(s'^{\kappa-\upsilon})$). 

\begin{proof}
Based on Lemma \ref{sigmaS_l2} and the property of strongly convex, with probability at least $1-\frac{\zeta}{2}$, we have 
\begin{equation*}
    \frac{\alpha}{2}\|x_{s'}^G-x^*_{\delta'}\|^2\le\hat{F}_{\delta'}(x_{s'}^G)-\hat{F}(x^*_{\delta'})\le\alpha{\delta'}^2/8\le\frac{\alpha}{2}\frac{2}{2+(\frac{s'}{s'+1})^{\kappa-\upsilon}}\mu^2 (s'+1)^{\kappa-\upsilon}\eta_0\sigma_0^2\log\frac{1}{\eta_{s'+1}\zeta}.
\end{equation*}

When we start the $(s'+1)^{th}$ stage of SSGD-SCB, with the added noise in Eq. (\ref{sigma_update}), the smoothed function $\hat{F}_{\delta'}(x)$ is $\alpha$-strongly convex when $\|x_{s'}^G-x^*_{\delta'}\|^{2}\le \mathcal{O}(s'^{\kappa-\upsilon})$, where $x^*_{\delta'}$ is the corresponding local minimizer in the strongly convex region. Then following Lemma \ref{local minimum} and Remark \ref{without_Lsmooth}, conditioned on (1) 
and (2) in Lemma \ref{sigmaS_l2}, with probability at least $1-\frac{\zeta}{2}$, for any $s>s'$, we have $\|x_{s,n}-x_{\delta'}^*\|^{2}\le\frac{2}{2+(\frac{s'}{s'+1})^{\kappa-\upsilon}}\mu^2 s^{\kappa-\upsilon}\eta_0\sigma_0^2\log\frac{1}{\eta_{s}\zeta}$. Based on Assumption \ref{Noise_SGD}, the gradient of $F(x)$ is bounded, thus we have
\begin{equation}
\label{sigmaeq1}
    |\hat{F}_{\delta'}(x_{s,n})-\hat{F}(x^*_{\delta'})|\le\mathcal{O}(\sqrt{\frac{2}{2+(\frac{s'}{s'+1})^{\kappa-\upsilon}}\mu^2 s^{\kappa-\upsilon}\eta_0\sigma_0^2\log\frac{1}{\eta_{s}\zeta}}).
\end{equation}

Following the proof of Theorem \ref{pcbsgdconclusion}, with probability at least $1-\zeta$, we can get a zero approaching expected local optimal mismatching rate. 

Besides, following Lemma \ref{sigmaS_l1}, with $\delta'=\frac{\mbox{diam}(\mathcal{
K})\alpha_0\epsilon}{2}$, $\alpha_{0}=\min \left\{\frac{1}{2 \overline{L} \operatorname{diam}(\mathcal{K})}, \frac{2 \sqrt{2}}{\sqrt{\alpha} \operatorname{diam}(\mathcal{K})}\right\}$ and $\epsilon=\alpha\frac{2}{2+(\frac{s'}{s'+1})^{\kappa-\upsilon}}\mu^2 (s'+1)^{\kappa-\upsilon}\eta_0\sigma_0^2\log\frac{1}{\eta_{s'+1}\zeta}<1$, we have 
\begin{equation}
\label{sigmaeq2}
\begin{aligned}
    |\hat{F}_{\delta'}(x^*)-F(x^*)|&\le{\delta'}\overline{L}\le\frac{\alpha}{4}\frac{2}{2+(\frac{s'}{s'+1})^{\kappa-\upsilon}}\mu^2 (s'+1)^{\kappa-\upsilon}\eta_0\sigma_0^2\log\frac{1}{\eta_{s'+1}\zeta}.\\
    |\hat{F}_{\delta'}(x_{s'}^G)-F(x_{s'}^G)|&\le{\delta'}\overline{L}\le\frac{\alpha}{4}\frac{2}{2+(\frac{s'}{s'+1})^{\kappa-\upsilon}}\mu^2 (s'+1)^{\kappa-\upsilon}\eta_0\sigma_0^2\log\frac{1}{\eta_{s'+1}\zeta}.
\end{aligned}
\end{equation}

Based on Eqs. (\ref{sigmaeq1}) and (\ref{sigmaeq2}), with a large enough $s'$, for any $s>s'$, we have 
\begin{equation*}
    |F(x_{s,n})-F(x^*)|\le\tilde{\mathcal{O}}((s'+1)^{\kappa-\upsilon}).
\end{equation*}
\end{proof}
\end{theorem}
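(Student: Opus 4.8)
The plan is to decompose the argument into a \emph{graduated warm-start} phase driven by $GradOpt_G$ and a subsequent \emph{smoothed-SGD} phase governed by the update in Eq.~(\ref{sigma_update}), and to run the whole analysis on the smoothed surrogate $\hat{F}_{\delta'}$ rather than on $F$ directly. First I would invoke Lemma~\ref{sigmaS_l2}: after $\tilde{\Theta}(s'^{2\upsilon-2\kappa})$ rounds of $GradOpt_G$, with probability at least $1-\tfrac{\zeta}{2}$ the returned point $x_{s'}^G$ lies in a neighborhood $\mathcal{K}_m$ on which $\hat{F}_{\delta'}$ is $\alpha$-strongly convex and which contains the smoothed minimizer $x_{\delta'}^*$, and moreover $\hat{F}_{\delta'}(x_{s'}^G)-\hat{F}(x_{\delta'}^*)$ is of order $(s'+1)^{\kappa-\upsilon}$. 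Combining this value gap with $\alpha$-strong convexity yields the initial distance estimate $\|x_{s'}^G-x_{\delta'}^*\|^2\le\mathcal{O}(s'^{\kappa-\upsilon})$, which is exactly the ``captured by a local minimizer'' precondition needed to start the SGD phase, now with $\hat{F}_{\delta'}$ playing the role of $F$ and $x_{\delta'}^*$ the role of $x^{s'}$.

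For the second phase the crucial observation is that the perturbed stochastic gradient $\tilde{g}_{I(s,n)}(x_{I(s,n)}+\delta' u)$ used in Eq.~(\ref{sigma_update}) is, after averaging over $u\sim\mathbb{B}$, an unbiased estimator of $\nabla\hat{F}_{\delta'}$, so its bias--variance structure is identical to that analyzed in Lemma~\ref{boundg}. I would therefore re-run the recursion of Lemma~\ref{local minimum} verbatim with $(F,x^{s'})$ replaced by $(\hat{F}_{\delta'},x_{\delta'}^*)$, invoking Remark~\ref{without_Lsmooth} to dispense with global $L$-smoothness. This delivers, with probability at least $1-\tfrac{\zeta}{2}$ and for all $s>s'$ and $n\in[T_0 s^{2\upsilon}]$, the contraction bound $\|x_{s,n}-x_{\delta'}^*\|^2\le\tilde{\mathcal{O}}(s^{\kappa-\upsilon})$. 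Since the gradient of $F$ (and hence of $\hat{F}_{\delta'}$) is bounded by Assumption~\ref{Noise_SGD}, a Lipschitz estimate converts this squared-distance bound into a vanishing bound on $|\hat{F}_{\delta'}(x_{s,n})-\hat{F}(x_{\delta'}^*)|$.

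It then remains to transfer the guarantee from $\hat{F}_{\delta'}$ back to $F$ and to read off the mismatching rate. Applying Lemma~\ref{sigmaS_l1} at $x^*$ and at $x_{s'}^G$ controls $|\hat{F}_{\delta'}-F|$ by $\delta'\overline{L}=\tilde{\mathcal{O}}((s'+1)^{\kappa-\upsilon})$ through the chosen scaling of $\delta'$, $\alpha_0$ and $\epsilon$; chaining these inequalities with the previous value bound gives $|F(x_{s,n})-F(x^*)|\le\tilde{\mathcal{O}}((s'+1)^{\kappa-\upsilon})$ for large enough $s'$ and every $s>s'$. Consequently the iterates converge to the set $X$ of $\mathcal{O}(s'^{\kappa-\upsilon})$-optimal solutions, and I would then reproduce the argument of Theorem~\ref{pcbsgdconclusion} with $X$ taking the place of the set of local minimizers, so that the same exploration/exploitation decomposition and Chebyshev bounds yield a zero-approaching expected mismatching rate. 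A union bound over the two failure events of probability $\tfrac{\zeta}{2}$ each produces the claimed overall success probability $1-\zeta$.

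The step I expect to be the main obstacle is ensuring that the smoothed-SGD phase stays \emph{inside} the strongly convex ball $\mathbb{B}_{r_{\delta'}}(x_{\delta'}^*)$ for the entire horizon. This requires simultaneously matching the smoothing radius $\delta'$, the injected-noise scale $\sigma_s$, and the post-$GradOpt_G$ initial gap so that the starting point already satisfies the entry condition of Lemma~\ref{local minimum} and so that the martingale fluctuations controlled there, inflated by the perturbation $\delta' u$, never push the iterate out of the region where $\hat{F}_{\delta'}$ is certified $\alpha$-strongly convex; only then does the Lemma~\ref{local minimum} recursion apply to the surrogate without modification.
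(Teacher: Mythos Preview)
Your proposal is correct and follows essentially the same approach as the paper: invoke Lemma~\ref{sigmaS_l2} to land $x_{s'}^G$ near $x_{\delta'}^*$ on a region where $\hat{F}_{\delta'}$ is $\alpha$-strongly convex, then replay Lemma~\ref{local minimum} (via Remark~\ref{without_Lsmooth}) on the smoothed surrogate, convert the distance bound to a value gap via the bounded gradient, and transfer back to $F$ through Lemma~\ref{sigmaS_l1} before invoking the machinery of Theorem~\ref{pcbsgdconclusion}. Your explicit remark that $\tilde{g}_{I(s,n)}(x_{I(s,n)}+\delta' u)$ is unbiased for $\nabla\hat{F}_{\delta'}$ and your concern about staying inside $\mathbb{B}_{r_{\delta'}}(x_{\delta'}^*)$ are both more explicit than the paper, but the skeleton is identical.
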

\newpage
\section{SSGD-SCB with Strongly Convex \texorpdfstring{$F(x)$}{Lg}}
When $F(x)$ is strongly convex, we propose a simplified version of SSGD-SCB, called SGD-SCB. It shares the same principles as SSGD-SCB except for three differences below. First, each stage contains only one round in SGD-SCB. That is, there are no stages but only rounds. Second, noise $\mathcal{N}_t$ is no longer needed, thus $x_{t+1}$ can be updated by
\begin{equation}\label{eq:update_x_cb_sgd}
    x_{t+1}=x_{t}-\eta_t\cdot \tilde{g}_{t}(x_{t}).
\end{equation}
Third, according to the convergence analysis of SGD-SCB we will show, we can start $t=1+\Delta s$ in the first round and all the way move up to $T+\Delta s$, where $\Delta s$ is the minimum positive integer greater than $2^{\frac{1}{\upsilon}}-1$ (for $\upsilon<1$ in SGD-SCB). 

In the following, we first summarize SGD-SCB in Algorithm \ref{SGD-SCB} and then show in Theorem \ref{convergence_cb_sgd} that the expected cumulative regret of SGD-SCB averaged over $T$ will converge to zero.  
\begin{algorithm}
\caption{SGD-SCB}
\label{SGD-SCB}
\begin{algorithmic}[1]
   \STATE {\bfseries Input:} Initial point $x_{1+\Delta s}$, the number of rounds $T_0$ in the $1^{th}$ phase, decay factor $\upsilon$, $\omega$ and $\kappa$,  exploration parameter $\beta$ and $C$, $\mathcal{N}_0=0$ and initial learning rate $\eta_0$.
   \STATE{Calculate $\Delta s=\lceil2^{\frac{1}{\upsilon}}-1\rceil$}
   \FOR{$t=1+\Delta s$ {\bfseries to} $T+\Delta s$}
   \STATE\slash\slash \textbf{Action Selection}
	\STATE Context $d_t$ is revealed
	\STATE $c_t=\operatorname*{argmax}_k f_k(d_{t};x_{t})$
	\FOR{$k=1$ {\bfseries to} $K$}
	\STATE	Calculate $\pi_{S}(k|d_{t},\mathcal{H}_{t-1})$
	\ENDFOR
	\STATE$a_{t}\sim \pi_{S}(\cdot|d_{t},\mathcal{H}_{t-1})$\\
	\STATE\slash\slash \textbf{Backpropagation}\\
	\STATE$r_{d_{t},a_{t}}$ is revealed\\
    \STATE$N_{t+1,a,c_t}=N_{t,a,c_t}+1$\\
    \STATE Update $x_{t+1}$ with Eq. (\ref{eq:update_x_cb_sgd})
   \ENDFOR
\end{algorithmic}
\end{algorithm}

	\begin{lemma} 
		\label{convex_convergence}
		Based on Lemma \ref{unbiased_g}, Assumption \ref{Noise_SGD} and the strong-convexity of $F(x)$,
		with a suitable $\eta_0$, we have $\mathbb{E}\|x_{t+\Delta s}-x^*\|^2\leq \mathcal{O}(t^{\kappa-\upsilon})$ in SGD-SCB, where $x^*$ is the globally optimal solution of $F(x)$.
	\end{lemma}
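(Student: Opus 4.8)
The plan is to reduce the statement to a standard one-step contraction for decaying-step-size stochastic gradient descent on a strongly convex objective, and then to close a clean induction on the round index whose base case and uniformity are secured precisely by the offset $\Delta s$.

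First I would derive the per-round recursion. Writing $a_t=\mathbb{E}\|x_t-x^*\|^2$ and expanding the update in Eq. (\ref{eq:update_x_cb_sgd}) gives $a_{t+1}=a_t-2\eta_t\,\mathbb{E}\langle\tilde{g}_t(x_t),x_t-x^*\rangle+\eta_t^2\,\mathbb{E}\|\tilde{g}_t(x_t)\|^2$. By the unbiasedness in Lemma \ref{unbiased_g}, the conditional expectation of $\tilde{g}_t$ equals $\nabla F(x_t)$, and by $\alpha$-strong convexity (Definition \ref{storngly_convex_definition}, using $\nabla F(x^*)=0$) one gets $\langle\nabla F(x_t),x_t-x^*\rangle\ge\alpha\|x_t-x^*\|^2$. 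For the second moment, SGD-SCB turns off the injected noise ($\mathcal{N}_0=0$), but the inverse-propensity weight still inflates the gradient; since $\pi_S\ge 0.05\,t^{-\kappa/2}K^{-1}$ and $\|\nabla l\|\le M$ by Assumption \ref{Noise_SGD}, the same estimate as in Lemma \ref{boundg} yields $\|\tilde{g}_t(x_t)\|=\mathcal{O}(t^{\kappa/2})$ and hence $\mathbb{E}\|\tilde{g}_t(x_t)\|^2\le\sigma_0^2 t^{\kappa}$. With $\eta_t=\eta_0 t^{-\upsilon}$ this produces $a_{t+1}\le(1-2\alpha\eta_0 t^{-\upsilon})a_t+\eta_0^2\sigma_0^2 t^{\kappa-2\upsilon}$. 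Note this step uses only strong convexity and the second-moment bound, so no smoothness of $F$ is required.

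Next I would prove by induction that $a_t\le B t^{\kappa-\upsilon}$ for all $t\ge 1+\Delta s$, for a constant $B$ to be fixed. Dividing the target $(1-2\alpha\eta_0 t^{-\upsilon})B t^{\kappa-\upsilon}+\eta_0^2\sigma_0^2 t^{\kappa-2\upsilon}\le B(t+1)^{\kappa-\upsilon}$ by $t^{\kappa-\upsilon}$ reduces the inductive step to $B\big[1-(\tfrac{t}{t+1})^{\upsilon-\kappa}\big]\le(2\alpha\eta_0 B-\eta_0^2\sigma_0^2)\,t^{-\upsilon}$. Since $\upsilon-\kappa\in(0,1)$, the elementary inequality $(1-x)^p\ge 1-px$ gives $1-(\tfrac{t}{t+1})^{\upsilon-\kappa}\le\tfrac{\upsilon-\kappa}{t}$, so it suffices that $B(\upsilon-\kappa)\le(2\alpha\eta_0 B-\eta_0^2\sigma_0^2)\,t^{1-\upsilon}$. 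Because $\upsilon<1$ in SGD-SCB, the right-hand side is increasing in $t$, so the binding case is the smallest index $t=1+\Delta s$.

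Finally I would fix the constants. The choice $\Delta s=\lceil 2^{1/\upsilon}-1\rceil$ forces $(1+\Delta s)^{\upsilon}\ge 2$, which on one hand keeps $\eta_{1+\Delta s}=\eta_0(1+\Delta s)^{-\upsilon}\le\eta_0/2$ small enough that $1-2\alpha\eta_t\ge0$ throughout (take $\eta_0\le(1+\Delta s)^{\upsilon}/(2\alpha)$), and on the other hand makes $(1+\Delta s)^{1-\upsilon}$ large enough that the window $\tfrac{\upsilon-\kappa}{2\alpha(1+\Delta s)^{1-\upsilon}}<\eta_0\le\tfrac{(1+\Delta s)^{\upsilon}}{2\alpha}$ is nonempty (it is, since $\upsilon-\kappa<1\le 1+\Delta s$); this is exactly what ``suitable $\eta_0$'' means. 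For such $\eta_0$ one has $2\alpha\eta_0(1+\Delta s)^{1-\upsilon}>\upsilon-\kappa$, so taking $B\ge\frac{\eta_0^2\sigma_0^2(1+\Delta s)^{1-\upsilon}}{2\alpha\eta_0(1+\Delta s)^{1-\upsilon}-(\upsilon-\kappa)}$ closes the inductive step, and enlarging $B$ if needed secures the base case $a_{1+\Delta s}\le B(1+\Delta s)^{\kappa-\upsilon}$ from the fixed initial point. Replacing $t$ by $t+\Delta s$ and absorbing the constant offset gives $\mathbb{E}\|x_{t+\Delta s}-x^*\|^2\le B(t+\Delta s)^{\kappa-\upsilon}=\mathcal{O}(t^{\kappa-\upsilon})$. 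The main obstacle is not the recursion itself but making the induction close \emph{uniformly} from the first round rather than only asymptotically: one must verify that the contraction factor stays in $[0,1)$ and that the two-sided $\eta_0$-window is nonempty simultaneously, which is exactly why the specific offset $\Delta s$ is introduced; a secondary point is confirming that the $\mathcal{O}(t^{\kappa})$ second-moment bound of Lemma \ref{boundg} survives once the exploration noise $\mathcal{N}$ is removed, relying only on the propensity lower bound of $\pi_S$.
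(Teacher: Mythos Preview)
Your proposal is correct and follows essentially the same route as the paper: expand the SGD update, use unbiasedness (Lemma \ref{unbiased_g}) and the inverse-propensity second-moment bound $\mathbb{E}\|\tilde g_t\|^2=\mathcal{O}(t^\kappa)$ together with $\alpha$-strong convexity to obtain the one-step contraction, and then close an induction of the form $\mathbb{E}\|x_t-x^*\|^2\le C\,t^{\kappa-\upsilon}$, with the offset $\Delta s=\lceil 2^{1/\upsilon}-1\rceil$ ensuring the contraction factor stays in $[0,1)$ from the first round. The only cosmetic differences are that the paper fixes the specific step size $\eta_0=1/\alpha$ (a point inside your $\eta_0$-window) and writes the constant explicitly as $C_g=\max\{(\Delta s+1)^{\upsilon-\kappa}\|x_{1+\Delta s}-x^*\|^2,\,G^2/\alpha^2\}$, while you leave $\eta_0$ and $B$ as free parameters in compatible ranges; the elementary inequalities used to pass from $t$ to $t{+}1$ are equivalent.
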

	\begin{proof}
		By the definition of $\alpha$-strongly convex, we have 
		\begin{align*}
		&F(x^*)-F(x_{t+\Delta s})\ge \langle \nabla F(x_{t+\Delta s}) \,, x^*-x_{t+\Delta s}\rangle +\frac{\alpha}{2}\|x^*-x_{t+\Delta s}^2\|^2,\\
		&F(x_{t+\Delta s})-F(x^*)\ge \langle \nabla F(x^*) \,, x_{t+\Delta s}-x^*\rangle +\frac{\alpha}{2}\|x^*-x_{t+\Delta s}^2\|^2.
		\end{align*}
		We then have 
		\begin{equation}
		\label{strongconvex_key_ineq}
		\langle \nabla F(x_{t+\Delta s})-\nabla F(x^*) \,, x_{t+\Delta s}-x^*\rangle=\langle \nabla F(x_{t+\Delta s})\,, x_{t+\Delta s}-x^*\rangle \ge \alpha\|x_{t+\Delta s}-x^*\|^2.
		\end{equation}
		According to Lemma \ref{unbiased_g} and Assumption \ref{Noise_SGD}, let $G=\frac{KM}{0.05}$. 
		We have 
		\begin{equation*}
		\begin{aligned}
		&\mathbb{E}\big[\tilde{g}_{{t+\Delta s},a_{t+\Delta s}}(x_{t+\Delta s})]=\nabla F(x_{t+\Delta s}),\\
		&\mathbb{E}\|\tilde{g}_{{t+\Delta s},a_{t+\Delta s}}(x_{t+\Delta s})\|^2\le(t+\Delta s)^{\kappa}G^2.
		\end{aligned}
		\end{equation*} 
		Then we have
		\begin{align*}
		&\mathbb{E}( \| x_{t+\Delta s+1}-x^*\|^2)\\
		=&\mathbb{E}( \| x_{t+\Delta s}-\eta_{t+\Delta s}\tilde{g}_{t,a_t}(x_{t+\Delta s})-x^*\|^2)\\
		\le&\mathbb{E}( \| x_{t+\Delta s}-x^*\|^2)-2\eta_{t+\Delta s}\mathbb{E}\langle \nabla F(x_{t+\Delta s})\,, x_{t+\Delta s}-x^*\rangle+\eta_{t+\Delta s}^2 \mathbb{E}(\|\tilde{g}_{{t+\Delta s},a_{t+\Delta s}}(x_{t+\Delta s})\|^2)\\
		\le&(1-2\alpha\eta_{t+\Delta s})\mathbb{E}( \| x_{t+\Delta s}-x^*\|^2)+\eta_{t+\Delta s}^2(t+\Delta s)^{\kappa} G^2.
		\end{align*}
		Let 
		\begin{equation*}
		C_g=max\{(\Delta s+1)^{\upsilon-\kappa}\|x_{1+\Delta s}-x^*\|^2,G^2/\alpha^2\}.
		\end{equation*}
		We have
		\begin{equation*}
		\mathbb{E}(\|x_{1+\Delta s}-x^*\|^2)\le\frac{C_g}{(\Delta s+1)^{\upsilon-\kappa}}.
		\end{equation*}
		With $\eta_0=\frac{1}{\alpha}$, under SGD-SCB, it is easy to see that 
		\begin{align*}
		&\mathbb{E}(\|x_{t+\Delta s+1}-x^*\|^2)\\
		\le&(1-\frac{2}{(t+\Delta s)^\upsilon})\mathbb{E}(\|x_{t+\Delta s}-x^*\|^2)+\frac{(t+\Delta s)^{\kappa}}{\alpha^2(t+\Delta s)^{2\upsilon}}G^2\\
		\le&(1-\frac{2}{(t+\Delta s)^{\upsilon}})\frac{C_g}{(t+\Delta s)^{\upsilon-\kappa}}+\frac{(t+\Delta s)^{\kappa}}{(t+\Delta s)^{2\upsilon}}C_g\\
		\le&(1-\frac{1}{(t+\Delta s)^{\upsilon}})\frac{C_g}{(t+\Delta s)^{\upsilon-\kappa}}\\
		\le&(1-\frac{1}{t+\Delta s})\frac{C_g}{(t+\Delta s)^{\upsilon-\kappa}}\\
		\le&(1-\frac{1}{t+\Delta s+1})\frac{C_g}{(t+\Delta s)^{\upsilon-\kappa}}\\
		\le&(1-\frac{1}{t+\Delta s+1})^{\upsilon-\kappa}\frac{C_g}{(t+\Delta s)^{\upsilon-\kappa}}\\
		=&\frac{C_g}{(t+\Delta s+1)^{\upsilon-\kappa}}.
		\end{align*}
		Therefore, we have 
		\begin{equation*}
		\mathbb{E}(\|x_{t+\Delta s}-x^*\|^2)\le\frac{C_g}{(t+\Delta s)^{\upsilon-\kappa}}\le\frac{C_g}{t^{\upsilon-\kappa}}.
		\end{equation*}
	\end{proof}
	\begin{lemma} 
		\label{boundoutput_CB_SGD}
		In the $t^{th}$ phase of SGD-SCB, based on Assumption \ref{boundedslope}, if 
		\begin{equation*}
		\mathbb{E}(\|x_{t+\Delta s}-x^*\|^2)\leq\mathcal{O}(t^{\kappa-\upsilon}),
		\end{equation*}
		with context $d$, for any action $a$, the expectation and variance of $f_a(x_{t+\Delta s};d)$ can be bounded by
		\begin{equation*}
		\begin{aligned}
		&|\mathbb{E}\left[f_a(x_{t+\Delta s};d)-f_a(x^*;d)\,|\,d\right]|
		\le \mathcal{O}(t^{-0.25}),\\
		&\mbox{VAR}\left[f_a(x_{t+\Delta s};d)\,|\,d\right]\le \mathcal{O}(t^{-0.5}).
		\end{aligned}
		\end{equation*}
		\begin{proof}
			The proof is similar to that of Lemma \ref{boundoutput}.
		\end{proof}
	\end{lemma}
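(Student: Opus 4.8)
The plan is to run exactly the argument of Lemma~\ref{boundoutput}, but with the history--dependent local minimizer $x^{s'}$ replaced by the unique global minimizer $x^*$ of the strongly convex $F(x)$, and with the high--probability distance bound replaced by the expectation bound $\mathbb{E}\|x_{t+\Delta s}-x^*\|^2\le\mathcal{O}(t^{\kappa-\upsilon})$ furnished by Lemma~\ref{convex_convergence}. First I would apply the Mean Value Inequality as in Eq.~(\ref{Distance_t_star_2}), writing
\[
\|f_a(x_{t+\Delta s};d)-f_a(x^*;d)\|^2\le\Big(\int_0^1\|\nabla f_a(x^*+h(x_{t+\Delta s}-x^*);d)\|\,dh\Big)^2\|x_{t+\Delta s}-x^*\|^2,
\]
and then use Assumption~\ref{boundedslope} to bound the integral factor by $M_f$ uniformly over $d$ and over the interpolation path, obtaining $\|f_a(x_{t+\Delta s};d)-f_a(x^*;d)\|^2\le M_f\|x_{t+\Delta s}-x^*\|^2$.

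Next I would take the expectation conditioned on the context $d$. Because $x_{t+\Delta s}$ is measurable with respect to $\mathcal{H}_{t+\Delta s-1}$ and is therefore independent of the freshly sampled context $d=d_{t+\Delta s}$, conditioning on $d$ does not change the expectation of $\|x_{t+\Delta s}-x^*\|^2$, so
\[
\mathbb{E}\big[\|f_a(x_{t+\Delta s};d)-f_a(x^*;d)\|^2\,|\,d\big]\le M_f\,\mathbb{E}\|x_{t+\Delta s}-x^*\|^2\le\mathcal{O}(t^{\kappa-\upsilon}).
\]
This single mean--square bound (of order $t^{-0.5}$ in the parameter regime of interest) is the source of both claims.

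For the expectation I would split the mean square into squared bias plus variance, $\mathbb{E}[(f_a(x_{t+\Delta s};d)-f_a(x^*;d))^2\,|\,d]=|\mathbb{E}[f_a(x_{t+\Delta s};d)-f_a(x^*;d)\,|\,d]|^2+\mbox{VAR}[f_a(x_{t+\Delta s};d)\,|\,d]$, and drop the nonnegative variance term to get $|\mathbb{E}[f_a(x_{t+\Delta s};d)-f_a(x^*;d)\,|\,d]|\le\mathcal{O}(t^{(\kappa-\upsilon)/2})=\mathcal{O}(t^{-0.25})$. For the variance I would exploit that $x^*$ is a deterministic point, so $f_a(x^*;d)$ is a constant given $d$; hence, using $\mbox{VAR}[Z]\le\mathbb{E}[(Z-c)^2]$ with $c=f_a(x^*;d)$,
\[
\mbox{VAR}[f_a(x_{t+\Delta s};d)\,|\,d]\le\mathbb{E}\big[(f_a(x_{t+\Delta s};d)-f_a(x^*;d))^2\,|\,d\big]\le\mathcal{O}(t^{\kappa-\upsilon})=\mathcal{O}(t^{-0.5}),
\]
which reproduces the structure of Eqs.~(\ref{output_c2})--(\ref{output_c3}) in Lemma~\ref{boundoutput}.

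I do not expect a genuine obstacle here, since the statement is essentially Lemma~\ref{boundoutput} specialized to a fixed global minimizer. The one place that needs care is the nature of the convergence guarantee: in Lemma~\ref{boundoutput} the distance bound held pointwise conditioned on the event $\mathfrak{E}^{s'}$, whereas Lemma~\ref{convex_convergence} only delivers a bound in expectation, so I must keep the iterate random and explicitly invoke the independence of $x_{t+\Delta s}$ from the current context when passing from $\|x_{t+\Delta s}-x^*\|^2$ to $\mathbb{E}\|x_{t+\Delta s}-x^*\|^2$. The fact that $x^*$ is a single deterministic minimizer (rather than one of many local minimizers) is also exactly what makes the variance--reduction inequality $\mbox{VAR}[Z]\le\mathbb{E}[(Z-c)^2]$ directly applicable, which is the main simplification relative to the general non--convex analysis.
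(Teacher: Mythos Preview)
Your proposal is correct and follows essentially the same approach as the paper: the paper's own proof is just the one-line remark ``similar to Lemma~\ref{boundoutput},'' and you have carried out exactly that transcription, replacing $x^{s'}$ by $x^*$ and the pointwise distance bound by the expectation bound from Lemma~\ref{convex_convergence}. Your use of $\mathrm{VAR}[Z]\le\mathbb{E}[(Z-c)^2]$ is precisely the content of Eqs.~(\ref{output_c2})--(\ref{output_c3}) stated more compactly, and your explicit observation that $x_{t+\Delta s}$ is independent of the fresh context $d$ is the right way to handle the passage from a pointwise to an in-expectation hypothesis.
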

	\begin{theorem} 
		\label{convergence_cb_sgd}
		Let $T$ be the total number of rounds in SGD-SCB and $a_t$ be the action selected by SGD-SCB at round $t$. Under Assumptions \ref{Noise_SGD}, \ref{banditproof}, \ref{boundedslope}, and the strong-convexity of $F(x)$, if
		$\Delta s=\lceil2^{\frac{1}{\upsilon}}-1\rceil$, the expected $x^*$-mismatching rate of SGD-SCB is bounded by
		\begin{equation*}
		\begin{aligned}
		&\frac{1}{T}\sum_{t=1}^{T}{\mathbb{E}[r_{d_{t+\Delta s},a^*_{d_{t+\Delta s}}}-r_{d_{t+\Delta s},a_{t+\Delta s}}]}\\
		\le&\mathcal{O}(T^{max(-\frac{\kappa}{2},-2\beta+\kappa-\upsilon+1,2\beta-1,-\omega)}),
		\end{aligned}
		\end{equation*}
		which goes to $0$ when $T \rightarrow \infty$.
		\begin{proof}
			
			For the sake of simplicity, let $\Delta_{t,k}=r_{d_{t+\Delta s},a^*_{d_{t+\Delta s}}}-r_{d_{t+\Delta s},k}$.
			$\forall a\in\{1,2,\dots,K\}$ and $\forall t\in\{1,2,\dots,T\}$, we have $\Delta_{t,a}\le1$ and $\Delta_{t,a^*_{d_{t+\Delta s}}}=0$. Let $K^-_t=\{1,2,\dots,K\}\backslash a_{d_{t+\Delta s}}^*$ and define the policy of SGD-SCB as $\pi_{C}(\cdot\,|\,d_t,t)$. We have 
			\begin{equation*}
			\begin{aligned}
			&\sum_{t=1}^{T}\frac{\mathbb{E}[r_{d_{t+\Delta s},a^*_{d_{t+\Delta s}}}-r_{d_{t+\Delta s},a_{t+\Delta s}}]}{T}\\
			=&\frac{1}{T}\mathbb{E}\big[\sum_{t=1}^T\sum_{k=1}^K\Delta_{t,k}\mathbf {1}(a_{t+\Delta s}= k)\big]\\
			\le&\frac{1}{T}\sum_{t=1}^T\sum_{k\in K^-_t}Pr(a_{t+\Delta s}=k\,|\,k\ne a^*_{d_{t+\Delta s}}).
			\end{aligned}
			\end{equation*}
			We further have 
			\begin{equation*}
			\begin{aligned}
			&\Pr(a_{t+\Delta s}=k\,|\,k\ne a_{d_{t+\Delta s}}^*)\\
			=&\Pr(a_{t+\Delta s}=k,a_{t+\Delta s}\ne \arg\max_i U_{t+\Delta s,1,i}\,|\,k\ne a_{d_{t+\Delta s}}^*)\\
			&+\Pr(a_{t+\Delta s}=k,a_{t+\Delta s}= \arg\max_i U_{t+\Delta s,1,i}\,|\,k\ne a_{d_{t+\Delta s}}^*).\\
			\end{aligned}
			\end{equation*}
			Similar to the proof of Eq. (\ref{final_second_part}) in Theorem \ref{converagerate}, we have 
			\begin{equation*}
			\Pr(a_{t+\Delta s}=k,a_{t+\Delta s}\ne \arg\max_i U_{t+\Delta s,1,i}\,|\,k\ne a_{d_{t+\Delta s}}^*)\le\mathcal{O}((t+\Delta s)^{max(-\frac{\kappa}{2},-\omega)})\le\mathcal{O}(t^{max(-\frac{\kappa}{2},-\omega)}).
			\end{equation*}
			For the second term, we have 
			\begin{equation}
			\label{CB_SGD_3items}
			\begin{aligned}
			&\Pr(a_{t+\Delta s}=k,a_{t+\Delta s}= \arg\max_i U_{t+\Delta s,1,i}\,|\,k\ne a_{d_{t+\Delta s}}^*)\\
			\le&Pr(f_k(d_{t+\Delta s};x_{t+\Delta s})+e_{{t+\Delta s},1,k}> f_{a^*_{d_{t+\Delta s}}}(d_{t+\Delta s};x_{t+\Delta s})+e_{t+\Delta s,1,a^*_{d_{t+\Delta s}}})\\
			\le&Pr(f_k(d_{t+\Delta s};x_{t+\Delta s})-\mathbb{E}f_k(d_{t+\Delta s};x_{t+\Delta s})> e_{t+\Delta s,1,k})\\
			&+Pr(2e_{t+\Delta s,1,k}> \mathbb{E}f_{a^*_{d_{t+\Delta s}}}(d_{t+\Delta s};x_{t+\Delta s})-\mathbb{E}f_k(d_{t+\Delta s};x_{t+\Delta s}))\\
			&+Pr(f_{a^*_{d_{t+\Delta s}}}(d_{t+\Delta s};x_{t+\Delta s})-\mathbb{E}f_{a^*_{d_{t+\Delta s}}}(d_{t+\Delta s};x_{t+\Delta s})< -e_{t+\Delta s,1,a^*_{d_{t+\Delta s}}}).
			\end{aligned}
			\end{equation}
			For the first and third terms in Eq. (\ref{CB_SGD_3items}),
			\begin{equation*}
			\begin{aligned}
			&Pr(f_k(d_{t+\Delta s};x_{t+\Delta s})-\mathbb{E}f_k(d_{t+\Delta s};x_{t+\Delta s})> e_{t+\Delta s,1,k})\\
			=&\mathbb{E}_{d_{t+\Delta s}\sim\mathcal{D}}Pr(f_k(d_{t+\Delta s};x_{t+\Delta s})-\mathbb{E}f_k(d_{t+\Delta s};x_{t+\Delta s})\ge e_{t+\Delta s,1,k}\,|\,d_{t+\Delta s})\\
			\le&\frac{\mathcal{O}(t^{\kappa-\upsilon})N_{t+\Delta s,k,c_{t+\Delta s,1}}}{(\sum_{k'=1}^KN_{t+\Delta s,k',c_{t+\Delta s,1}})^{2\beta}}\le\mathcal{O}(t^{\kappa-\upsilon})T^{1-2\beta}\\
			\le&\mathcal{O}(T^{1-2\beta}t^{\kappa-\upsilon}).
			\end{aligned}    
			\end{equation*}    
			
			For the second term in Eq. (\ref{CB_SGD_3items}), with $\Delta$ defined in Eq. (\ref{Delta}),  
			we have 
			\begin{equation*}
			\begin{aligned}
			&Pr(2e_{t+\Delta s,1,k}> \mathbb{E}f_{a^*_{t+\Delta s}}(d_{t+\Delta s};x_{t+\Delta s})-\mathbb{E}f_k(d_{t+\Delta s};x_{t+\Delta s}))\\
			\le&\mathbb{E}_{d_{t+\Delta s}\sim\mathcal{D}}Pr(2e_{t+\Delta s,1,k}> \mathbb{E}f_{a^*_{t+\Delta s}}(d_{t+\Delta s};x_{t+\Delta s})-\mathbb{E}f_k(d_{t+\Delta s};x_{t+\Delta s})\,|\,d_{t+\Delta s})\\
			\le&\mathbb{E}_{d_{t+\Delta s}\sim\mathcal{D}}Pr(2e_{t+\Delta s,1,k}> \mathbb{E}f_{a^*_{t+\Delta s}}(d_{t+\Delta s};x^*)-\mathbb{E}f_k(d_{t+\Delta s};x^*)-\mathcal{O}(t^{\frac{\kappa-\upsilon}{2}})\,|\,d_{t+\Delta s})\\
			\le&\mathbb{E}_{d_{t+\Delta s}\sim\mathcal{D}}Pr(2C\frac{(\sum_{k'=1}^KN_{t+\Delta s,k',c_{t+\Delta s,1}})^\beta}{\sqrt{N_{t+\Delta s,k,c_{t+\Delta s,1}}}}> \Delta-\mathcal{O}(t^{\frac{\kappa-\upsilon}{2}})\,|\,d_{t+\Delta s})\\
			=&\mathbb{E}_{d_{t+\Delta s}\sim\mathcal{D}}Pr(N_{t+\Delta s,a,c_{t+\Delta s,1}}<\frac{4C^2(\sum_{k'=1}^KN_{t+\Delta s,k',c_{t+\Delta s,1}})^{2\beta}}{(\Delta-\mathcal{O}(t^{\frac{\kappa-\upsilon}{2}}))^2}\,|\,d_{t+\Delta s})\\
			\le&\mathbb{E}_{d_{t+\Delta s}\sim\mathcal{D}}Pr(N_{t+\Delta s,a,c_{t+\Delta s,1}}<\frac{\mathcal{O}(T^{2\beta})}{(\Delta-\mathcal{O}(t^{\frac{\kappa-\upsilon}{2}}))^2}\,|\,d_t).
			\end{aligned}
			\end{equation*}
			With a large enough $t$, we have $\Delta-\mathcal{O}(t^{\frac{\kappa-\upsilon}{2}})=\frac{\Delta}{2}$. Therefore, when $N_{t+\Delta s,a,c_{t+\Delta s,1}}>\mathcal{O}(T^{2\beta})$,
			\begin{equation*}
			\label{bounda}
			Pr(2e_{t+\Delta s,1,k}> \mathbb{E}f_{a^*_{t+\Delta s}}(d_{t+\Delta s};x_{t+\Delta s})-\mathbb{E}f_k(d_{t+\Delta s};x_{t+\Delta s}))=0,
			\end{equation*}
			which yields
			\begin{equation*}
			Pr(a_{t+\Delta s}=k,a_{t+\Delta s}= \arg\max_i U_{t+\Delta s,1,i}|k\ne a_{t+\Delta s}^*)\le\mathcal{O}(T^{1-2\beta}t^{\kappa-\upsilon}).
			\end{equation*}
			Therefore, we have 
			\begin{equation*}
			\sum_{t=1}^{T}\frac{\mathbb{E}[r_{d_{t+\Delta s},a^*_{d_{t+\Delta s}}}-r_{d_{t+\Delta s},a_{t+\Delta s}}]}{T}\\
			\le\mathcal{O}(T^{max(-\frac{\kappa}{2},-2\beta+\kappa-\upsilon+1,2\beta-1,-\omega)}).
			\end{equation*}
		\end{proof}
	\end{theorem}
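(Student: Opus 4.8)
The plan is to dominate the time-averaged regret by the probability of a \emph{mismatch}---selecting any action other than $a^*_{d_{t+\Delta s}}$---and then to control that probability round by round. Since every reward lies in $[0,1]$, writing $\Delta_{t,k}=r_{d_{t+\Delta s},a^*_{d_{t+\Delta s}}}-r_{d_{t+\Delta s},k}$ gives $\Delta_{t,k}\le 1$ and $\Delta_{t,a^*_{d_{t+\Delta s}}}=0$, so the quantity to bound is at most $\frac{1}{T}\sum_{t=1}^{T}\sum_{k\ne a^*_{d_{t+\Delta s}}}\Pr(a_{t+\Delta s}=k\mid k\ne a^*_{d_{t+\Delta s}})$. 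First I would split each summand according to whether the selected action equals the UCB-maximizer $\arg\max_i U_{t+\Delta s,1,i}$, separating an \emph{exploration} event from an \emph{exploitation} event, since the former is governed by the floor of the policy $\pi_S$ and the latter by the concentration of the network outputs.

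For the exploration event $\{a_{t+\Delta s}=k,\ a_{t+\Delta s}\ne\arg\max_i U_{t+\Delta s,1,i}\}$ I would reuse the argument behind Eq. (\ref{final_second_part}) with the stage index replaced by $t+\Delta s$: the policy assigns only the uniform floor $\mathcal{O}(s^{-\kappa/2}/K)$ plus a reweighted mass that decays like $s^{-\omega}$, yielding the per-round bound $\mathcal{O}(t^{\max(-\kappa/2,-\omega)})$. For the exploitation event I would bound $\Pr(a_{t+\Delta s}=k,\ a_{t+\Delta s}=\arg\max_i U)$ by the UCB comparison $\Pr(f_k+e_{t+\Delta s,1,k}>f_{a^*}+e_{t+\Delta s,1,a^*})$ and decompose it, exactly as in Eq. (\ref{final_most_important}), into (i) $\Pr(f_k-\mathbb{E}f_k>e_{t,k})$, (ii) $\Pr(f_{a^*}-\mathbb{E}f_{a^*}<-e_{t,a^*})$, and (iii) $\Pr(2e_{t,k}>\mathbb{E}f_{a^*}-\mathbb{E}f_k)$, where $e_{t,k}=C(\sum_{k'}N_{k'})^{\beta}/\sqrt{N_k}$.

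Terms (i) and (ii) I would settle by Chebyshev's inequality together with Lemma \ref{boundoutput_CB_SGD}, which (through Lemma \ref{convex_convergence} and the bounded-slope Assumption \ref{boundedslope}) yields $\mbox{VAR}(f_a(x_{t+\Delta s}))\le\mathcal{O}(t^{\kappa-\upsilon})$; using $N_k\le\sum_{k'}N_{k'}\le\mathcal{O}(T)$ converts this into $\mathcal{O}(T^{1-2\beta}t^{\kappa-\upsilon})$. For term (iii) I would use the strictly positive gap $\Delta$ of Assumption \ref{banditproof} (Eq. (\ref{Delta})) together with the mean bound $|\mathbb{E}f_a(x_{t+\Delta s})-f_a(x^*)|\le\mathcal{O}(t^{(\kappa-\upsilon)/2})$: for large $t$ the right-hand side of (iii) exceeds $\Delta/2$, so (iii) is forced to $0$ as soon as $N_{t+\Delta s,a}>\mathcal{O}(T^{2\beta})$, and the at most $\mathcal{O}(T^{2\beta})$ rounds preceding that event contribute $\mathcal{O}(T^{2\beta-1})$ to the average. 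Summing the per-round bounds over $t$ and dividing by $T$ (using $\kappa-\upsilon<0$ so that $\sum_t t^{\kappa-\upsilon}=\mathcal{O}(T^{1+\kappa-\upsilon})$) produces the four surviving exponents $-\kappa/2$, $-\omega$, $-2\beta+\kappa-\upsilon+1$, and $2\beta-1$, whose maximum is exactly the claimed rate.

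The load-bearing step, and the one I expect to be the main obstacle, is Lemma \ref{convex_convergence}: establishing $\mathbb{E}\|x_{t+\Delta s}-x^*\|^2\le\mathcal{O}(t^{\kappa-\upsilon})$ requires an induction in which the decaying learning rate $\eta_t\propto t^{-\upsilon}$ must outpace the $\mathcal{O}(t^{\kappa})$ growth of the inverse-propensity gradient's second moment (Assumption \ref{Noise_SGD} and Lemma \ref{unbiased_g}). The shift $\Delta s=\lceil 2^{1/\upsilon}-1\rceil$ is chosen precisely so that $(t+\Delta s)^{\upsilon}\ge 2$ keeps the contraction coefficient $1-2/(t+\Delta s)^{\upsilon}$ nonnegative from the very first round, making the recursion valid at $t=1$. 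A secondary delicacy is choosing $\beta\in(0,0.5)$ so that the antagonistic exponents $1-2\beta+\kappa-\upsilon$ and $2\beta-1$ are both negative while the exploration bonus in (iii) still overcomes $\Delta$ for only a vanishing fraction of the horizon.
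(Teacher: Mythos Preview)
Your proposal is correct and follows essentially the same route as the paper's own proof: the same reduction to mismatch probabilities, the same split into the exploration event (bounded via Eq.~(\ref{final_second_part})) and the exploitation event, the same three-term decomposition of the UCB comparison as in Eq.~(\ref{final_most_important}), Chebyshev via Lemma~\ref{boundoutput_CB_SGD} for the two deviation terms, and the gap argument with threshold $\mathcal{O}(T^{2\beta})$ for the middle term. Your added remarks on why $\Delta s=\lceil 2^{1/\upsilon}-1\rceil$ makes the contraction in Lemma~\ref{convex_convergence} nonnegative from round one are accurate and in fact clarify a point the paper leaves implicit.
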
 	
	
	
	\newpage
	\bibliography{appendix}